\documentclass{article}

\usepackage{microtype}
\usepackage{graphicx}
\usepackage{subcaption}
\usepackage{booktabs}
\usepackage{array}
\usepackage{makecell}
\usepackage{multirow}
\usepackage{colortbl}
\usepackage{wrapfig}
\usepackage{enumitem}
\usepackage{tcolorbox}

\usepackage{hyperref}
\usepackage{url}

\usepackage{etoc}

\PassOptionsToPackage{numbers}{natbib}
\usepackage[preprint]{preprint}
\makeatletter
\renewcommand{\@notice}{\enlargethispage{2\baselineskip}}
\makeatother

\usepackage[utf8]{inputenc}
\usepackage[T1]{fontenc}
\usepackage{amsmath}
\usepackage{amssymb}
\usepackage{amsfonts}
\usepackage{mathtools}
\usepackage{amsthm}
\usepackage{nicefrac}

\usepackage[capitalize,noabbrev]{cleveref}

\theoremstyle{plain}
\newtheorem{theorem}{Theorem}[section]
\newtheorem{proposition}[theorem]{Proposition}

\theoremstyle{definition}
\newtheorem{definition}{Definition}

\theoremstyle{remark}
\newtheorem{remark}[theorem]{Remark}

\usepackage{float}
\usepackage{placeins}

\usepackage{xcolor}

\definecolor{okabeOrange}{HTML}{D55E00}
\definecolor{okabeBlue}{HTML}{0072B2}
\newcommand{\ciSE}[2]{\makecell{$#1$\\{\tiny$\pm#2$}}}
\newcommand{\ciSEB}[2]{\makecell{$#1$\\{\tiny$\pm#2$}}}
\newcommand{\ciAdvInf}[2]{\makecell{$#1$\\{\tiny$\pm#2$}}}
\newcommand{\ciAdvNS}[2]{\makecell{$#1$\\{\tiny$\pm#2$}}}
\newcommand{\ciAdvInfB}[2]{\makecell{$#1$\\{\tiny$\pm#2$}}}
\newcommand{\ciAdvNSB}[2]{\makecell{$#1$\\{\tiny$\pm#2$}}}

\newtcolorbox{runex}{
  colback=gray!5,
  colframe=gray!40,
  arc=2pt, boxrule=0.4pt,
  left=6pt, right=6pt,
  top=4pt, bottom=4pt,
  title=Running Example,
  fonttitle=\bfseries\small,
  coltitle=black,
  colbacktitle=gray!15,
}

\usepackage[disable,textsize=tiny]{todonotes}

\title{Formalizing Latent Thoughts: Four Axioms of Thought Representation in LLMs}

\author{%
  Fahd Seddik \\
  Department of Computer Science\\
  University of British Columbia\\
  Kelowna, BC, Canada \\
  \texttt{fahd.seddik@ubc.ca} \\
  \And
  Fatemeh Fard \\
  Department of Computer Science\\
  University of British Columbia\\
  Kelowna, BC, Canada \\
  \texttt{fatemeh.fard@ubc.ca} \\
}

\begin{document}
\etocdepthtag.toc{main}

\maketitle

\begin{abstract}
We introduce an axiomatic evaluation framework for latent thought representations in LLMs, comprising metrics that are independent of downstream benchmark scores and reveal representational failures that benchmark accuracy masks. Existing evaluations conflate representation quality with model capacity. Therefore, failures cannot be attributed to the representation rather than to the model that processes it. We formalize four functional axioms (Causality, Minimality, Separability, and Stability) and define a quantitative measure for each, computed directly on the representation independently of downstream accuracy. We audit open-weight LLMs across 23 reasoning tasks (e.g., Spatial Reasoning, Factual QA). We find that no candidate satisfies all four axioms simultaneously, that the representations distinguish task type reliably but cannot distinguish between two questions within the same task, and that the representations encode little information beyond what is already present in the input embedding. The failure is consistent across dense, reasoning-distilled, and RL-trained model families, indicating that the gap is structural rather than a property of model size or training procedure. Code: \url{https://fard-lab.github.io/formalize-thoughts}.
\end{abstract}

\section{Introduction}
\label{introduction}

Reasoning in Large Language Models (LLMs) has increasingly moved from discrete Chain-of-Thought (CoT) tokens toward continuous latent representations, with a growing body of efficiency-motivated work compressing or replacing explicit CoT steps with continuous vectors~\citep{sui2025stopoverthinkingsurveyefficient, feng2025efficientreasoningmodelssurvey}. Recent work highlights the limitations of discrete tokenization and decoding for reasoning~\citep{hao2024training} and replaces discrete steps with continuous representations~\citep{butt2026soft, zhang2025softthinkingunlockingreasoning}, reporting accuracy gains on complex reasoning benchmarks~\citep{wu2026llms}. However, the field evaluates these ``continuous thought representations'' almost exclusively through downstream task accuracy~\citep{mondorf2024accuracyevaluatingreasoningbehavior}. Probes of continuous reasoning tokens find that distinct reasoning paths collapse to a single interpretation in early layers while downstream accuracy remains unchanged~\citep{rizvi-martel2026the}. Two prior questions therefore remain open. What functional properties constitute a valid thought representation, and how can we measure those properties independently of any downstream task? Even when models maintain accurate internal representations, they may fail at downstream tasks~\citep{ye2026mechanistic}.

\begin{figure}[t]
    \centering
    \includegraphics[width=\linewidth]{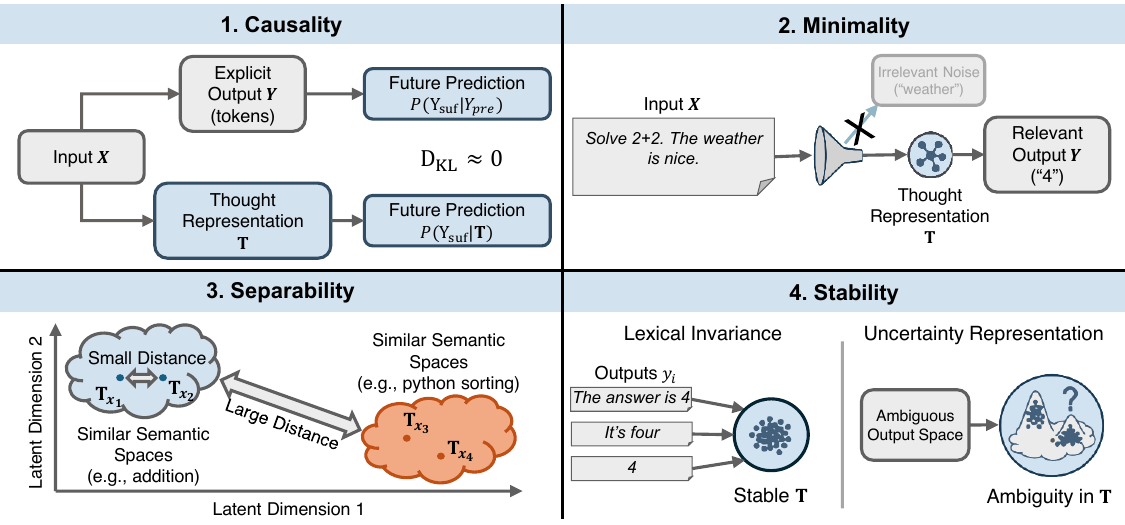}
    \vspace{-10pt}
    \caption{Visualizing the axiomatic properties of a Functional Thought Representation $\mathbf{T}$}
    \label{fig:characteristics}
    \vspace{-10pt}
\end{figure}

We identify three gaps that impede principled progress on continuous reasoning. First, there is no \textbf{principled definition} of what a thought representation must do, because existing methods optimize heuristic proxies (step counts, token budgets, imitation of explicit CoT) without a formal statement of the functional requirements. Second, there is no \textbf{intrinsic evaluation} that measures representation quality independently of downstream accuracy~\citep{sahoo2026when}, and therefore failures cannot be attributed to the representation rather than to the decoder or prompt. Third, and consequently, the \textbf{status of current methods} is unknown. Recent audits find that latent reasoning tokens are often unnecessary for the predictions they are meant to drive~\citep{dilgren2026are}, and that latent reasoning does not faithfully implement the structured latent-space search that motivates it~\citep{cui2026how}.

\textbf{Our approach.} We propose an axiomatic characterization of thought representations in terms of four functional properties (\emph{Causality}, \emph{Minimality}, \emph{Separability}, and \emph{Stability}), illustrated in Figure~\ref{fig:characteristics}. The framework describes the function of a thought representation rather than its form. It applies equally to vectors, tensors, or sets of vectors. For each axiom, we define a quantitative measure evaluated directly on the source LLM without retraining. We use this suite to audit candidate thought representations produced by Soft Thinking with and without Gumbel noise~\citep{zhang2025softthinkingunlockingreasoning, wu2026llms} and Latent Thinking~\citep{hao2024training, zou2025latentcollaborationmultiagentsystems} across a range of thinking budgets, alongside candidates extracted from last-input-token hidden states. We study five open-weight LLMs spanning dense and sparse mixture-of-experts architectures on the 23 tasks of Big Bench Extra Hard (BBEH)~\citep{kazemi2025bigbenchextrahard}.

Across the evaluated source LLMs, the intrinsic protocol exposes failure modes that downstream task accuracy does not. The candidates retain coarse task identity but lose the per-question identity that distinguishes one instance of a task from another, and the input-prompt embedding is itself competitive with the candidates on every axis the framework measures.

\textbf{Contributions.} Each of the three gaps above is addressed by a corresponding contribution.
\begin{itemize}
    \item \textbf{Axiomatic formalization of thought representations.} We provide a definition of thought representations stated in terms of four functional axioms, and prove their logical consistency, independence, and completeness (Appendix~\ref{appendix:theoretical_properties}).
    \item \textbf{Intrinsic evaluation protocol.} We introduce a measure for each axiom. KL substitution error measures Causality, the Minimality Gap measures Minimality, same- and cross-task discriminator accuracy measures Separability, and the Distributional Consistency Score (DCS) measures Stability. Each measure is computable on the source LLM without retraining and is decoupled from downstream task accuracy.
    \item \textbf{Empirical audit of candidate thought representations.} Applying the protocol on BBEH exposes a fine-grained \emph{representational collapse} on per-question identity. No evaluated candidate satisfies all four axioms simultaneously, consistent with concurrent reports of silent representational failures in latent reasoning despite unchanged downstream accuracy~\citep{sahoo2026when}.
\end{itemize}

The framework provides a principled means to develop and audit candidate thought representations as objects of study, evaluated for what an LLM encodes about a problem rather than for the downstream reasoning quality of any chain that follows. Beyond evaluation, the four measures provide researchers explicit, quantifiable optimization targets and the diagnostic resolution to attribute downstream accuracy changes to specific representational properties rather than to an aggregate benchmark score. The four measures are one realization of the axioms, which admit alternative quantifications under the same principle, allowing new candidates and measurement designs in future work.

\section{Related work}
\label{related work}

\textbf{Latent representations.} Probing studies demonstrate that LLMs build rich internal representations before outputting tokens, with internal states predicting CoT success prior to completion~\citep{afzal2025knowingsayingllmrepresentations, lugoloobi2026llms}, encoding sparse logical features~\citep{helff2026activationreasoning}, and correlating with task difficulty~\citep{herrmann2025measuring}. These internal signals motivate representations capturing model intent without reliance on explicit outputs. COCONUT~\citep{hao2024training} and CODI~\citep{shen2025codicompressingchainofthoughtcontinuous} utilize hidden states to represent or compress multiple reasoning paths into continuous vectors. Architectural variants including Tiny Recursive Models (TRMs)~\citep{jolicoeurmartineau2025morerecursivereasoningtiny, wang2026tiny} and Encode-Think-Decode (ETD)~\citep{koishekenov2025encodethinkdecodescaling} update latent representations via recursive networks or iteration over reasoning-relevant layers, and related models propagate persistent latent state~\citep{li2026learning} or inject recurrence through a dedicated middle-layer pathway~\citep{cai2026tmlr}. Similarly, PonderLM-2~\citep{zeng2025ponderlm2pretrainingllmlatent} and pause-token approaches~\citep{goyal2024thinkspeaktraininglanguage} train models to think via latent pauses or hidden-state interfaces before generation.

\textbf{Continuous reasoning methods.} Soft-token research explores continuous concept-spaces~\citep{zhang2025softthinkingunlockingreasoning, sheshanarayana2026thinking}, but these methods must be carefully constrained to avoid degenerate greedy behavior~\citep{wu2026llms, butt2026soft} and remain competitive with textual CoT only on a narrow subset of tasks~\citep{ye2026latentchem}. Layer-wise analyses further suggest that last-layer activations are used directly for next-token prediction, while mid-depth representations better balance signal preservation and noise compression~\citep{skean2025layerlayeruncoveringhidden, shani2025tokensthoughtsllmshumans}, and key semantic computations often migrate to these layers~\citep{wendler-etal-2024-llamas}. Proposed extraction strategies range from vocab-space constrained optimizers~\citep{deng2025latentreasoningllmsvocabularyspace} to multi-layer readout modules~\citep{chételat2025innerthoughtsdisentanglingrepresentationspredictions}. These works typically conflate thought representation with reasoning performance. In contrast, we isolate representation quality from reasoning steps, avoiding the discretization errors~\citep{hao2024training} and heuristic approximations~\citep{holtzman2020curiouscaseneuraltext} inherent in standard decoding algorithms. Additional related work is in \cref{appendix:extended_related_work}.
\section{Formalizing thought}
\label{methodology}

\subsection{Semantic equivalence and geometric space}

\textbf{Notation.} Calligraphic letters ($\mathcal{X}$, $\mathcal{Y}$, $\mathcal{S}$, $\mathcal{T}$) denote spaces, capital letters ($X$, $Y$, $Z$) denote random variables, lowercase letters ($x$, $y$, $z$) denote specific instances, and bold $\mathbf{T}$ denotes a candidate Thought Representation (TR) throughout the paper.

To rigorously characterize the properties of a thought representation, we first establish a tractable criterion for \emph{semantic equivalence} between output sequences. Let $\mathcal{Y}$ be the space of all possible generated sequences. We posit the existence of a semantic mapping function $\Phi: \mathcal{Y} \to \mathcal{S}$, where $\mathcal{S}$ is a semantic manifold. Two sequences $y, y' \in \mathcal{Y}$ are defined as semantically equivalent, denoted $y \sim_{sem} y'$, if and only if $\Phi(y) = \Phi(y')$.

We further impose a geometric structure on $\mathcal{S}$ equipped with a metric $d_\mathcal{S}(\cdot, \cdot)$. Equivalence implies locality. Non-equivalent sequences nevertheless exhibit varying degrees of proximity. For sequences $y_1, y_2, y_3$, if $y_1$ and $y_2$ share partial semantic overlap (e.g., distinct numerical answers to the same query) whereas $y_3$ is conceptually disjoint, we require $d_\mathcal{S}(\Phi(y_1), \Phi(y_2)) < d_\mathcal{S}(\Phi(y_1), \Phi(y_3))$.

Computationally, we approximate this metric space using high-dimensional embeddings. We utilize the cosine similarity between embedding representations $\mathbf{e}_y, \mathbf{e}_{y'} \in \mathbb{R}^d$ as inversely correlated with the semantic metric $d_\mathcal{S}$, consistent with evidence that modern learned text embeddings approximate human semantic judgments across a wide range of similarity tasks~\citep{assadi2026hume}.

\subsection{Thought as a latent functional state}

\begin{quote}
\itshape What functional properties must a representation $\mathbf{T}$ satisfy to qualify as a thought, and how can each be measured directly on LLMs?
\end{quote}

We define a thought representation not as a communicable linguistic artifact (e.g., Chain-of-Thought) but as a \emph{Functional Thought} $\mathbf{T}$, a latent state that mediates the transformation from an Input space $\mathcal{X}$ to the semantic Output space $\mathcal{S}$. 

Formally, given an input $x \in \mathcal{X}$, the model induces a probability distribution over the output $P(Y|x)$. The functional thought $\mathbf{T}$ is a representation intended to capture the sufficient statistics of this distribution. We explicitly exclude interpretability from this definition. Interpretability is an observer-dependent property. In contrast, a functional thought $\mathbf{T}$ is constructed to be mathematically optimal in mediating $X \to Y$, which may render it opaque to human inspection. 

\begin{definition}[Idealized Thought Representation Mapping]\label{def:thought-mapping}
Let $\mathcal{X}$ denote the input space and $\mathcal{S}$ denote the semantic output space of a model $\mathcal{M}: \mathcal{X} \to \mathcal{S}$. We define a thought representation generator as a function $g: \mathcal{X} \to \mathcal{T}$, where $\mathcal{T}$ represents the thought space.

The function $g$ is constructed such that it induces an equivalence relation on $\mathcal{X}$ based on the semantic outputs in $\mathcal{S}$. Specifically, for any pair of inputs $x_i, x_j \in \mathcal{X}$, the mapping satisfies:
\begin{equation}
    g(x_i) = g(x_j) \iff \mathcal{M}(x_i) = \mathcal{M}(x_j)
\end{equation}
This implies that $g$ is a many-to-one mapping effectively compressing $\mathcal{X}$ into $\mathcal{T}$ by preserving distinctness only if the inputs result in semantically distinct outputs in $\mathcal{S}$.
\end{definition}

\noindent\textit{Remark.} Stochastic extraction methods (e.g., Gumbel noise) fix a global random seed, so $g$ remains deterministic in practice. The formal construction is in \cref{appendix:theoretical_properties}.

\subsection{Quantifying the axioms of Functional Thought}

We propose that a robust thought representation $\mathbf{T}$ must satisfy four axiomatic properties, formalized below via information theory and probability and shown in Figure \ref{fig:characteristics}. \Cref{tab:axiom-summary} maps each axiom to its formal requirement and to the metric used to quantify it. Through Appendix~\ref{appendix:theoretical_properties}, we prove consistency that follows from an idealized one-hot semantic bottleneck construction. Independence is established by four counter-models, each preserving three axioms while violating the fourth. Completeness follows from a bijection between $\mathcal{T}$ and the reachable semantic manifold $\mathcal{S}_\mathcal{M}$.

\begin{runex}
Take $x =$ ``Is $13$ prime, and why?'' with output
$y =$ ``Yes. $13$ is prime because no integer from $2$ to $12$ divides it.''
produced by $\mathcal{M}_\theta$. Let $\mathbf{T}$ denote a candidate thought
representation extracted from $\mathcal{M}_\theta$ for $x$. The examples below
are simplified for clarity.
\end{runex}

\paragraph{1. Causality.}
Each output $y$ is partitioned into a reasoning prefix $y_{\mathrm{pre}}$ and an answer suffix $y_{\mathrm{suf}}$. If $\mathbf{T}$ is a valid thought representation derived from $y_{\mathrm{pre}}$, it must functionally substitute $y_{\mathrm{pre}}$ within the computational graph of $\mathcal{M}_\theta$, so that conditioning on $\mathbf{T}$ yields a predictive distribution over $y_{\mathrm{suf}}$ indistinguishable from conditioning on the explicit tokens $y_{\mathrm{pre}}$.

\textbf{Example.} Causality requires that replacing the token embeddings of
the reasoning prefix $y_{\mathrm{pre}} =$ ``Yes. $13$ is prime because no integer from $2$ to $12$ divides it.''
inside $\mathcal{M}_\theta$ with the projected $\mathbf{T}$ leaves the
distribution over the answer suffix $y_{\mathrm{suf}}$ essentially unchanged.
Here $y_{\mathrm{suf}}$ could be the concluding portion of the same output, such as ``Therefore, $13$ is prime.''

\textbf{Quantification.} We replace the token embeddings of $y_{\mathrm{pre}}$ in the model with the projected $\mathbf{T}$ and measure the resulting divergence on $y_{\mathrm{suf}}$:
\begin{equation}\label{eq:causality-error}
    \text{Causality Error} = D_{\mathrm{KL}}\Big( P(y_{\mathrm{suf}} \mid y_{\mathrm{pre}}) \parallel P(y_{\mathrm{suf}} \mid \mathbf{T}) \Big)
\end{equation}
A lower value indicates that $\mathbf{T}$ encapsulates the effect of $y_{\mathrm{pre}}$ on subsequent generation, consistent with empirical analyses showing causal structure between intermediate latent representations and downstream generation in continuous reasoning~\citep{li2026dynamics}. Sensitivity to the answer-window length and number of substituted positions is documented in \cref{appendix:causality_length}.

\paragraph{2. Minimality.}
A thought representation satisfies optimality iff it compresses the input and retains maximum relevance to the output distribution. This aligns with the Information Bottleneck principle, a framing that has recently been applied both to characterize how LLM pre-training approaches minimal sufficient compression~\citep{conklin2026learning} and to cast chain-of-thought reasoning itself as a bottleneck variable between prompt and answer~\citep{massoli2026reasoning}. Let $X$ and $Y$ denote the random variables over inputs and generated outputs respectively, and let $I(\cdot;\cdot)$ denote mutual information. An optimal $\mathbf{T}$ minimizes $I(X; \mathbf{T})$ subject to a constraint on $I(\mathbf{T}; Y)$:
\begin{equation}\label{eq:minimality-axiom}
    \min_{\mathbf{T}} I(X; \mathbf{T}) - \beta I(\mathbf{T}; Y)
\end{equation}
This characteristic ensures that $\mathbf{T}$ filters out nuisance variables in $X$ (e.g., irrelevant context or noise) that do not contribute to the generation of the high probability semantic output space.

\textbf{Example.} Suppose the input contains two unrelated topics, e.g.,
``Hamlet was written by Shakespeare around $1600$. Is $13$ prime, and why?''
If the model's output addresses only the primality question, the literary
detail did not contribute to that output and $\mathbf{T}$ should not encode it.
If the output addressed both topics, $\mathbf{T}$ should retain both. Minimality penalises encoding content that did not contribute to $y$.

\textbf{Quantification.} The IB Lagrangian is intractable because $I(X; \mathbf{T})$ and $I(\mathbf{T}; Y)$ depend on unknown distributions. We construct a cross-entropy surrogate that preserves the Lagrangian's ranking at $\beta = 2$. The surrogate combines three cross-entropies, $\text{CE}(Y \mid \mathbf{T})$, $\text{CE}(X \mid Y)$, and $\text{CE}(X \mid Y, \mathbf{T})$, and reduces after dropping TR-independent constants to:
\begin{equation}\label{eq:delta-ib}
    \Delta_{\text{IB}} = \text{CE}(X \mid Y, \mathbf{T}) - \text{CE}(Y \mid \mathbf{T})
\end{equation}
A larger $\Delta_{\text{IB}}$ indicates a representation that is simultaneously relevant ($\mathbf{T}$ predicts $Y$ with low residual entropy) and minimal ($\mathbf{T}$ contributes negligible additional information about $X$ beyond what $Y$ provides). The reduction assumes $I(Y; \mathbf{T} \mid X) = 0$, which holds when $\mathbf{T}$ is a function of $X$ alone\footnote{The output embeddings used in experiments violate this assumption by construction and are reported as anchor references rather than IB-Lagrangian estimates; see \cref{appendix:sub:minimality_ib_derivation} for the full derivation.}.

\paragraph{3. Separability.}
Separability defines the functional injectivity of the mapping from semantic content to the latent space. Because logical distinctions in high-dimensional representations are encoded in the curvature and flow structure of the underlying manifold rather than in raw point distances~\citep{zhou2026geometryreasoning}, we do not adopt fixed geometric distances (e.g., Euclidean margins) and instead rely on functional discriminability. The representation must contain sufficient topological structure to distinguish between semantically nonequivalent output distributions using a bounded capacity projection. Given two inputs $x_1, x_2 \in \mathcal{X}$ that induce disjoint high probability semantic spaces $\mathcal{S}_1 \cap \mathcal{S}_2 = \emptyset$, their corresponding thought representations $\mathbf{T}_{x_1}$ and $\mathbf{T}_{x_2}$ must be resolvable by an optimal semantic projection $\phi: \mathcal{T} \to \mathcal{S}$ drawn from a bounded hypothesis class $\mathcal{H}$ (a linear projection followed by a linear classification head, consistent with the linear representation hypothesis~\citep{park2024linear}). Using the semantic metric $d_\mathcal{S}$ defined over the semantic manifold, we require:
\begin{equation}\label{eq:separability-axiom}
    d_\mathcal{S}\big(\phi(\mathbf{T}_{x_1}), \phi(\mathbf{T}_{x_2})\big) > \delta \quad \text{for some } \phi \in \mathcal{H}
\end{equation}
Conversely, if distinct inputs lead to semantically convergent outputs, their representations should reside on the same functional manifold, rendering them indistinguishable under $\phi$. Separability thus ensures that $\mathcal{T}$ contains the necessary decision boundaries to be isomorphic to the semantic space $\mathcal{S}$.

\textbf{Example.} The same-task setting pairs $x$ with
$x' =$ ``Is $14$ prime, and why?'', which yields $y' =$ ``No. $14 = 2 \times 7$.''
The inputs differ by a single token and the answers are semantically opposite,
so a bounded classifier acting on $\mathbf{T}_x$ and $\mathbf{T}_{x'}$ must
place them on opposite sides of its decision boundary. A cross-task setting
pairs $x$ with a medical-domain prompt such as
``What are the symptoms of Alzheimer's?''. Its output occupies a disjoint
semantic region and must remain just as resolvable.

\textbf{Quantification.} We instantiate $\phi$ as a learned binary discriminator $f_{\text{disc}}(\mathbf{T}, Y) \in [0,1]$ drawn from $\mathcal{H}$, which scores the alignment between $\mathbf{T}$ and a candidate output sequence $Y$. Positives pair $\mathbf{T}$ with its corresponding generated sequence. Negatives use two strategies, same-task pairing for fine-grained within-task discrimination and cross-task pairing for cross-domain separability. We realize $f_{\text{disc}}$ as a trainable linear projection that maps $\mathbf{T}$ into the embedding space of a frozen LLM backbone, followed by a trained classification head optimized with binary cross-entropy. Classification accuracy is the metric. Concurrent analyses of soft-thinking representations~\citep{rizvi-martel2026the} report an analogous superposition failure in which distinct reasoning paths become indistinguishable.

\paragraph{4. Stability.}
The representation must be invariant to surface level lexical variations in the output space and robust to sampling stochasticity. Rather than encoding a single realization $y \sim P(Y|x)$, $\mathbf{T}$ should encode the parameters of the semantic distribution $P(\mathcal{S}|x)$. This implies two conditions: (1) \textbf{Mode Collapse Resistance:} If $P(Y|x)$ represents uncertainty or confusion, $\mathbf{T}$ must reflect this entropy, because LLM generation frequently collapses to a small subset of high probability modes and fails to mirror the underlying predictive distribution~\citep{zhu2026exploring}. (2) \textbf{Lexical Invariance:} For any set of high probability sibling outputs $\{y_1, \dots, y_k\}$ drawn from the same input $x$ that are semantically equivalent ($y_i \sim_{sem} y_j$), the induced representations should satisfy $\mathbf{T}_{y_i} \approx \mathbf{T}_{y_j}$, a property requiring explicit enforcement against latent divergence under paraphrasing~\citep{prasanth2026enforcing}.

\textbf{Example.} For lexical invariance, two sibling outputs
$y_1 =$ ``Yes. $13$ is prime because no integer from $2$ to $12$ divides it.''
and $y_2 =$ ``Yes, $13$ has no divisors other than $1$ and itself.'' are
semantically equivalent and must induce $\mathbf{T}_{y_1} \approx \mathbf{T}_{y_2}$.
For mode-collapse resistance, suppose the model is imperfect and outputs
``Yes'' in some generations and ``No'' in others, so its high-probability outputs
do not all agree. Then $P(Y|x)$ has positive entropy and $\mathbf{T}$ must
reflect both modes rather than encode just one.

\textbf{Quantification.} For candidates that produce a single representation per input, lexical invariance holds by construction and we probe mode-collapse resistance only. We quantify distributional uncertainty via the semantic entropy $H_x$ of \citet{kuhn2023semantic}, computed by binarizing pairwise cosine similarities between $K$ output embeddings at threshold $\tau$ to form semantic equivalence classes and setting $H_x$ to the Shannon entropy over class sizes. A question with $H_x = 0$ has all outputs in one class, whereas $H_x > 0$ indicates spread across semantically distinct outputs. To measure whether $\mathbf{T}$ linearly encodes this property, we adopt the difference-of-means probe of \citet{cencerrado2026answerneededpredictingllm} and report the cross-validated AUROC for predicting $H_x > 0$. The resulting Distributional Consistency Score (DCS) ranges from $0.5$ (random baseline) to $1.0$ (perfect discrimination). Further analysis on DCS, input embeddings as a proxy to question difficulty, and sensitivity to $\tau$ are in \cref{appendix:sub:dcs_tau}.

\begin{table}[!htbp]
  \caption{The four axioms with their formal requirement on $\mathbf{T}$ and quantifying measure.}
  \label{tab:axiom-summary}
  \centering
  \footnotesize
  \setlength{\tabcolsep}{6pt}
  \renewcommand{\arraystretch}{1.2}
  \begin{tabular}{@{}p{0.14\linewidth} p{0.45\linewidth} p{0.32\linewidth}@{}}
    \toprule
    \textbf{Axiom} & \textbf{Formal requirement} & \textbf{Quantitative measure} \\
    \midrule
    1. Causality      & $D_{\mathrm{KL}}\big(P_\theta(Z \mid Y) \,\|\, P_\theta(Z \mid \mathbf{T})\big) \approx 0$  & KL substitution error (\cref{eq:causality-error}) \\
    2. Minimality     & $\min_{\mathbf{T}}\; I(X;\mathbf{T}) - \beta\, I(\mathbf{T};Y)$  & IB residual gap $\Delta_{\mathrm{IB}}$ (\cref{eq:delta-ib}) \\
    3. Separability   & $d_\mathcal{S}\big(\phi(\mathbf{T}_{x_1}), \phi(\mathbf{T}_{x_2})\big) > \delta$, $\phi \in \mathcal{H}$  & Discriminator accuracy \\
    4. Stability      & $\mathbf{T}$ encodes the entropy of $P(\mathcal{S} \mid x)$  & DCS AUROC \\
    \bottomrule
  \end{tabular}
\end{table}

\section{Experimental setup}
\label{sec:exp_setup}

\textbf{Candidates.} We analyze (1) the Last Input Token (LIT) from all layers and (2) LIT from the final layer. Hidden states in the last position of the prompt is what the language-model head projects to logits, encoding the model's immediate pre-generation context prior to emitting $y$. Prior work confirms that linear probes on these pre-generation activations recover non-trivial information about the upcoming generation~\citep{lugoloobi2026llms}. We evaluate the two variants as reasoning-relevant computation is often concentrated in middle rather than final layers~\citep{cai2026tmlr}. We additionally evaluate (3) soft tokens with no noise (ST)~\citep{zhang2025softthinkingunlockingreasoning} and with Gumbel noise (STN)~\citep{wu2026llms}, and (4) latent thinking (LT)~\citep{zou2025latentcollaborationmultiagentsystems} (see~\cref{appendix:sub:soft_latent_formula}). For the soft tokens and latent thinking methods, we test varying thinking steps of 1, 16, 32, 64, and 128. We also evaluate exact and pooled output embeddings and the input prompt embedding.

\begin{wraptable}{r}{0.38\linewidth}
  \vspace{-0.6em}
  \caption{Result-table column groups.}
  \label{tab:candidate-blocks}
  \centering
  \footnotesize
  \setlength{\tabcolsep}{4pt}
  \renewcommand{\arraystretch}{1.15}
  \begin{tabular}{@{}l l@{}}
    \toprule
    \textbf{Group} & \textbf{Variants} \\
    \midrule
    Output Emb. & Exact, Pooled \\
    Candidates       & Hidden states, Think. methods \\
    Baselines        & IE, RV \\
    \bottomrule
  \end{tabular}
  \vspace{-0.6em}
\end{wraptable}

\textbf{Table layout.} Columns follow \cref{tab:candidate-blocks}. The Output Embedding (OE) block holds two upper-bound references derived from $Y$, the Exact variant carrying direct semantic knowledge of $Y$ and the Pooled variant averaging embeddings of possible generations. Neither variant is a reference for Minimality or Stability, where output-based encodings carry their own penalties. The Input Embedding (IE) is the prompt embedding, so a candidate failing to outperform IE adds no information beyond the prompt. The Random Vector (RV) is an information-free reference point.

\begin{wraptable}{r}{0.44\linewidth}
  \vspace{-0.6em}
  \caption{Source LLMs covered by the audit.}
  \label{tab:source-llms}
  \centering
  \footnotesize
  \setlength{\tabcolsep}{4pt}
  \renewcommand{\arraystretch}{1.1}
  \begin{tabular}{@{}l l l@{}}
    \toprule
    \textbf{Source LLM} & \textbf{Family} & \textbf{Paradigm} \\
    \midrule
    Llama-3.1 8B    & Dense      & Instruct          \\
    Llama-3.3 70B   & Dense      & Instruct          \\
    DS-R1-Qwen 32B  & Dense      & Reasoning-distill \\
    Skywork-OR1 32B & Dense      & Native RL         \\
    GPT-OSS 20B     & Sparse MoE & Adjust effort  \\
    \bottomrule
  \end{tabular}
  \vspace{-0.6em}
\end{wraptable}

\textbf{Models.} We evaluate on the 23 tasks of BBEH \citep{kazemi2025bigbenchextrahard} using the original benchmark prompt. We use five open-weight LLMs chosen to cover a range of sizes, training procedures, and architectures (\cref{tab:source-llms}), specifically Llama-3.1-8B-Instruct and Llama-3.3-70B-Instruct~\citep{grattafiori2024llama3herdmodels}, DeepSeek-R1-Distill-Qwen-32B~\citep{deepseekai2025deepseekr1incentivizingreasoningcapability}, Skywork-OR1-32B~\citep{he2025skywork, skywork-or1-2025}, and GPT-OSS-20B~\citep{openai2025gptoss120bgptoss20bmodel}. The selection covers dense, sparse-MoE, reasoning-distilled, and RL-trained paradigms.

\textbf{Generation.} We assume that $\mathbf{T}$ encodes the sufficient statistics of $P(Y \mid x)$, so the outputs the model assigns high probability to are those consistent with what $\mathbf{T}$ captures about the input. For each prompt, beam search approximates this high-probability region of $P(Y \mid x)$, and the eight returned sequences of up to 8192 tokens form an empirical representative slice on which each axiom is evaluated. Beyond maximizing output probability, beam search guarantees distinct candidate outputs, reduces sampling variance, and exposes an empirical distribution over reasoning paths~\citep{fadeeva2026dont}.

\textbf{Probes.} We utilize a frozen LlaMA-3.2-1B~\citep{grattafiori2024llama3herdmodels} backbone with a trainable projection that maps thought representations into its token-embedding space, and the discriminator adds a trained classification head. The frozen backbone serves as a shared decoding surface whose learned representations remain compatible with those of independently trained LLMs~\citep{salhan2026do}, while the trainable projection learns features specific to each candidate and source model pair. For the Causality measure specifically, the projection is trained on $\mathcal{M}_\theta$'s own output sequences, thus the KL divergence reflects functional substitution rather than generic transferability. Utilizing a shared backbone ensures that the computational cost of evaluating the metric remains constant, independent of the LLM's size. Semantic similarity between outputs for equivalence classes of DCS are computed with Embed-Nemotron-8B~\citep{babakhin2025llamaembednemotron8buniversaltextembedding}, the leading text embedding model on MTEB~\citep{muennighoff2023mtebmassivetextembedding} at the time of writing. Training parameters and auxiliary details are in \cref{appendix:sub:train_details,appendix:sub:bootstrap,appendix:output_length,appendix:bbeh_accuracy}.

\section{Results}
\label{sec:results}

\subsection{Per-axiom analysis}
\label{sec:results:per-axiom}


\begin{wraptable}{r}{0.45\textwidth}
  \vspace{-1.0\baselineskip}
  \centering
  \scriptsize
  \setlength{\tabcolsep}{2.5pt}
  \renewcommand{\arraystretch}{1.05}
  \caption{Causality KL ($\downarrow$, nats).}
  \label{tab:causality-headline}
  \begin{tabular}{@{}l ccccccc@{}}
    \toprule
    \textbf{LLM} & OE & LIT & ST & STN & LT & IE & RV \\
    \midrule
    Llama 8B    & \cellcolor{okabeBlue!10}$5.21$ & \cellcolor{okabeBlue!27}$5.01$ & \cellcolor{okabeBlue!27}$4.96$ & \cellcolor{okabeBlue!35}$4.70$ & $5.32$ & $5.36$ & \cellcolor{okabeOrange!35}$9.49$ \\
    Llama 70B   & \cellcolor{okabeBlue!10}$4.56$ & \cellcolor{okabeOrange!35}$5.28$ & $4.65$ & \cellcolor{okabeOrange!27}$5.08$ & \cellcolor{okabeBlue!35}$4.21$ & $4.71$ & \cellcolor{okabeOrange!35}$8.93$ \\
    DS-R1 32B   & \cellcolor{okabeOrange!10}$4.67$ & \cellcolor{okabeOrange!19}$4.79$ & $4.45$ & $4.57$ & \cellcolor{okabeOrange!10}$4.62$ & $4.50$ & \cellcolor{okabeOrange!35}$9.36$ \\
    Sky-OR1 32B & $4.10$ & $4.09$ & \cellcolor{okabeBlue!19}$3.90$ & \cellcolor{okabeOrange!35}$4.68$ & \cellcolor{okabeOrange!19}$4.34$ & $4.08$ & \cellcolor{okabeOrange!35}$9.31$ \\
    GPT-OSS 20B & $3.82$ & \cellcolor{okabeOrange!27}$4.19$ & \cellcolor{okabeOrange!19}$4.00$ & \cellcolor{okabeOrange!27}$4.17$ & \cellcolor{okabeOrange!10}$3.90$ & $3.78$ & \cellcolor{okabeOrange!35}$9.60$ \\
    \bottomrule
  \end{tabular}
  \vspace{-1.0\baselineskip}
\end{wraptable}

\textbf{Causality.} We measure the divergence between the source LLM's continuation distribution and the distribution induced by substituting a candidate TR, reporting the KL divergence (\cref{eq:causality-error}) for which lower is better. Each table in this subsection shows the family-best variant per source LLM, with cells shaded by the $|z|$-score of the bootstrap-paired gap to IE (blue: above IE, red: below). Every TR yields KL substantially below the information-free RV baseline (\cref{tab:causality-headline}), establishing that the representations encode continuation-relevant information. None of the TRs consistently exceeds the IE reference, indicating that the thought representations carry no additional causal information beyond the prompt.


\begin{wraptable}{l}{0.55\textwidth}
  \vspace{-1.0\baselineskip}
  \centering
  \scriptsize
  \setlength{\tabcolsep}{2.5pt}
  \renewcommand{\arraystretch}{1.05}
  \caption{Minimality $\Delta_{\text{IB}}$ ($\uparrow$, nats).}
  \label{tab:minimality-headline}
  \begin{tabular}{@{}l ccccccc@{}}
    \toprule
    \textbf{LLM} & OE & LIT & ST & STN & LT & IE & RV \\
    \midrule
    Llama 8B    & \cellcolor{okabeBlue!19}$0.37$ & \cellcolor{okabeOrange!10}$0.16$ & $0.25$ & $0.24$ & $0.19$ & $0.22$ & \cellcolor{okabeOrange!35}$-0.40$ \\
    Llama 70B   & \cellcolor{okabeBlue!19}$-0.13$ & \cellcolor{okabeOrange!10}$-0.30$ & $-0.24$ & $-0.24$ & \cellcolor{okabeOrange!10}$-0.30$ & $-0.23$ & \cellcolor{okabeOrange!35}$-0.99$ \\
    DS-R1 32B   & $0.07$ & \cellcolor{okabeOrange!19}$-0.05$ & \cellcolor{okabeBlue!10}$0.10$ & \cellcolor{okabeBlue!10}$0.10$ & $0.05$ & $0.04$ & \cellcolor{okabeOrange!35}$-0.50$ \\
    Sky-OR1 32B & \cellcolor{okabeBlue!10}$-0.16$ & \cellcolor{okabeOrange!10}$-0.27$ & \cellcolor{okabeBlue!10}$-0.13$ & \cellcolor{okabeBlue!10}$-0.14$ & $-0.18$ & $-0.21$ & \cellcolor{okabeOrange!35}$-0.59$ \\
    GPT-OSS 20B & \cellcolor{okabeBlue!19}$-0.22$ & \cellcolor{okabeBlue!10}$-0.25$ & \cellcolor{okabeBlue!19}$-0.21$ & \cellcolor{okabeBlue!19}$-0.20$ & \cellcolor{okabeBlue!27}$-0.17$ & $-0.34$ & $-0.30$ \\
    \bottomrule
  \end{tabular}
  \vspace{-1.0\baselineskip}
\end{wraptable}

\textbf{Minimality.} We test whether the representation compresses input-specific information and retains what is needed to predict the output, reporting the IB residual gap $\Delta_{\text{IB}}$ (\cref{eq:delta-ib}) for which higher is better. On every source LLM, the OE family ranks above IE but falls outside the interpretable range of the decomposition by construction (see \cref{methodology}). Among the remaining candidates, results are mixed (\cref{tab:minimality-headline}). LIT falls below IE on most source models, soft-thinking candidates at or above IE, and LT almost the same as IE. This indicates that no candidate consistently encodes more output-relevant compression than the prompt already provides. The absolute scale shifts across source LLMs because the cross-entropy decomposition discards a constant tied to each LLM's entropies. Therefore, ranking is comparable only on each row. \Cref{fig:results_summary} applies normalization on each LLM to recover a shared axis.


\begin{wraptable}{r}{0.45\textwidth}
  \vspace{-1.0\baselineskip}
  \centering
  \scriptsize
  \setlength{\tabcolsep}{2.5pt}
  \renewcommand{\arraystretch}{1.05}
  \caption{Same-task acc.\ (\%, $\uparrow$).}
  \label{tab:separability-headline}
  \begin{tabular}{@{}l ccccccc@{}}
    \toprule
    \textbf{LLM} & OE & LIT & ST & STN & LT & IE & RV \\
    \midrule
    Llama 8B    & \cellcolor{okabeBlue!35}$68.8$ & $53.9$ & $54.7$ & $53.5$ & $54.7$ & $54.5$ & \cellcolor{okabeOrange!35}$48.9$ \\
    Llama 70B   & \cellcolor{okabeBlue!35}$72.6$ & $51.6$ & \cellcolor{okabeBlue!10}$52.9$ & \cellcolor{okabeBlue!10}$52.8$ & $51.4$ & $52.1$ & \cellcolor{okabeOrange!19}$49.7$ \\
    DS-R1 32B   & \cellcolor{okabeBlue!35}$63.5$ & $52.6$ & \cellcolor{okabeBlue!10}$54.8$ & \cellcolor{okabeOrange!10}$51.8$ & \cellcolor{okabeOrange!27}$50.3$ & $53.5$ & \cellcolor{okabeOrange!27}$50.3$ \\
    Sky-OR1 32B & \cellcolor{okabeBlue!35}$63.4$ & $53.3$ & $54.2$ & \cellcolor{okabeOrange!19}$51.8$ & \cellcolor{okabeOrange!27}$51.2$ & $54.0$ & \cellcolor{okabeOrange!27}$49.9$ \\
    GPT-OSS 20B & \cellcolor{okabeBlue!35}$62.4$ & $50.4$ & \cellcolor{okabeBlue!10}$50.7$ & \cellcolor{okabeBlue!19}$51.8$ & \cellcolor{okabeBlue!10}$51.2$ & $49.5$ & \cellcolor{okabeBlue!10}$51.0$ \\
    \bottomrule
  \end{tabular}
  \vspace{-1.0\baselineskip}
\end{wraptable}

\textbf{Separability.} We test whether $\mathbf{T}$ encodes per-question identity (\cref{eq:separability-axiom,tab:separability-headline}). Cross-task accuracy is usually near saturation for every candidate, including the IE reference. This shows that representations encode task-related information required to distinguish one task from another. Same-task (or within-task) accuracy, however, illustrates that every candidate except OE is very close to the random baseline. Panel (a) of \cref{fig:results_summary} shows the joint view of both modes for Separability. It is important to note that the highest same-task accuracy OE achieved is $73\%$. We view OE as an upper-bound to what the candidate representations can encode about the outputs in this metric. However, the collapse is structural rather than a probe-capacity artifact, with \cref{appendix:geometric} tracing it to too few effective dimensions in the within-task geometry for any probe to recover.

\begin{figure}[!htbp]
  \centering
  \includegraphics[width=\linewidth]{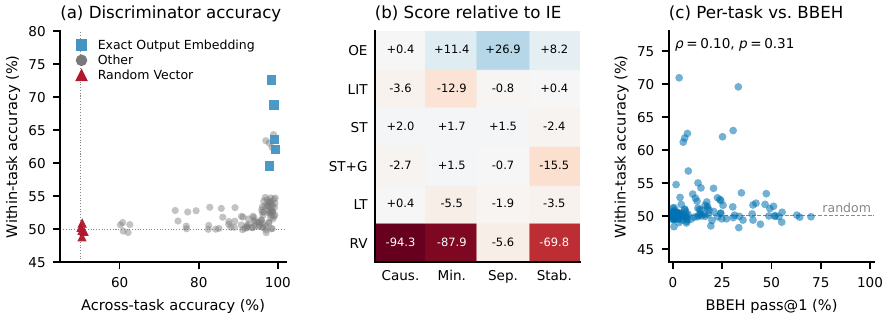}
  \caption{(a) Discriminator accuracy on across- and within-task pairs, one point per (LLM, candidate). (b) Per-axiom score relative to the Input Embedding reference, family-best per LLM averaged across LLMs. (c) Per-task within-task accuracy versus BBEH pass@$1$ (per-LLM detail in \cref{appendix:downstream}).}
  \label{fig:results_summary}
\end{figure}


\begin{wraptable}{l}{0.45\textwidth}
  \vspace{-1.0\baselineskip}
  \centering
  \scriptsize
  \setlength{\tabcolsep}{2.5pt}
  \renewcommand{\arraystretch}{1.05}
  \caption{DCS AUROC ($\uparrow$, $\tau{=}0.9$).}
  \label{tab:stability-headline}
  \begin{tabular}{@{}l ccccccc@{}}
    \toprule
    \textbf{LLM} & OE & LIT & ST & STN & LT & IE & RV \\
    \midrule
    Llama 8B    & \cellcolor{okabeBlue!10}$0.96$ & $0.94$ & $0.94$ & \cellcolor{okabeOrange!10}$0.90$ & $0.92$ & $0.93$ & \cellcolor{okabeOrange!35}$0.52$ \\
    Llama 70B   & \cellcolor{okabeOrange!19}$0.89$ & \cellcolor{okabeOrange!10}$0.89$ & \cellcolor{okabeOrange!27}$0.85$ & \cellcolor{okabeOrange!27}$0.84$ & \cellcolor{okabeOrange!19}$0.87$ & $0.92$ & \cellcolor{okabeOrange!35}$0.50$ \\
    DS-R1 32B   & \cellcolor{okabeBlue!19}$0.96$ & \cellcolor{okabeBlue!10}$0.95$ & \cellcolor{okabeBlue!10}$0.95$ & \cellcolor{okabeOrange!27}$0.85$ & \cellcolor{okabeOrange!10}$0.92$ & $0.94$ & \cellcolor{okabeOrange!35}$0.50$ \\
    Sky-OR1 32B & \cellcolor{okabeBlue!27}$0.97$ & \cellcolor{okabeBlue!10}$0.96$ & \cellcolor{okabeBlue!10}$0.95$ & \cellcolor{okabeOrange!27}$0.86$ & $0.92$ & $0.93$ & \cellcolor{okabeOrange!35}$0.49$ \\
    GPT-OSS 20B & \cellcolor{okabeBlue!10}$0.74$ & $0.58$ & $0.55$ & \cellcolor{okabeOrange!10}$0.46$ & $0.59$ & $0.59$ & $0.56$ \\
    \bottomrule
  \end{tabular}
  \vspace{-1.0\baselineskip}
\end{wraptable}

\textbf{Stability.} The DCS is the AUROC of a probe that predicts from the candidate whether a question's beam outputs span more than one semantic equivalence class (\cref{tab:stability-headline}). On the four LLMs that produce non-singleton beam clusters on a sizeable fraction of questions, the candidates clear the Random Vector baseline by a wide margin, the Output Embedding family ranks highest, and the iterative thinking families exhibit decreasing DCS as the step count grows, with the largest decrease for Latent Thinking. The Input Embedding reference matches or exceeds the iterative candidates and is the highest-scoring representation on Llama-70B, reflecting that distributional uncertainty is largely predictable from the question text alone. GPT-OSS-20B is an MoE outlier whose forced beam-generation protocol yields almost no divergent semantic outputs across questions, leaving the DCS estimates uninformative rather than indicative of DCS failure. Per-variant tables across all candidates and step counts, the per-step DCS trajectory, and the projection- and length-control ablations are reported in \cref{appendix:sub:detailed_results,appendix:causality_length,appendix:sub:causality_minproj,appendix:sub:dcs_tau,appendix:capacity_ablation,appendix:sub:minimality_ib_derivation}.

\subsection{Per-family analysis}
\label{sec:results:per-family}

\textbf{Output Embedding as anchor.} The Output Embedding family serves as the upper anchor on axes that reward information about the generated continuation, since these candidates are computed from the output directly. On Causality and Separability the Output Embedding is near the maximum achievable score on every source LLM. In the within-task discrimination setting it is the only family exceeding the random baseline by a meaningful margin, with the Exact variant above the Pooled variant. The Output Embedding is not a meaningful Minimality reference, as its construction violates the chain-rule simplification that the Minimality decomposition assumes. \Cref{appendix:geometric} traces this advantage reaching content related to the  question encoded along the Output Embedding directions that other candidates sharing its cluster coordinates do not carry.

\textbf{Cross-axis pattern.} \Cref{fig:results_summary}(b) reports the per-axiom score of each family relative to the Input Embedding reference, averaged across LLMs and computed from the family-best variant per (LLM, axis) pair, with a Random Vector row as a sanity-check anchor. The heatmap recovers the Separability collapse from \cref{sec:results:per-axiom} and exposes three additional patterns. On within-task Separability, only the Output Embedding clears the random baseline. On Minimality, the iterative thinking families are mostly above the Input Embedding reference whereas Last Input Token is below it, since Last Input Token encodes input-specific information that is not used in the explicit output but is intrinsic to the input. On Stability, the iterative families and Last Input Token are close to the reference, with Soft Thinking with Gumbel noise as the only outlier. The decrease after adding noise is expected and reinforces why noise is a critical step for exploration~\citep{deng2025latentreasoningllmsvocabularyspace, wu2026llms}.

The Output Embedding's Causality cell is within bootstrap noise of the Input Embedding reference, reflecting a substantive empirical observation rather than a normalization artifact. Output Embedding and Input Embedding project into regions of the model's space that perturb the continuation distribution by comparable amounts, leaving neither dominant on this axis. Taken together, no iterative thinking family achieves the simultaneous strong clustering, wide within-task spread, and high within-question similarity that the framework rewards. The per-family geometric trajectories and the uniform failure pattern across architectures and sizes are reported in \cref{appendix:geometric}.

\subsection{Joint behavior and synthesis}
\label{sec:results:joint}
\begin{wraptable}{r}{0.26\linewidth}
  \vspace{-0.6em}
  \caption{Cells beating IE.}
  \label{tab:per-llm-consistency}
  \centering
  \footnotesize
  \setlength{\tabcolsep}{3pt}
  \renewcommand{\arraystretch}{1.05}
  \definecolor{c4low}{HTML}{F4D8C8}
  \definecolor{c4high}{HTML}{D6E8F4}
  \begin{tabular}{@{}l c@{}}
    \toprule
    \textbf{Candidate} & \textbf{Cells/$20$} \\
    \midrule
    Exact                & \cellcolor{c4high}$16$ \\
    Pooled               & \cellcolor{c4high}$16$ \\
    LIT (all, final)     & \cellcolor{c4low}$6$ \\
    \midrule
    ST@$1$               & $13$ \\
    ST@$16{-}128$        & \cellcolor{c4low}$3{-}5$ \\
    STN@$1$              & $7$ \\
    STN@$16{-}128$       & \cellcolor{c4low}$2{-}3$ \\
    LT@$1$               & $7$ \\
    LT@$16{-}128$        & \cellcolor{c4low}$2{-}3$ \\
    \bottomrule
  \end{tabular}
  \vspace{-0.6em}
\end{wraptable}

No candidate thought representation consistently exceeds the Input Embedding reference on any of the four axes when results are averaged across LLMs. The per-LLM advantages reported by \cref{tab:causality-headline,tab:minimality-headline,tab:separability-headline,tab:stability-headline} for individual candidates do not generalize (\cref{tab:per-llm-consistency}). In addition, the evaluation does not match what benchmark accuracy alone would predict. A model can score well on a reasoning benchmark while its thought representations fail to discriminate two questions drawn from the same task. The per-task correlation between Separability and downstream accuracy in panel (c) of \cref{fig:results_summary} confirms that this collapse is not explained by task difficulty. The framework's contribution is a measurement protocol that exposes these gaps at the representation level, without retraining the source model and without dependence on a downstream benchmark.

\textbf{Takeaway.} \textit{No candidate beats the Input Embedding reference on every axis across the tested LLMs, and the iterative thinking variants degrade as the step count grows.}

\FloatBarrier

\section{Conclusion}
\label{conclusion}

We introduce an axiomatic evaluation framework for candidate thought representations that runs directly on the source LLM without retraining, instantiated by four measures. We apply this methodology across LLMs that span dense, sparse-MoE, reasoning-distilled, and RL-trained paradigms. The protocol exposes a representational collapse on per-question identity that downstream task accuracy does not reveal. By focusing our evaluation on the representation rather than the subsequent reasoning process, our approach provides a unified foundation. This principled approach readily adapts to new thought representations and axiom measurement designs. Future contributions can propose additional candidate constructions beyond the soft-thinking, latent-thinking, and hidden-state families considered in this work.

\textbf{Implications.} The four axioms serve as explicit optimization targets and diagnostic readouts for thought representations. A new candidate can be scored on each axiom independently, so any change in downstream reasoning accuracy is attributable to the property responsible rather than to an aggregate benchmark score that obscures the cause. An audit identifies which axiom is the binding constraint on an existing representation before any retraining is undertaken, and competing representations are compared directly on a four-metric profile rather than on a single accuracy number. This places future thought representation development on explicit, decomposable, and directly comparable properties that researchers can utilize.

\textbf{Limitations.}
The lexical invariance sub-property of Stability is not measured
because every candidate evaluated here produces vectors identical
across paraphrases by construction, leaving the sub-property trivial
in our setting. A measurement protocol awaits candidate constructions
that admit non-trivial paraphrase variation. Measurement cost
exceeds that of running a single accuracy benchmark, as the protocol
requires LLM generations and an additional short probe training
step. We position the framework as orthogonal to reasoning benchmark evaluation,
where the cost of running the protocol once is small relative to the
representation quality information it produces. The empirical audit covers
the 23 reasoning tasks of BBEH and five open-weight English-language LLMs
spanning a range of architectures, sizes, and training paradigms, and our
conclusions do not necessarily hold for multilingual workloads or for
generations outside reasoning. Finally, the candidates audited here are
all obtainable from a pre-trained LLM without any additional training, and
representations satisfying all four axioms may require specifically designed extraction that is trained explicitly to meet those functional requirements.
Applying the protocol to such representations is a natural direction for
future work.

\clearpage
\bibliographystyle{abbrvnat}
\bibliography{references}

@inproceedings{afzal2025knowingsayingllmrepresentations,
    title = "Knowing Before Saying: {LLM} Representations Encode Information About Chain-of-Thought Success Before Completion",
    author = "Afzal, Anum and Matthes, Florian and Chechik, Gal and Ziser, Yftah",
    editor = "Che, Wanxiang and Nabende, Joyce and Shutova, Ekaterina and Pilehvar, Mohammad Taher",
    booktitle = "Findings of the Association for Computational Linguistics: ACL 2025",
    month = jul,
    year = "2025",
    address = "Vienna, Austria",
    publisher = "Association for Computational Linguistics",
    url = "https://aclanthology.org/2025.findings-acl.662/",
    doi = "10.18653/v1/2025.findings-acl.662",
    pages = "12791--12806",
    ISBN = "979-8-89176-256-5",
}

@article{Du_2025,
   title={{Human-like object concept representations emerge naturally in multimodal large language models}},
   volume={7},
   ISSN={2522-5839},
   url={http://dx.doi.org/10.1038/s42256-025-01049-z},
   DOI={10.1038/s42256-025-01049-z},
   number={6},
   journal={Nature Machine Intelligence},
   publisher={Springer Science and Business Media LLC},
   author={Du, Changde and Fu, Kaicheng and Wen, Bincheng and Sun, Yi and Peng, Jie and Wei, Wei and Gao, Ying and Wang, Shengpei and Zhang, Chuncheng and Li, Jinpeng and Qiu, Shuang and Chang, Le and He, Huiguang},
   year={2025},
   month=jun, pages={860–875} 
}

@inproceedings{goyal2024thinkspeaktraininglanguage,
    title = {Think before you speak: Training Language Models With Pause Tokens},
    author = {Sachin Goyal and Ziwei Ji and Ankit Singh Rawat and Aditya Krishna Menon and Sanjiv Kumar and Vaishnavh Nagarajan},
    booktitle = {The Twelfth International Conference on Learning Representations},
    year = {2024},
    url = {https://openreview.net/forum?id=ph04CRkPdC},
}

@misc{zeng2025ponderlm2pretrainingllmlatent,
      title={{PonderLM-2: Pretraining LLM with Latent Thoughts in Continuous Space}}, 
      author={Boyi Zeng and He Li and Shixiang Song and Yixuan Wang and Ziwei He and Xinbing Wang and Zhouhan Lin},
      year={2025},
      eprint={2509.23184},
      archivePrefix={arXiv},
      primaryClass={cs.CL},
      url={https://arxiv.org/abs/2509.23184}, 
}

@inproceedings{chételat2025innerthoughtsdisentanglingrepresentationspredictions,
    title = {{InnerThoughts: Disentangling Representations and Predictions in Large Language Models}},
    author = {Ch{\'e}telat, Didier and Cotnareanu, Joseph and Thompson, Rylee and Zhang, Yingxue and Coates, Mark},
    booktitle = {Proceedings of The 28th International Conference on Artificial Intelligence and Statistics},
    pages = {3862--3870},
    year = {2025},
    editor = {Li, Yingzhen and Mandt, Stephan and Agrawal, Shipra and Khan, Emtiyaz},
    volume = {258},
    series = {Proceedings of Machine Learning Research},
    month = {03--05 May},
    publisher = {PMLR},
    url = {https://proceedings.mlr.press/v258/chetelat25a.html},
}

@article{hao2024training,
  title={{Training large language models to reason in a continuous latent space}},
  author={Hao, Shibo and Sukhbaatar, Sainbayar and Su, DiJia and Li, Xian and Hu, Zhiting and Weston, Jason and Tian, Yuandong},
  journal={arXiv preprint arXiv:2412.06769},
  year={2024}
}

@inproceedings{shen2025codicompressingchainofthoughtcontinuous,
    title = "{CODI}: Compressing Chain-of-Thought into Continuous Space via Self-Distillation",
    author = "Shen, Zhenyi and Yan, Hanqi and Zhang, Linhai and Hu, Zhanghao and Du, Yali and He, Yulan",
    editor = "Christodoulopoulos, Christos and Chakraborty, Tanmoy and Rose, Carolyn and Peng, Violet",
    booktitle = "Proceedings of the 2025 Conference on Empirical Methods in Natural Language Processing",
    month = nov,
    year = "2025",
    address = "Suzhou, China",
    publisher = "Association for Computational Linguistics",
    url = "https://aclanthology.org/2025.emnlp-main.36/",
    doi = "10.18653/v1/2025.emnlp-main.36",
    pages = "677--693",
    ISBN = "979-8-89176-332-6",
}

@inproceedings{zhang2025softthinkingunlockingreasoning,
      title={{Soft Thinking: Unlocking the Reasoning Potential of {LLM}s in Continuous Concept Space}},
      author={Zhen Zhang and Xuehai He and Weixiang Yan and Ao Shen and Chenyang Zhao and Xin Eric Wang},
      booktitle={The Thirty-ninth Annual Conference on Neural Information Processing Systems},
      year={2026},
      url={https://openreview.net/forum?id=ByQdHPGKgU},
}

@inproceedings{wu2026llms,
title={{{LLM}s are Single-threaded Reasoners: Demystifying the Working Mechanism of Soft Thinking}},
author={Junhong Wu and Jinliang Lu and Zixuan Ren and Gangqiang Hu and Zhi Wu and Dai Dai and Hua Wu},
booktitle={The Fourteenth International Conference on Learning Representations},
year={2026},
url={https://openreview.net/forum?id=ASLuOoP78o}
}

@inproceedings{butt2026soft,
title={{Soft Tokens, Hard Truths}},
author={Natasha Butt and Ariel Kwiatkowski and Ismail Labiad and Julia Kempe and Yann Ollivier},
booktitle={The Fourteenth International Conference on Learning Representations},
year={2026},
url={https://openreview.net/forum?id=9JjKTp8Jmy}
}

@misc{deng2025latentreasoningllmsvocabularyspace,
      title={{Latent Reasoning in LLMs as a Vocabulary-Space Superposition}}, 
      author={Jingcheng Deng and Liang Pang and Zihao Wei and Shichen Xu and Zenghao Duan and Kun Xu and Yang Song and Huawei Shen and Xueqi Cheng},
      year={2025},
      eprint={2510.15522},
      archivePrefix={arXiv},
      primaryClass={cs.CL},
      url={https://arxiv.org/abs/2510.15522}, 
}

@inproceedings{skean2025layerlayeruncoveringhidden,
    title = {Layer by Layer: Uncovering Hidden Representations in Language Models},
    author = {Oscar Skean and Md Rifat Arefin and Dan Zhao and Niket Nikul Patel and Jalal Naghiyev and Yann LeCun and Ravid Shwartz-Ziv},
    booktitle = {Forty-second International Conference on Machine Learning},
    year = {2025},
    url = {https://openreview.net/forum?id=WGXb7UdvTX},
}

@inproceedings{sun2025transformerlayerspainters,
    title = {{Transformer Layers as Painters}},
    author = {Sun, Qi and Pickett, Marc and Nain, Aakash Kumar and Jones, Llion},
    booktitle = {Proceedings of the AAAI Conference on Artificial Intelligence},
    volume = {39},
    pages = {25219--25227},
    year = {2025},
    doi = {10.1609/aaai.v39i24.34708},
    url = {https://ojs.aaai.org/index.php/AAAI/article/view/34708},
}

@misc{shani2025tokensthoughtsllmshumans,
      title={{From Tokens to Thoughts: How LLMs and Humans Trade Compression for Meaning}}, 
      author={Chen Shani and Liron Soffer and Dan Jurafsky and Yann LeCun and Ravid Shwartz-Ziv},
      year={2025},
      eprint={2505.17117},
      archivePrefix={arXiv},
      primaryClass={cs.CL},
      url={https://arxiv.org/abs/2505.17117}, 
}

@article{ameisen2025circuit,
  author={Ameisen, Emmanuel and Lindsey, Jack and Pearce, Adam and Gurnee, Wes and Turner, Nicholas L. and Chen, Brian and Citro, Craig and Abrahams, David and Carter, Shan and Hosmer, Basil and Marcus, Jonathan and Sklar, Michael and Templeton, Adly and Bricken, Trenton and McDougall, Callum and Cunningham, Hoagy and Henighan, Thomas and Jermyn, Adam and Jones, Andy and Persic, Andrew and Qi, Zhenyi and Ben Thompson, T. and Zimmerman, Sam and Rivoire, Kelley and Conerly, Thomas and Olah, Chris and Batson, Joshua},
  title={{Circuit Tracing: Revealing Computational Graphs in Language Models}},
  journal={Transformer Circuits Thread},
  year={2025},
  url={https://transformer-circuits.pub/2025/attribution-graphs/methods.html}
}

@misc{jolicoeurmartineau2025morerecursivereasoningtiny,
      title={{Less is More: Recursive Reasoning with Tiny Networks}}, 
      author={Alexia Jolicoeur-Martineau},
      year={2025},
      eprint={2510.04871},
      archivePrefix={arXiv},
      primaryClass={cs.LG},
      url={https://arxiv.org/abs/2510.04871}, 
}

@inproceedings{bandarkar2025layerswappingzeroshotcrosslingual,
    title = {Layer Swapping for Zero-Shot Cross-Lingual Transfer in Large Language Models},
    author = {Lucas Bandarkar and Benjamin Muller and Pritish Yuvraj and Rui Hou and Nayan Singhal and Hongjiang Lv and Bing Liu},
    booktitle = {The Thirteenth International Conference on Learning Representations},
    year = {2025},
    url = {https://openreview.net/forum?id=vQhn4wrQ6j},
}

@inproceedings{wendler-etal-2024-llamas,
    title = "Do Llamas Work in English? On the Latent Language of Multilingual Transformers",
    author = "Wendler, Chris  and
      Veselovsky, Veniamin  and
      Monea, Giovanni  and
      West, Robert",
    editor = "Ku, Lun-Wei  and
      Martins, Andre  and
      Srikumar, Vivek",
    booktitle = "Proceedings of the 62nd Annual Meeting of the Association for Computational Linguistics (Volume 1: Long Papers)",
    month = aug,
    year = "2024",
    address = "Bangkok, Thailand",
    publisher = "Association for Computational Linguistics",
    url = "https://aclanthology.org/2024.acl-long.820/",
    doi = "10.18653/v1/2024.acl-long.820",
    pages = "15366--15394"
}

@inproceedings{alabi-etal-2024-hidden,
    title = "The Hidden Space of Transformer Language Adapters",
    author = "Alabi, Jesujoba  and
      Mosbach, Marius  and
      Eyal, Matan  and
      Klakow, Dietrich  and
      Geva, Mor",
    editor = "Ku, Lun-Wei  and
      Martins, Andre  and
      Srikumar, Vivek",
    booktitle = "Proceedings of the 62nd Annual Meeting of the Association for Computational Linguistics (Volume 1: Long Papers)",
    month = aug,
    year = "2024",
    address = "Bangkok, Thailand",
    publisher = "Association for Computational Linguistics",
    url = "https://aclanthology.org/2024.acl-long.356/",
    doi = "10.18653/v1/2024.acl-long.356",
    pages = "6588--6607"
}

@inproceedings{feng2025monitoring,
title={{Monitoring Latent World States in Language Models with Propositional Probes}},
author={Jiahai Feng and Stuart Russell and Jacob Steinhardt},
booktitle={The Thirteenth International Conference on Learning Representations},
year={2025},
url={https://openreview.net/forum?id=0yvZm2AjUr}
}

@article{sui2025stopoverthinkingsurveyefficient,
    title = {Stop Overthinking: A Survey on Efficient Reasoning for Large Language Models},
    author = {Yang Sui and Yu-Neng Chuang and Guanchu Wang and Jiamu Zhang and Tianyi Zhang and Jiayi Yuan and Hongyi Liu and Andrew Wen and Shaochen Zhong and Na Zou and Hanjie Chen and Xia Hu},
    journal = {Transactions on Machine Learning Research},
    issn = {2835-8856},
    year = {2025},
    url = {https://openreview.net/forum?id=HvoG8SxggZ},
}

@article{feng2025efficientreasoningmodelssurvey,
    title = {Efficient Reasoning Models: A Survey},
    author = {Sicheng Feng and Gongfan Fang and Xinyin Ma and Xinchao Wang},
    journal = {Transactions on Machine Learning Research},
    issn = {2835-8856},
    year = {2025},
    url = {https://openreview.net/forum?id=sySqlxj8EB},
}

@inproceedings{mondorf2024accuracyevaluatingreasoningbehavior,
    title = {Beyond Accuracy: Evaluating the Reasoning Behavior of Large Language Models -- A Survey},
    author = {Philipp Mondorf and Barbara Plank},
    booktitle = {First Conference on Language Modeling ({COLM})},
    year = {2024},
    url = {https://openreview.net/forum?id=Lmjgl2n11u},
}

@misc{lcm2024,
      title={{Large Concept Models: Language Modeling in a Sentence Representation Space}},
      author={LCM team and Loïc Barrault and Paul-Ambroise Duquenne and Maha Elbayad and Artyom Kozhevnikov and Belen Alastruey and Pierre Andrews and Mariano Coria and Guillaume Couairon and Marta R. Costa-jussà and David Dale and Hady Elsahar and Kevin Heffernan and João Maria Janeiro and Tuan Tran and Christophe Ropers and Eduardo Sánchez and Robin San Roman and Alexandre Mourachko and Safiyyah Saleem and Holger Schwenk},
      year={2024},
      eprint={2412.08821},
      archivePrefix={arXiv},
      primaryClass={cs.CL},
      url={https://arxiv.org/abs/2412.08821},
}

@misc{dragunov2025sonarllm,
      title={{SONAR-LLM: Autoregressive Transformer that Thinks in Sentence Embeddings and Speaks in Tokens}},
      author={Nikita Dragunov and Temurbek Rahmatullaev and Elizaveta Goncharova and Andrey Kuznetsov and Anton Razzhigaev},
      year={2025},
      eprint={2508.05305},
      archivePrefix={arXiv},
      primaryClass={cs.CL},
      url={https://arxiv.org/abs/2508.05305},
}

@inproceedings{holtzman2020curiouscaseneuraltext,
    title = {The Curious Case of Neural Text Degeneration},
    author = {Ari Holtzman and Jan Buys and Li Du and Maxwell Forbes and Yejin Choi},
    booktitle = {International Conference on Learning Representations},
    year = {2020},
    url = {https://openreview.net/forum?id=rygGQyrFvH},
}

@misc{muennighoff2023mtebmassivetextembedding,
      title={{MTEB: Massive Text Embedding Benchmark}}, 
      author={Niklas Muennighoff and Nouamane Tazi and Loïc Magne and Nils Reimers},
      year={2023},
      eprint={2210.07316},
      archivePrefix={arXiv},
      primaryClass={cs.CL},
      url={https://arxiv.org/abs/2210.07316}, 
}

@misc{koishekenov2025encodethinkdecodescaling,
      title={{Encode, Think, Decode: Scaling test-time reasoning with recursive latent thoughts}}, 
      author={Yeskendir Koishekenov and Aldo Lipani and Nicola Cancedda},
      year={2025},
      eprint={2510.07358},
      archivePrefix={arXiv},
      primaryClass={cs.LG},
      url={https://arxiv.org/abs/2510.07358}, 
}

@misc{duquenne2023sonarsentencelevelmultimodallanguageagnostic,
      title={{SONAR: Sentence-Level Multimodal and Language-Agnostic Representations}}, 
      author={Paul-Ambroise Duquenne and Holger Schwenk and Benoît Sagot},
      year={2023},
      eprint={2308.11466},
      archivePrefix={arXiv},
      primaryClass={cs.CL},
      url={https://arxiv.org/abs/2308.11466}, 
}

@misc{grattafiori2024llama3herdmodels,
      title={{The Llama 3 Herd of Models}},
      author={Grattafiori, Aaron and Dubey, Abhimanyu and Jauhri, Abhinav and Pandey, Abhinav and Kadian, Abhishek and Al-Dahle, Ahmad and Letman, Aiesha and Mathur, Akhil and Schelten, Alan and Vaughan, Alex and others},
      year={2024},
      eprint={2407.21783},
      archivePrefix={arXiv},
      primaryClass={cs.AI},
      url={https://arxiv.org/abs/2407.21783}, 
}

@inproceedings{herrmann2025measuring,
title={{Measuring In-Context Computation Complexity via Hidden State Prediction}},
author={Vincent Herrmann and R{\'o}bert Csord{\'a}s and J{\"u}rgen Schmidhuber},
booktitle={Forty-second International Conference on Machine Learning},
year={2025},
url={https://openreview.net/forum?id=X21P8etjWL}
}

@inproceedings{kazemi2025bigbenchextrahard,
    title = "{BIG}-Bench Extra Hard",
    author = "Kazemi, Mehran and Fatemi, Bahare and Bansal, Hritik and Palowitch, John and Anastasiou, Chrysovalantis and Mehta, Sanket Vaibhav and Jain, Lalit K and Aglietti, Virginia and Jindal, Disha and Chen, Peter and Dikkala, Nishanth and Tyen, Gladys and Liu, Xin and Shalit, Uri and Chiappa, Silvia and Olszewska, Kate and Tay, Yi and Tran, Vinh Q. and Le, Quoc V and Firat, Orhan",
    editor = "Che, Wanxiang and Nabende, Joyce and Shutova, Ekaterina and Pilehvar, Mohammad Taher",
    booktitle = "Proceedings of the 63rd Annual Meeting of the Association for Computational Linguistics (Volume 1: Long Papers)",
    month = jul,
    year = "2025",
    address = "Vienna, Austria",
    publisher = "Association for Computational Linguistics",
    url = "https://aclanthology.org/2025.acl-long.1285/",
    doi = "10.18653/v1/2025.acl-long.1285",
    pages = "26473--26501",
    ISBN = "979-8-89176-251-0",
}

@article{kazemi2024boardgameqa,
  title={{{BoardgameQA}: A dataset for natural language reasoning with contradictory information}},
  author={Kazemi, Mehran and Yuan, Quan and Bhatia, Deepti and Kim, Najoung and Xu, Xin and Imbrasaite, Vaiva and Ramachandran, Deepak},
  journal={Advances in Neural Information Processing Systems},
  volume={36},
  year={2024}
}

@article{nie2024moca,
  title={{{MoCa}: Measuring human-language model alignment on causal and moral judgment tasks}},
  author={Nie, Allen and Zhang, Yuhui and Amdekar, Atharva Shailesh and Piech, Chris and Hashimoto, Tatsunori B and Gerstenberg, Tobias},
  journal={Advances in Neural Information Processing Systems},
  volume={36},
  year={2024}
}

@article{kiciman2023causal,
  title={{Causal reasoning and large language models: Opening a new frontier for causality}},
  author={K{\i}c{\i}man, Emre and Ness, Robert and Sharma, Amit and Tan, Chenhao},
  journal={arXiv preprint arXiv:2305.00050},
  year={2023}
}

@article{tyen2023llms,
  title={{{LLMs} cannot find reasoning errors, but can correct them!}},
  author={Tyen, Gladys and Mansoor, Hassan and Chen, Peter and Mak, Tony and C{\u{a}}rbune, Victor},
  journal={arXiv preprint arXiv:2311.08516},
  year={2023}
}

@article{kazemi2023geomverse,
  title={{{GeomVerse}: A systematic evaluation of large models for geometric reasoning}},
  author={Kazemi, Mehran and Alvari, Hamidreza and Anand, Ankit and Wu, Jialin and Chen, Xi and Soricut, Radu},
  journal={arXiv preprint arXiv:2312.12241},
  year={2023}
}

@article{sanchez2024linguini,
  title={{{Linguini}: A benchmark for language-agnostic linguistic reasoning}},
  author={S{\'a}nchez, Eduardo and Alastruey, Belen and Ropers, Christophe and Stenetorp, Pontus and Artetxe, Mikel and Costa-juss{\`a}, Marta R},
  journal={arXiv preprint arXiv:2409.12126},
  year={2024}
}

@article{hessel2022androids,
  title={{Do androids laugh at electric sheep? Humor ``understanding'' benchmarks from the {New Yorker} caption contest}},
  author={Hessel, Jack and Marasovi{\'c}, Ana and Hwang, Jena D and Lee, Lillian and Da, Jeff and Zellers, Rowan and Mankoff, Robert and Choi, Yejin},
  journal={arXiv preprint arXiv:2209.06293},
  year={2022}
}

@article{zhang2024humor,
  title={{Humor in {AI}: Massive scale crowd-sourced preferences and benchmarks for cartoon captioning}},
  author={Zhang, Jifan and Jain, Lalit and Guo, Yang and Chen, Jiayi and Zhou, Kuan Lok and Suresh, Siddharth and Wagenmaker, Andrew and Sievert, Scott and Rogers, Timothy and Jamieson, Kevin and others},
  journal={arXiv preprint arXiv:2406.10522},
  year={2024}
}

@article{yamada2023evaluating,
  title={{Evaluating spatial understanding of large language models}},
  author={Yamada, Yutaro and Bao, Yihan and Lampinen, Andrew K and Kasai, Jungo and Yildirim, Ilker},
  journal={arXiv preprint arXiv:2310.14540},
  year={2023}
}

@article{fatemi2024test,
  title={{Test of Time: A benchmark for evaluating {LLMs} on temporal reasoning}},
  author={Fatemi, Bahare and Kazemi, Mehran and Tsitsulin, Anton and Malkan, Karishma and Yim, Jinyeong and Palowitch, John and Seo, Sungyong and Halcrow, Jonathan and Perozzi, Bryan},
  journal={arXiv preprint arXiv:2406.09170},
  year={2024}
}

@article{white2024livebench,
  title={{{LiveBench}: A challenging, contamination-free {LLM} benchmark}},
  author={White, Colin and Dooley, Samuel and Roberts, Manley and Pal, Arka and Feuer, Ben and Jain, Siddhartha and Shwartz-Ziv, Ravid and Jain, Neel and Saifullah, Khalid and Naidu, Siddartha and others},
  journal={arXiv preprint arXiv:2406.19314},
  year={2024}
}

@article{shah2024causal,
  title={{Causal language modeling can elicit search and reasoning capabilities on logic puzzles}},
  author={Shah, Kulin and Dikkala, Nishanth and Wang, Xin and Panigrahy, Rina},
  journal={arXiv preprint arXiv:2409.10502},
  year={2024}
}

@misc{zou2025latentcollaborationmultiagentsystems,
      title={{Latent Collaboration in Multi-Agent Systems}},
      author={Jiaru Zou and Xiyuan Yang and Ruizhong Qiu and Gaotang Li and Katherine Tieu and Pan Lu and Ke Shen and Hanghang Tong and Yejin Choi and Jingrui He and James Zou and Mengdi Wang and Ling Yang},
      year={2025},
      eprint={2511.20639},
      archivePrefix={arXiv},
      primaryClass={cs.CL},
      url={https://arxiv.org/abs/2511.20639},
}

@article{he2025skywork,
  title={{Skywork Open Reasoner 1 Technical Report}},
  author={He, Jujie and Liu, Jiacai and Liu, Chris Yuhao and Yan, Rui and Wang, Chaojie and Cheng, Peng and Zhang, Xiaoyu and Zhang, Fuxiang and Xu, Jiacheng and Shen, Wei and Li, Siyuan and Zeng, Liang and Wei, Tianwen and Cheng, Cheng and An, Bo and Liu, Yang and Zhou, Yahui},
  journal={arXiv preprint arXiv:2505.22312},
  year={2025}
}

@misc{skywork-or1-2025,
  title={{Skywork Open Reasoner Series}},
  author = {He, Jujie and Liu, Jiacai and Liu, Chris Yuhao and Yan, Rui and Wang, Chaojie and Cheng, Peng and Zhang, Xiaoyu and Zhang, Fuxiang and Xu, Jiacheng and Shen, Wei and Li, Siyuan and Zeng, Liang and Wei, Tianwen and Cheng, Cheng and Liu, Yang and Zhou, Yahui},
  howpublished={\url{https://capricious-hydrogen-41c.notion.site/Skywork-Open-Reaonser-Series-1d0bc9ae823a80459b46c149e4f51680}},
  note={Notion Blog},
  year={2025}
}

@article{deepseekai2025deepseekr1incentivizingreasoningcapability,
    title = {{DeepSeek-R1} incentivizes reasoning in {LLM}s through reinforcement learning},
    author = {{DeepSeek-AI}},
    journal = {Nature},
    volume = {645},
    number = {8081},
    pages = {633--638},
    year = {2025},
    publisher = {Nature Publishing Group},
    doi = {10.1038/s41586-025-09422-z},
    url = {https://www.nature.com/articles/s41586-025-09422-z},
}

@misc{openai2025gptoss120bgptoss20bmodel,
      title={{gpt-oss-120b \& gpt-oss-20b Model Card}},
      author={OpenAI},
      year={2025},
      eprint={2508.10925},
      archivePrefix={arXiv},
      primaryClass={cs.CL},
      url={https://arxiv.org/abs/2508.10925},
}

@inproceedings{cai2026tmlr, title = {{T2MLR: Transformer with Temporal Middle-Layer Recurrence}}, author = {Ziyang Cai and Xingyu Zhu and Yihe Dong and Yinghui He and Sanjeev Arora}, booktitle = {LIT Workshop @ ICLR 2026}, year = {2026}, url = {https://openreview.net/forum?id=fQbk1EQWBO}}

@inproceedings{sahoo2026when, title = {{When Shallow Wins: Silent Failures and the Depth-Accuracy Paradox in Latent Reasoning}}, author = {Subramanyam Sahoo and Aman Chadha and Vinija Jain and Divya Chaudhary}, booktitle = {LIT Workshop @ ICLR 2026}, year = {2026}, url = {https://arxiv.org/abs/2603.03475}}

@inproceedings{li2026learning, title = {{Learning Multi-step Reasoning via Persistent Latent State Propagation}}, author = {Yinxi Li and Jiaao Chen and Fang Wu and Jiakai Yu and Heli Qi and Weihao Xuan and Haokai Zhao and Pengyu Nie and Di Jin and Xiangru Tang}, booktitle = {LIT Workshop @ ICLR 2026}, year = {2026}, url = {https://openreview.net/forum?id=Dcv4B1UCuW}}

@inproceedings{sheshanarayana2026thinking, title = {{Thinking in Latents: Adaptive Anchor Refinement for Implicit Reasoning in LLMs}}, author = {Disha Sheshanarayana and Rajat Subhra Pal and Manjira Sinha and Tirthankar Dasgupta}, booktitle = {LIT Workshop @ ICLR 2026}, year = {2026}, url = {https://arxiv.org/abs/2603.15051}}

@inproceedings{helff2026activationreasoning,
    title = {ActivationReasoning: Logical Reasoning in Latent Activation Spaces},
    author = {Lukas Helff and Ruben H{\"a}rle and Wolfgang Stammer and Felix Friedrich and Manuel Brack and Antonia W{\"u}st and Hikaru Shindo and Patrick Schramowski and Kristian Kersting},
    booktitle = {The Fourteenth International Conference on Learning Representations},
    year = {2026},
    url = {https://openreview.net/forum?id=gGJh5AZTG7},
}

@inproceedings{rizvi-martel2026the, title = {{The Illusion of Superposition in Latent CoT via Soft Thinking}}, author = {Michael Rizvi-Martel and Marius Mosbach}, booktitle = {LIT Workshop @ ICLR 2026}, year = {2026}, url = {https://openreview.net/forum?id=FvPx9Nzvnw}}

@inproceedings{ye2026latentchem, title = {{LatentChem: From Textual CoT to Latent Thinking in Chemical Reasoning}}, author = {Xinwu Ye and Yicheng Mao and Jia Zhang and Yimeng Liu and Li Hao and Fang Wu and Zhiwei Li and Yuxuan Liao and Zehong Wang and Yingcheng Wu and Zhiyuan Liu and Zhenfei Yin and Li Yuan and Philip Torr and Huan Sun and Xiangxiang Zeng and Mengdi Wang and Le Cong and Shenghua Gao and Xiangru Tang}, booktitle = {LIT Workshop @ ICLR 2026}, year = {2026}, url = {https://arxiv.org/abs/2602.07075}}

@inproceedings{lugoloobi2026llms, title = {{LLMs Encode Their Failures: Predicting Success from Pre-Generation Activations}}, author = {William Lugoloobi and Thomas Foster and William Bankes and Chris Russell}, booktitle = {LIT Workshop @ ICLR 2026}, year = {2026}, url = {https://arxiv.org/abs/2602.09924}}

@inproceedings{li2026dynamics, title = {{Dynamics Within Latent Chain-of-Thought: An Empirical Study of Causal Structure}}, author = {Zirui Li and Xuefeng Bai and Kehai Chen and Yizhi Li and Jian Yang and Chenghua Lin and Min Zhang}, booktitle = {LIT Workshop @ ICLR 2026}, year = {2026}, url = {https://arxiv.org/abs/2602.08783}}

@inproceedings{wang2026tiny, title = {{Tiny Recursive Reasoning with Mamba-2 Attention Hybrid}}, author = {Wenlong Wang and Fergal Reid}, booktitle = {LIT Workshop @ ICLR 2026}, year = {2026}, url = {https://arxiv.org/abs/2602.12078}}

@inproceedings{dilgren2026are, title = {{Are Latent Reasoning Models Easily Interpretable?}}, author = {Connor Dilgren and Sarah Wiegreffe}, booktitle = {LIT Workshop @ ICLR 2026}, year = {2026}, url = {https://arxiv.org/abs/2604.04902}}

@inproceedings{cui2026how, title = {{How Do Latent Reasoning Methods Perform Under Weak and Strong Supervision?}}, author = {Yingqian Cui and Zhenwei Dai and Bing He and Zhan Shi and Hui Liu and Rui Sun and Zhiji Liu and Yue Xing and Jiliang Tang and Benoit Dumoulin}, booktitle = {LIT Workshop @ ICLR 2026}, year = {2026}, url = {https://arxiv.org/abs/2602.22441}}

@inproceedings{ye2026mechanistic, title = {{Mechanistic Evidence for Faithfulness Decay in Chain-of-Thought Reasoning}}, author = {Donald Ye and Max Loffgren and Om Kotadia and Linus Wong}, booktitle = {LIT Workshop @ ICLR 2026}, year = {2026}, url = {https://arxiv.org/abs/2602.11201}}

@inproceedings{assadi2026hume, title = {{HUME}: Measuring the Human-Model Performance Gap in Text Embedding Tasks}, author = {Adnan El Assadi and Isaac Chung and Roman Solomatin and Niklas Muennighoff and Kenneth Enevoldsen}, booktitle = {The Fourteenth International Conference on Learning Representations}, year = {2026}, url = {https://openreview.net/forum?id=rcmfu1ydAf}}

@inproceedings{massoli2026reasoning, title = {{Reasoning as Compression: Unifying Budget Forcing via the Conditional Information Bottleneck}}, author = {Fabio Valerio Massoli and Andrey Kuzmin and Arash Behboodi}, booktitle = {The 1st Workshop on Scaling Post-training for LLMs}, year = {2026}, url = {https://openreview.net/forum?id=98sbP0T8ck}}

@inproceedings{conklin2026learning, title = {{Learning is Forgetting; LLM Training As Lossy Compression}}, author = {Henry Conklin and Tom Hosking and Tan Yi-Chern and Jonathan D. Cohen and Sarah-Jane Leslie and Thomas L. Griffiths and Max Bartolo and Seraphina Goldfarb-Tarrant}, booktitle = {The Fourteenth International Conference on Learning Representations}, year = {2026}, url = {https://openreview.net/forum?id=tvDlQj0GZB}}

@misc{cencerrado2026answerneededpredictingllm,
      title={No Answer Needed: Predicting LLM Answer Accuracy from Question-Only Linear Probes},
      author={Iván Vicente Moreno Cencerrado and Arnau Padrés Masdemont and Anton Gonzalvez Hawthorne and David Demitri Africa and Lorenzo Pacchiardi},
      year={2026},
      eprint={2509.10625},
      archivePrefix={arXiv},
      primaryClass={cs.CL},
      url={https://arxiv.org/abs/2509.10625},
}

@inproceedings{kuhn2023semantic,
  title     = {Semantic Uncertainty: Linguistic Invariances for Uncertainty Estimation in Natural Language Generation},
  author    = {Lorenz Kuhn and Yarin Gal and Sebastian Farquhar},
  booktitle = {The Eleventh International Conference on Learning Representations},
  year      = {2023},
  url       = {https://openreview.net/forum?id=VD-AYtP0dve},
}

@inproceedings{zhou2026geometryreasoning, title = {{The Geometry of Reasoning: Flowing Logics in Representation Space}}, author = {Yufa Zhou and Yixiao Wang and Xunjian Yin and Shuyan Zhou and Anru Zhang}, booktitle = {The Fourteenth International Conference on Learning Representations}, year = {2026}, url = {https://openreview.net/forum?id=ixr5Pcabq7}}

@inproceedings{zhan2026real, title = {{REAL: Reading Out Transformer Activations for Precise Localization in Language Model Steering}}, author = {Li-Ming Zhan and Bo Liu and Yujie Feng and Chengqiang Xie and Jiannong Cao and Xiao-Ming Wu}, booktitle = {The Fourteenth International Conference on Learning Representations}, year = {2026}, url = {https://openreview.net/forum?id=P38RYdkFLI}}

@inproceedings{zhu2026exploring, title = {{Exploring Diverse Generation Paths via Inference-time Stiefel Activation Steering}}, author = {Dongxuan Zhu and Ly Tran Ho Khanh and Andy Yat-Ming Cheung and Man-Chung Yue and Viet Anh Nguyen}, booktitle = {The Fourteenth International Conference on Learning Representations}, year = {2026}, url = {https://openreview.net/forum?id=v0QOVSVPtq}}

@inproceedings{prasanth2026enforcing, title = {{Enforcing Logical Invariance in Large Language Models via Symmetry Pair Training}}, author = {Prasanth}, booktitle = {ICLR 2026 Workshop on Logical Reasoning of Large Language Models}, year = {2026}, url = {https://openreview.net/forum?id=aZFS8rc6Bf}}

@inproceedings{fadeeva2026dont, title = {{Don't Throw Away Your Beams: Improving Consistency-based Uncertainties in LLMs via Beam Search}}, author = {Ekaterina Fadeeva and Maiya Goloburda and Aleksandr Rubashevskii and Roman Vashurin and Artem Shelmanov and Preslav Nakov and Mrinmaya Sachan and Maxim Panov}, booktitle = {The Fourteenth International Conference on Learning Representations}, year = {2026}, url = {https://openreview.net/forum?id=igcQRiVlgu}}

@inproceedings{salhan2026do, title = {{Do Monolingual Language Models Learn Cross-Lingual Universal Conceptual Representations?}}, author = {Suchir Salhan and Ej Zhou and Paula Buttery}, booktitle = {ICLR 2026 Workshop on Unifying Concept Representation Learning}, year = {2026}, url = {https://openreview.net/forum?id=frKa6ujOyE}}

@inproceedings{park2024linear,
  title={{The Linear Representation Hypothesis and the Geometry of Large Language Models}},
  author={Park, Kiho and Choe, Yo Joong and Veitch, Victor},
  booktitle={Proceedings of the 41st International Conference on Machine Learning},
  series={Proceedings of Machine Learning Research},
  volume={235},
  pages={39643--39666},
  year={2024},
  publisher={PMLR},
  url={https://proceedings.mlr.press/v235/park24c.html}
}

@inproceedings{barak2022hidden,
  title={{Hidden Progress in Deep Learning: SGD Learns Parities Near the Computational Limit}},
  author={Barak, Boaz and Edelman, Benjamin L. and Goel, Surbhi and Kakade, Sham and Malach, Eran and Zhang, Cyril},
  booktitle={Advances in Neural Information Processing Systems},
  volume={35},
  year={2022},
  url={https://proceedings.neurips.cc/paper_files/paper/2022/hash/884baf65392170763b27c914087bde01-Abstract-Conference.html}
}

@inproceedings{ethayarajh2019contextual,
  title     = {{How Contextual are Contextualized Word Representations? Comparing the Geometry of BERT, ELMo, and GPT-2 Embeddings}},
  author    = {Ethayarajh, Kawin},
  booktitle = {Proceedings of the 2019 Conference on Empirical Methods in Natural Language Processing and the 9th International Joint Conference on Natural Language Processing ({EMNLP}-{IJCNLP})},
  pages     = {55--65},
  year      = {2019},
  doi       = {10.18653/v1/D19-1006},
  url       = {https://aclanthology.org/D19-1006}
}

@inproceedings{godey2024anisotropy,
  title     = {{Anisotropy Is Inherent to Self-Attention in Transformers}},
  author    = {Godey, Nathan and de la Clergerie, {\'E}ric and Sagot, Beno{\^\i}t},
  booktitle = {Proceedings of the 18th Conference of the European Chapter of the Association for Computational Linguistics ({EACL}) (Volume 1: Long Papers)},
  pages     = {35--48},
  year      = {2024},
  eprint    = {2401.12143},
  archivePrefix = {arXiv},
  primaryClass  = {cs.CL},
  url       = {https://arxiv.org/abs/2401.12143}
}

@article{litwinkumar2017optimal,
  title   = {{Optimal Degrees of Synaptic Connectivity}},
  author  = {Litwin-Kumar, Ashok and Harris, Kameron Decker and Axel, Richard and Sompolinsky, Haim and Abbott, L. F.},
  journal = {Neuron},
  volume  = {93},
  number  = {5},
  pages   = {1153--1164.e7},
  year    = {2017},
  doi     = {10.1016/j.neuron.2017.01.030}
}

@misc{recanatesi2019dimensionality,
  title         = {{Dimensionality compression and expansion in Deep Neural Networks}},
  author        = {Recanatesi, Stefano and Farrell, Matthew and Advani, Madhu and Moore, Timothy and Lajoie, Guillaume and Shea-Brown, Eric},
  year          = {2019},
  eprint        = {1906.00443},
  archivePrefix = {arXiv},
  primaryClass  = {cs.LG},
  url           = {https://arxiv.org/abs/1906.00443}
}

@inproceedings{wu2018unsupervised,
  title     = {{Unsupervised Feature Learning via Non-Parametric Instance Discrimination}},
  author    = {Wu, Zhirong and Xiong, Yuanjun and Yu, Stella X. and Lin, Dahua},
  booktitle = {Proceedings of the {IEEE} Conference on Computer Vision and Pattern Recognition ({CVPR})},
  pages     = {3733--3742},
  year      = {2018}
}

@inproceedings{caron2021emerging,
  title     = {{Emerging Properties in Self-Supervised Vision Transformers}},
  author    = {Caron, Mathilde and Touvron, Hugo and Misra, Ishan and J{\'e}gou, Herv{\'e} and Mairal, Julien and Bojanowski, Piotr and Joulin, Armand},
  booktitle = {Proceedings of the {IEEE}/{CVF} International Conference on Computer Vision ({ICCV})},
  pages     = {9650--9660},
  year      = {2021},
  eprint    = {2104.14294},
  archivePrefix = {arXiv},
  primaryClass  = {cs.CV},
  url       = {https://arxiv.org/abs/2104.14294}
}

@inproceedings{barber2003variational,
  author    = {Barber, David and Agakov, Felix},
  title     = {The {IM} algorithm: a variational approach to Information Maximization},
  year      = {2003},
  publisher = {{MIT} Press},
  address   = {Cambridge, MA, USA},
  booktitle = {Proceedings of the 17th International Conference on Neural Information Processing Systems},
  pages     = {201--208},
  numpages  = {8},
  location  = {Whistler, British Columbia, Canada},
  series    = {{NIPS}'03}
}

@misc{babakhin2025llamaembednemotron8buniversaltextembedding,
      title={Llama-Embed-Nemotron-8B: A Universal Text Embedding Model for Multilingual and Cross-Lingual Tasks},
      author={Yauhen Babakhin and Radek Osmulski and Ronay Ak and Gabriel Moreira and Mengyao Xu and Benedikt Schifferer and Bo Liu and Even Oldridge},
      year={2025},
      eprint={2511.07025},
      archivePrefix={arXiv},
      primaryClass={cs.CL},
      url={https://arxiv.org/abs/2511.07025},
}

@misc{chen2025reasoninglanguagecomprehensivesurvey,
      title={Reasoning Beyond Language: A Comprehensive Survey on Latent Chain-of-Thought Reasoning},
      author={Xinghao Chen and Anhao Zhao and Heming Xia and Xuan Lu and Hanlin Wang and Yanjun Chen and Wei Zhang and Jian Wang and Wenjie Li and Xiaoyu Shen},
      year={2025},
      eprint={2505.16782},
      archivePrefix={arXiv},
      primaryClass={cs.CL},
      url={https://arxiv.org/abs/2505.16782},
}

@misc{zou2026the,
      title={The Theoretical Benefits and Limitations of Latent Chain-of-Thought Reasoning},
      author={Jiaxuan Zou and Yaozhong Xiong and Yong Liu},
      year={2026},
      url={https://openreview.net/forum?id=q7Nhu2Fw11},
}

\newpage
\appendix
\appendix
\onecolumn
\etocdepthtag.toc{appendix}

\etocsettagdepth{main}{none}
\etocsettagdepth{appendix}{subsection}
\etocsettocstyle{\section*{\centering Appendix Table of Contents}}{\bigskip}
\tableofcontents

\section{Extended Related Work}
\label{appendix:extended_related_work}

\textbf{Surveys and theoretical analyses.}
A growing set of surveys systematically categorises the emerging space of latent and continuous reasoning methods. \citet{chen2025reasoninglanguagecomprehensivesurvey} provide a comprehensive taxonomy organising approaches along two axes: token-wise horizontal methods that replace discrete tokens with continuous counterparts, and layer-wise vertical methods that propagate latent state across transformer depth. \citet{zou2026the} offer a theoretical characterisation of the fundamental exploration--execution trade-off between discrete and continuous reasoning, proving that discrete chain-of-thought is forced into a high-certainty regime while continuous representations enable exploration at the cost of amplified noise on computational tasks; they introduce the Symbolic Index as a scalar measure of decisional certainty that governs this trade-off. \citet{mondorf2024accuracyevaluatingreasoningbehavior} survey evaluation practices for reasoning in LLMs, arguing that downstream accuracy conflates reasoning quality with surface-level pattern matching. \citet{sui2025stopoverthinkingsurveyefficient} and \citet{feng2025efficientreasoningmodelssurvey} survey efficient reasoning from the angle of compute cost and length budgets. Together these works catalogue methods and their downstream trade-offs, whereas our framework evaluates the intrinsic representational quality of the intermediate state independently of the decoding strategy or task accuracy.

\textbf{Latent world models, monitoring, and localization.}
Beyond reasoning benchmarks, a parallel line of work demonstrates that LLMs build structured internal models of the world as they process language. \citet{feng2025monitoring} show that latent world states can be extracted as structured propositions via propositional probes, providing direct empirical grounding for our Separability axiom: the hidden states must contain linearly decodable information about the semantic state of the world. \citet{Du_2025} find that human-like object concept representations emerge naturally in multimodal LLMs without explicit supervision, consistent with the view that the internal geometry supports rich semantic structure. Localizing where this structure resides has received substantial attention. \citet{bandarkar2025layerswappingzeroshotcrosslingual} show that swapping transformer layer ranges transfers cross-lingual knowledge across models, and \citet{alabi-etal-2024-hidden} find that adaptation concentrates in early-to-middle adapter layers, both implying semantically meaningful computations are localized at identifiable depths. \citet{sun2025transformerlayerspainters} show that middle layers can be reordered or skipped with minimal accuracy cost, consistent with mid-depth representations sharing structural properties across depth. \citet{ameisen2025circuit} trace computational circuits through residual stream contributions and demonstrate that feature-level computations shift dynamically across layers depending on the prompt, directly motivating why a fixed layer selection strategy for thought representation extraction is insufficient and why our framework tests multiple candidate extraction points.

\textbf{Sentence-level and continuous-space representations.}
A separate family of methods pursues meaning representation at the sentence level rather than the token level. Large Concept Models~\citep{lcm2024} reformulate language modelling as prediction in a sentence embedding space rather than over token vocabularies, leveraging SONAR~\citep{duquenne2023sonarsentencelevelmultimodallanguageagnostic} as the shared multilingual sentence encoder. SONAR-LLM~\citep{dragunov2025sonarllm} extends this further with an autoregressive transformer that reasons in sentence embedding space and decodes back to tokens, operating at a granularity coarser than individual tokens but finer than document-level representations. These methods design representations at a fixed semantic granularity chosen a priori. Our framework is granularity-agnostic and instead asks whether representations produced at any granularity by an existing model satisfy the four axioms, making it applicable to sentence-level methods as a diagnostic tool as well as to the token-level iterative candidates we evaluate here.

\section{Formal Analysis of the Axiomatic System}
\label{appendix:theoretical_properties}

In this section, we provide the theoretical justification for the four axiomatic properties defined in Section \ref{methodology}. We use the same notation as the main text. Appendix-specific symbols are introduced where they first appear. We formally establish the logical framework of the set $\mathcal{P} = \{\text{Causality}, \text{Minimality}, \text{Separability}, \text{Stability}\}$.

\subsection{Consistency (Existence)}
\label{app:consistency}

We first analyze the logical consistency of $\mathcal{P}$. In empirical, frozen language models trained via next-token prediction, there exists an inherent tension between Causality (which requires $\mathbf{T}$ to perfectly mimic the lexically-entangled predictive state of $\mathcal{M}_\theta$) and Stability/Minimality (which require $\mathbf{T}$ to discard lexical information). 

\begin{proposition}
The set of properties $\mathcal{P}$ is logically consistent. There exists an idealized model configuration $\mathcal{M}^*$ and representation $\mathbf{T}^*$ that satisfies all four axioms simultaneously.
\end{proposition}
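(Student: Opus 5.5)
We prove the proposition by an explicit construction, the idealized one-hot semantic bottleneck mentioned in the main text, and then verify each axiom in turn. Fix a model $\mathcal{M}^*$ whose generation factors through the semantic map $\Phi$. Concretely, assume the reachable semantic manifold $\mathcal{S}_{\mathcal{M}}=\{\Phi(y): P(y\mid x)>0 \text{ for some } x\}$ is finite, say $\{s_1,\dots,s_K\}$. Given $x$, $\mathcal{M}^*$ first computes the semantic distribution $\pi(x)=P(\mathcal{S}\mid x)\in\Delta^{K-1}$. It then emits $y$ by drawing $s\sim\pi(x)$ and realizing $s$ through a fixed lexical channel $Q(y\mid s)$ that is independent of $x$. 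We set $\mathbf{T}^*=g(x):=\pi(x)$. In the deterministic-semantics case $\pi(x)$ is a one-hot vector $e_{\Phi(\mathcal{M}^*(x))}$, hence the name. We also build the reasoning prefix so that $y_{\mathrm{pre}}$ affects $y_{\mathrm{suf}}$ only through $\Phi$. Equivalently, the continuation kernel is $P(y_{\mathrm{suf}}\mid y_{\mathrm{pre}})=R(y_{\mathrm{suf}}\mid \Phi(y_{\mathrm{pre}}))$, and $\mathcal{M}^*$ accepts $\mathbf{T}$ at the substituted positions and routes it to $R$ through an embedding of the simplex. This is where the tension noted above, between Causality and Stability/Minimality, is removed: in $\mathcal{M}^*$ the lexical entanglement is absent by design.

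The axioms are then verified in order.

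\begin{enumerate}
\item \textbf{Causality.} Substituting $\mathbf{T}^*$ feeds the same semantic state into $R$, so $P(y_{\mathrm{suf}}\mid \mathbf{T}^*)=P(y_{\mathrm{suf}}\mid y_{\mathrm{pre}})$ and the KL in \cref{eq:causality-error} is exactly $0$.
\item \textbf{Minimality.} Since $\mathbf{T}^*$ is a function of $X$, the Markov chain $Y - \pi(X) - X$ holds: $Y$ depends on $X$ only through $\pi(X)$. Hence $\mathbf{T}^*$ is a minimal sufficient statistic of $X$ for $Y$, with $I(\mathbf{T}^*;Y)=I(X;Y)$. Any $\mathbf{T}$ achieving this must determine $\pi(X)$ almost surely, so $I(X;\mathbf{T})\ge I(X;\mathbf{T}^*)$ by data processing. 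For $\beta\ge 1$ this makes $\mathbf{T}^*$ a minimizer of \cref{eq:minimality-axiom} among representations at full relevance. We restrict to that sufficiency-constrained form of the IB, which is the constrained reading the text states.
\item \textbf{Separability.} Suppose $x_1,x_2$ have disjoint high-probability semantic supports. Then $\pi(x_1)$ and $\pi(x_2)$ have disjoint supports and satisfy $\|\pi(x_1)-\pi(x_2)\|_1=2$. A linear map $\phi$, namely the identity into $\mathbb{R}^K$ composed with a linear head, separates them by margin $\delta=\sqrt2$. Conversely, $x$'s with equal semantic outputs receive equal $\mathbf{T}^*$, which matches \cref{def:thought-mapping}.
\item \textbf{Stability.} The entropy of $\pi(x)$ is read off $\mathbf{T}^*$ directly, which gives mode-collapse resistance. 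Because $g$ depends only on $x$ and $\pi$ is defined on semantic classes, semantically equivalent siblings $y_i\sim_{sem}y_j$ induce the identical $\mathbf{T}$, which gives lexical invariance.
\end{enumerate}

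The main obstacle is Minimality. The Lagrangian in \cref{eq:minimality-axiom} with finite $\beta$ can favour lossy $\mathbf{T}$ that sacrifice relevance, and such a $\mathbf{T}$ would break Causality. We handle this as follows. We interpret the axiom as the IB at the sufficiency constraint; equivalently, we take $\beta$ large enough, or note that in the one-hot deterministic case $I(X;\mathbf{T}^*)=H(\Phi(Y))=I(\mathbf{T}^*;Y)$. In that case the objective is $(1-\beta)H(\Phi(Y))$, and we argue this is optimal by showing that any $\mathbf{T}$ coarser than $\pi$ loses relevance at least as fast as it loses input information.

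A secondary concern is whether the Separability hypothesis class $\mathcal{H}$ contains the required $\phi$. The identity-plus-linear-head choice above settles this, since it is exactly the linear form specified for $\mathcal{H}$.
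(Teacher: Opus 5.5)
\textbf{There is a genuine gap: your Causality step is false for the construction as you state it.} It fails whenever $\pi(x)$ is not a vertex of the simplex. Because $\mathbf{T}^*=\pi(x)$ depends on $x$ alone, every beam drawn from $x$ receives the same substituted vector. So every beam gets one and the same continuation law $P(y_{\mathrm{suf}}\mid\mathbf{T}^*)$, whatever routing you choose. Your target kernel $R(y_{\mathrm{suf}}\mid\Phi(y_{\mathrm{pre}}))$, however, depends on the class actually realized in that beam.

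Suppose two classes $s_1\neq s_2$ in the support of $\pi(x)$ have $R(\cdot\mid s_1)\neq R(\cdot\mid s_2)$, as with the Yes/No beams in the Stability example. A single distribution cannot match both, so the Causality KL is strictly positive on at least one beam. If you route an interior point to the mixture $\sum_s\pi_s(x)R(\cdot\mid s)$, the beam-averaged error is exactly $I(S;Y_{\mathrm{suf}}\mid X=x)$. That vanishes only if the continuation ignores the sampled class.

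Your sentence ``substituting $\mathbf{T}^*$ feeds the same semantic state into $R$'' silently replaces the distribution $\pi(x)$ by the vertex $e_{\Phi(y_{\mathrm{pre}})}$. This is precisely the tension you say you removed. With $g$ a function of $x$ only, exact per-beam Causality and a non-vacuous mode-collapse test pull against each other. Choosing $\mathbf{T}^*=\pi(x)$ buys the second at the expense of the first. Your margin $\delta=\sqrt2$ is likewise correct only for one-hot vectors.

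\textbf{The repair is to make the deterministic case the operative construction: $\pi(x)=e_{s(x)}$.} You mention this case only in passing. With it:
\begin{itemize}
\item Your Causality step is correct, given the by-fiat routing of $e_s$ to $R(\cdot\mid s)$, which is the same convention the paper adopts.
\item Orthogonal one-hots give Separability with margin $\sqrt2$.
\item Lexical invariance holds because $g$ ignores $y$.
\item Mode-collapse resistance holds only vacuously, since $H_x=0$ for every input.
\end{itemize}
This is exactly the paper's one-hot semantic bottleneck proof, which states that vacuity openly.

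Your Minimality argument is then slightly stronger than the paper's, which only compares against deterministic sufficient statistics. To make your sketch rigorous, take any $T$ with $I(Y;T\mid X)=0$. The chain rule gives $I(X;T)-\beta I(T;Y)=(1-\beta)I(T;Y)+I(X;T\mid Y)\ge(1-\beta)I(X;Y)$ when $\beta\ge1$, using data processing for $I(T;Y)\le I(X;Y)$. Equality holds at $\mathbf{T}^*$: it is sufficient, and it is a function of $\Phi(Y)$, so $I(X;\mathbf{T}^*\mid Y)=0$.
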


\begin{proof}
To prove strict logical consistency, we construct an idealized model $\mathcal{M}^*$ whose intermediate representations perfectly disentangle semantics from syntax.
Let $\mathcal{S}_{\mathcal{M}}$ be a discrete semantic space. Let $\mathcal{M}^*$ be constructed such that its forward pass strictly factors through a one-hot semantic bottleneck before projecting to vocabulary logits, a construction closely related to the discrete latent bottlenecks used in recent latent reasoning architectures~\citep{deng2025latentreasoningllmsvocabularyspace}. Let $\mathbf{T}^*$ be the one-hot activation at this bottleneck.
\begin{enumerate}
    \item \textbf{Causality:} Because $\mathcal{M}^*$ explicitly generates $Z$ conditioned entirely on this semantic bottleneck, substituting the explicit sequence $Y$ with $\mathbf{T}^*$ yields identical downstream logits, so $D_{\text{KL}}(P_{\mathcal{M}^*}(Z \mid Y) \parallel P_{\mathcal{M}^*}(Z \mid \mathbf{T}^*)) = 0$, matching the formal causality criterion of \cref{methodology}.
    \item \textbf{Minimality:} By construction $Y$ depends on $X$ only through $\mathbf{T}^*$, so $X \to \mathbf{T}^* \to Y$ forms a Markov chain and $\mathbf{T}^*$ is sufficient with $I(\mathbf{T}^*; Y) = I(X; Y)$. For any deterministic sufficient statistic $T = T(X)$, $I(X; T) = H(T) \geq I(T; Y) = I(X; Y)$, with equality precisely when $T$ is a bijective function of $\mathbf{T}^*$. Hence $\mathbf{T}^*$ attains the minimum of $I(X; T)$ over all sufficient $T$ and saturates the information-bottleneck objective. This is an existence claim about the idealized configuration $\mathcal{M}^*$. The empirical question of how closely a given frozen $\mathcal{M}_\theta$ approaches this construction is what \cref{sec:results} measures.
    \item \textbf{Separability:} Distinct semantic intents map to orthogonal one-hot vectors, perfectly resolvable by any linear projection $\phi \in \mathcal{H}$.
    \item \textbf{Stability:} Semantically equivalent outputs map deterministically to the same one-hot bottleneck, ensuring perfect lexical invariance at the output level. Mode Collapse Resistance holds vacuously in $\mathcal{M}^*$: by construction, each input $x$ induces a deterministic semantic class, so $P(Y \mid x)$ is a point mass with $H_x = 0$. There is no distributional uncertainty to represent or collapse, and the one-hot encoding faithfully reflects this zero-entropy output distribution.
\end{enumerate}
Because this configuration satisfies $\mathcal{P}$ without contradiction, the system is logically consistent.
\end{proof}

\begin{remark}
For an arbitrary frozen model $\mathcal{M}_\theta$ with lexical-semantic entanglement, the four axioms define a Pareto-optimal frontier rather than a strict intersection. In empirical settings $\mathbf{T}$ acts as an $\epsilon$-optimal approximation that bounds the tradeoff between causal exactness and semantic stability, consistent with prior evidence that intermediate layers balance information compression against signal preservation~\citep{skean2025layerlayeruncoveringhidden,shani2025tokensthoughtsllmshumans}. We quantify this frontier empirically in \cref{sec:results}.
\end{remark}

\subsection{Independence (Non-Redundancy)}
\label{app:independence}

\begin{proposition}
The axioms in $\mathcal{P}$ are logically independent. No property can be derived solely from the conjunction of the others.
\end{proposition}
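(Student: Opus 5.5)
The plan is the standard one for independence of an axiom system: for each axiom $A \in \mathcal{P}$, build a counter-model, i.e.\ a pair $(\mathcal{M}, \mathbf{T})$ that satisfies the other three axioms and violates $A$. If such a model exists, $A$ cannot follow from the conjunction of the rest. As the main text announces, this needs four counter-models. Each one will be a controlled perturbation of the one-hot bottleneck construction $(\mathcal{M}^*, \mathbf{T}^*)$ from the consistency proposition. That way three axioms are inherited directly from that proof, and only the fourth has to be checked afresh.

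\textbf{Violating Causality.} Take $\mathbf{T}^*$ as is. Change how the model reads it: the substituted-$\mathbf{T}$ pathway of $\mathcal{M}$ maps each one-hot vector to a fixed wrong semantic class, say by a cyclic permutation $\pi$ of $\mathcal{S}_\mathcal{M}$. The explicit prefix $Y$ keeps its original pathway. Then $P(Z\mid \mathbf{T}) \neq P(Z\mid Y)$, so the KL divergence is strictly positive. The representation itself is unchanged, so it is still a minimal sufficient statistic, its classes are still orthogonal, and it is still invariant under paraphrase.

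\textbf{Violating Minimality.} Let $\mathbf{T} = (\mathbf{T}^*, \nu(X))$, where $\nu(X)$ is a nuisance feature of the input, such as a lexical hash, that the output does not depend on. The model reads only the $\mathbf{T}^*$ block. Causality and Separability carry over, and a linear projection onto the first block recovers them. In the bound $I(X;T)=H(T) \ge I(X;Y)$ from the consistency proof the inequality is now strict, so the information-bottleneck optimum is not attained. \emph{Main obstacle:} Stability only talks about outputs $y_i \sim_{sem} y_j$ of the \emph{same} input $x$. So $\nu(X)$ must be a function of the input alone, and the sibling outputs must all share the same $\mathbf{T}$. I will state this restriction explicitly, so that the nuisance cannot accidentally break lexical invariance.

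\textbf{Violating Separability.} Embed the one-hot classes nonlinearly, for example as points placed so that no linear map $\phi \in \mathcal{H}$ separates them (an XOR configuration on four classes). The model decodes them with a nonlinear readout. Injectivity is kept, so Causality, Minimality, since $\mathbf{T}$ is still a bijective function of $\mathbf{T}^*$, and Stability all hold. But no $\phi$ in the bounded class $\mathcal{H}$ achieves a margin $\delta$, so the separability requirement fails.

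\textbf{Violating Stability.} Make the representation depend on the realized surface form, $\mathbf{T}_y = (\mathbf{T}^*, \lambda(y))$. The model's readout of the extra coordinate is ignored, and $\lambda(y)$ is generated by the model as part of its output, so $\mathbf{T}$ stays a function of what is actually decoded. Two paraphrases $y_1 \sim_{sem} y_2$ then get $\mathbf{T}_{y_1}\neq\mathbf{T}_{y_2}$, which breaks lexical invariance. Causality and Separability are unaffected. For Minimality I will check, or else adjust the construction, that $\lambda$ carries no information about $X$ beyond what $\mathbf{T}^*$ already carries. For instance, $\lambda$ can be independent of $X$ given $Y$'s semantic class, which keeps $I(X;\mathbf{T})=I(X;\mathbf{T}^*)$.

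An alternative that avoids this last subtlety is to break mode-collapse resistance instead: give $\mathcal{M}$ a stochastic bottleneck with $H_x>0$ but still report the argmax one-hot. I would fall back to this version if the lexical one clashes with Minimality.
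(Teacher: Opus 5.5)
Your counter-models for Causality, Minimality and Stability are essentially the paper's. For Causality you permute the model's readout of $\mathbf{T}^*$ instead of permuting $\mathbf{T}^*$ against a frozen head. That is equivalent, and it spares you the paper's frozen-head versus refit-probe asymmetry.

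The gap is in the Separability case. The XOR obstruction is a property of a \emph{binary labelling} of four points: it needs two classes, each carried by two codes on opposite diagonals. You instead keep $\mathbf{T}$ a bijective function of $\mathbf{T}^*$, so each of your four classes is a single point. Distinct points are always linearly resolvable. Codes $a\neq b$ are separated by $t\mapsto c\,(b-a)^\top t$ with gap $c\|b-a\|^2$. All four corners are even classified correctly at once by the nearest-centroid rule $\arg\max_k (2v_k^\top t-\|v_k\|^2)$, which is an affine head. So your $\mathbf{T}$ satisfies Separability both under the per-pair quantifier of \cref{eq:separability-axiom} and under a single multiclass probe, and the construction violates nothing. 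Injectivity on semantic classes and linear non-separability are incompatible unless $\mathcal{H}$ carries a norm bound.

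This is why the paper's Case~3 uses two classes with codes $R_0=\{(0,0,\mathbf{0}),(1,1,\mathbf{0})\}$ and $R_1=\{(0,1,\mathbf{0}),(1,0,\mathbf{0})\}$, with $\mathcal{M}^*$ computing the parity before decoding. It also reads Separability as requiring one $\phi$ to split $R_0$ from $R_1$; under the literal per-pair quantifier even that construction would not violate the axiom. The price is exactly the step you claim for free. $\mathbf{T}$ is no longer a function of the class, so $I(X;\mathbf{T})$ exceeds the bottleneck minimum by up to one bit, and the paper claims Minimality only approximately.

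To repair your case you have two options.
\begin{itemize}
\item Adopt the paper's construction and accept, or justify, the one-bit Minimality deficit.
\item Keep a bijective code and use boundedness of $\mathcal{H}$. For example, take $\mathbf{T}=\epsilon\mathbf{T}^*$, with the unconstrained generation head rescaling by $1/\epsilon$, so that any $L$-Lipschitz $\phi$ gives separation at most $L\epsilon\sqrt{2}<\delta$. This version is exact, but it needs a norm bound written into $\mathcal{H}$, which the paper's definition (linear projection plus linear head) does not make explicit.
\end{itemize}

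Finally, drop your argmax fallback for Stability. Reporting the argmax of a stochastic bottleneck also breaks Causality, because a beam from a non-argmax mode continues differently from the argmax code. It also breaks sufficiency, because inputs that share an argmax but differ in $P(Y\mid x)$ collapse to the same code. Your lexical-flag construction, which is the paper's, is the one that works.
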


\begin{proof}
We prove independence by constructing a theoretical counter-model for each property $p \in \mathcal{P}$ that satisfies $\mathcal{P} \setminus \{p\}$ but violates $p$. Let $\mathbf{T}^*$ be an optimal representation satisfying all axioms in an idealized network.

\paragraph{Case 1: Violation of Causality.}
Construct $\mathbf{T} = \pi(\mathbf{T}^*)$, where $\pi$ is a fixed, random permutation of the coordinate dimensions of $\mathbf{T}^*$. Throughout this case the generation head of $\mathcal{M}^*$ is held frozen, while the Separability probe $\phi \in \mathcal{H}$ may be re-fit on the new representation. This asymmetry reflects how each axiom is operationalised in \cref{methodology} (probes test latent geometry and are trained per representation, whereas Causality tests substitution into a fixed model whose head is never retrained).
\begin{itemize}
    \item \textbf{Satisfies Minimality:} A permutation is a bijection on coordinates, so $I(X; \mathbf{T}) = I(X; \pi(\mathbf{T}^*)) = I(X; \mathbf{T}^*)$ (mutual information is invariant under invertible deterministic transformations), and the information-bottleneck objective is preserved exactly.
    \item \textbf{Satisfies Separability:} Permutations are orthogonal transformations and preserve inner products. Since the Separability probe $\phi \in \mathcal{H}$ is trained on whichever representation is presented, it can absorb $\pi^{-1}$ in its first linear layer at no cost in norm or expressive power, recovering the separability of $\mathbf{T}^*$.
    \item \textbf{Satisfies Stability:} The permutation is deterministic and depends only on coordinate indices, not on the output, so $\mathbf{T}^*_{y_1} = \mathbf{T}^*_{y_2} \implies \pi(\mathbf{T}^*_{y_1}) = \pi(\mathbf{T}^*_{y_2})$, preserving lexical invariance.
    \item \textbf{Violates Causality:} The frozen generation head expects specific semantic features at specific indices. Because the head is not retrained, it cannot absorb $\pi^{-1}$, so the permuted representation activates wrong indices and is decoded as a different (typically incorrect) semantic class. Consequently $P_{\mathcal{M}^*}(Z \mid \mathbf{T})$ diverges from $P_{\mathcal{M}^*}(Z \mid Y)$ and the Causality KL spikes despite the geometry being preserved.
\end{itemize}

\paragraph{Case 2: Violation of Minimality.}
Construct $\mathbf{T} = [\mathbf{T}^*, \xi(X)]$, where $\xi(X)$ is a high-entropy nuisance vector (e.g., a hash of the exact character sequence of $X$) that is independent of the semantic class. We assume the downstream generation head structurally ignores the dimensions of $\xi(X)$ (e.g., via an attention mask that zeroes them out), so that nuisance information is appended without affecting prediction.
\begin{itemize}
    \item \textbf{Satisfies Causality:} Since the head ignores $\xi(X)$, the conditional distribution $P_{\mathcal{M}^*}(Z \mid \mathbf{T})$ depends only on the $\mathbf{T}^*$ component and matches $P_{\mathcal{M}^*}(Z \mid \mathbf{T}^*) = P_{\mathcal{M}^*}(Z \mid Y)$ exactly.
    \item \textbf{Satisfies Separability:} A linear classifier in $\mathcal{H}$ can place zero weight on the $\xi(X)$ coordinates and recover the same decision boundary as on $\mathbf{T}^*$, so the Separability margin $\delta$ is preserved.
    \item \textbf{Satisfies Stability:} Stability requires lexical invariance among sibling outputs drawn from the same input $x$. Because $\xi$ is a function of the input alone, all sibling outputs $y_i \sim_{sem} y_j$ generated from the same $x$ inherit an identical $\xi(x)$, so their representations agree.
    \item \textbf{Violates Minimality:} Independence of $\xi(X)$ from $\mathbf{T}^*$ gives $I(X; \mathbf{T}) = I(X; \mathbf{T}^*) + H(\xi(X) \mid \mathbf{T}^*) = I(X; \mathbf{T}^*) + H(\xi(X))$, so $I(X; \mathbf{T})$ exceeds the bottleneck minimum by $\Theta(H(\xi(X)))$ bits, which can be made arbitrarily large by lengthening the hash. The representation therefore carries strictly more information about $X$ than any sufficient statistic.
\end{itemize}

\paragraph{Case 3: Violation of Separability.}
\emph{Scope of $\mathcal{H}$ for this case.} The Independence argument below analyzes Separability under the trainable component of $\mathcal{H}$ and treats the frozen LLM in the empirical realization of $f_\mathrm{disc}$ (\cref{appendix:sub:disc_arch}) as a fixed feature kernel outside the optimized class. This scoping aligns with the linear representation hypothesis~\citep{park2024linear}, under which high-level semantic concepts are linearly decodable from frozen LLM activations. Because empirical $f_\mathrm{disc}$ has access to LLM-induced nonlinear features that the theoretical $\mathcal{H}$ does not, the empirical test is more permissive. An empirical Separability failure implies a theoretical violation under $\mathcal{H}$, but the converse does not hold. The construction below therefore establishes Independence at the theoretical level and a conservative refinement at the empirical level.

Fix two semantic classes $s \in \{0, 1\}$. Encode the representations such that class $0$ maps to $R_0 = \{(0,0,\mathbf{0}), (1,1,\mathbf{0})\} \subset \{0,1\}^d$ and class $1$ maps to $R_1 = \{(0,1,\mathbf{0}), (1,0,\mathbf{0})\}$, where $\mathbf{0}$ is the zero vector in $\{0,1\}^{d-2}$. The four vertices are the XOR configuration of the unit square padded into the ambient $d$-cube. The deep layers of $\mathcal{M}^*$ are parameterized to compute the XOR of the first two coordinates before decoding.
\begin{itemize}
    \item \textbf{Satisfies Causality:} The deep network applies the parity function before generation, so $\mathbf{T}$ matches the intermediate distribution the deep head expects and $P_{\mathcal{M}^*}(Z \mid \mathbf{T}) = P_{\mathcal{M}^*}(Z \mid Y)$.
    \item \textbf{Approximately satisfies Minimality:} The representation occupies four configurations for two classes, contributing a fixed overhead of one bit above the minimum-sufficient encoding of the binary class label. This is a constant, content-independent cost: $I(X; \mathbf{T}) = I(X; \mathbf{T}^*) + 1\,\text{bit}$, where the extra bit encodes within-class position but carries no additional information about $X$ beyond the class label. The violation of strict Minimality is bounded and does not grow with the semantic complexity of $X$; for the purposes of the independence argument, this $\epsilon$-deviation from strict Minimality is the same approximation acknowledged in the Remark following \cref{app:consistency}.
    \item \textbf{Satisfies Stability:} Within-class pairs satisfy $\mathbf{E}_{ij} = 1$ in the DCS equivalence matrix (\cref{methodology}) and the corresponding latent representations differ by Hamming distance at most two. The operational $\approx$ tolerance the Stability axiom permits is therefore satisfied.
    \item \textbf{Violates Separability:} The bounded hypothesis class $\mathcal{H}$ defined in \cref{methodology} consists of a linear projection into an embedding space followed by a linear head~\citep{park2024linear}. The composition of two linear maps is itself linear, so every $\phi \in \mathcal{H}$ realises a single halfspace decision $\mathrm{sign}(w^\top \mathbf{T} + b)$ over $\mathbf{T}$. We show no such halfspace separates $R_0$ from $R_1$. Requiring positive decisions on $R_0$ forces $b > 0$ and $w_1 + w_2 + b > 0$, while requiring negative decisions on $R_1$ forces $w_1 + b < 0$ and $w_2 + b < 0$. Summing the last two yields $w_1 + w_2 < -2b$, which contradicts $w_1 + w_2 + b > 0$ whenever $b > 0$. The classes are XOR-configured and provably outside the linear hypothesis class. Beyond non-realizability, gradient-based training of any such $\phi$ on parity targets needs sample complexity scaling as $\Omega(d^2)$ before any nontrivial correlation emerges~\citep{barak2022hidden}, so even the relaxed empirical version of Separability fails. No $\phi \in \mathcal{H}$ therefore attains the required margin $\delta$.
\end{itemize}

\paragraph{Case 4: Violation of Stability.}
Under \cref{def:thought-mapping}, $g$ is a deterministic function of $x$ alone, which means every beam drawn from the same input shares the same representation and Lexical Invariance is satisfied trivially. To show that Stability is not logically entailed by the remaining three axioms, we therefore work in the natural extended domain where the generator may depend on the generation trajectory: $g: \mathcal{X} \times \mathcal{Y} \to \mathcal{T}$. This extension captures stochastic extraction methods (e.g., beam-specific random seeds) and allows Lexical Invariance to be a nontrivial requirement. The independence argument below shows that even in this richer domain the other three axioms do not force Lexical Invariance. $\mathbf{T}_y$ denotes the representation value associated with output $y$ in this counter-model; $\mathbf{T}^*$ remains the input-determined component from \cref{def:thought-mapping}, and $\alpha_\mathrm{lex}(y)$ is the output-dependent augmentation whose role this case isolates.

Construct $\mathbf{T}_y = [\mathbf{T}^*, \alpha_{lex}(y)]$, where $\alpha_{lex}(y) \in \{0, \lambda\}$ is a single massive scalar that flags a trivial lexical feature of the output sequence $y$, for concreteness the presence of a trailing space. We assume the downstream generation head structurally drops this augmented index. This assumption isolates Stability cleanly. Causality, Minimality, and Separability are all preserved precisely because the offending coordinate is suppressed at inference. The case is informative for representations consumed by models without such suppression which is exactly the empirical setting we evaluate, where the frozen LLM has no architectural mechanism to know which coordinate carries lexical noise. Under those conditions, even a single uncontrolled lexical-flag dimension destroys output-level invariance.
\begin{itemize}
    \item \textbf{Satisfies Causality:} The augmented index is dropped prior to prediction, so the causal logits remain identical to those of $\mathbf{T}^*$.
    \item \textbf{Satisfies Minimality:} By construction $Y \perp X \mid \mathbf{T}^*$, so $\alpha_\mathrm{lex}(Y) \perp X \mid \mathbf{T}^*$ and $I(X; \mathbf{T}) = I(X; \mathbf{T}^*)$ exactly. Minimality is preserved without overhead.
    \item \textbf{Satisfies Separability:} Any $\phi \in \mathcal{H}$ can place zero weight on the augmented coordinate and recover the same decision boundary as on $\mathbf{T}^*$, so the Separability margin $\delta$ is preserved.
    \item \textbf{Violates Stability:} Two semantically equivalent outputs $y_i \sim_{sem} y_j$ that differ only in whether they end in a trailing space acquire $\alpha_{lex}(y_i) \neq \alpha_{lex}(y_j)$. Their representations are then separated by Euclidean distance $\lambda$, which can be made arbitrarily large, fundamentally destroying the output-level lexical invariance the axiom requires.
\end{itemize}

Therefore, no axiom logically entails another.
\end{proof}

\subsection{Completeness (Sufficiency)}
\label{app:completeness}

\begin{proposition}
The set of properties $\mathcal{P}$ is complete with respect to the definition of a Functional Thought Representation. Any representation $\mathbf{T}$ strictly satisfying $\mathcal{P}$ establishes a well-defined functional isomorphism between the reachable latent space $\mathcal{T}$ and the reachable semantic manifold $\mathcal{S}_{\mathcal{M}}$.
\end{proposition}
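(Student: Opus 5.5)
The plan is to build an explicit map $\psi:\mathcal{T}\to\mathcal{S}_{\mathcal{M}}$ on the reachable latent space $\mathcal{T}=g(\mathcal{X})$ and show that it is a well-defined bijection which commutes with the model. Here $\mathcal{S}_{\mathcal{M}}=\Phi(\mathcal{M}(\mathcal{X}))$. Each axiom will supply exactly one ingredient of this isomorphism. The starting point is \cref{def:thought-mapping}: $g$ induces the equivalence relation $x_i\sim x_j \iff \mathcal{M}(x_i)=\mathcal{M}(x_j)$. So the natural candidate is $\psi(g(x)) := \Phi(\mathcal{M}(x))$, and the main work is to check that strict satisfaction of $\mathcal{P}$ forces this candidate to be well defined and injective, rather than simply assuming the biconditional.

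\textbf{Well-definedness and functoriality} will come from Causality and Stability. If $g(x_i)=g(x_j)$, strict Causality gives $D_{\mathrm{KL}}(P(Z\mid Y)\,\|\,P(Z\mid\mathbf{T}))=0$. The continuation distribution is therefore a function of $\mathbf{T}$ alone, and the two inputs induce identical output distributions. Stability handles the remaining ambiguity from lexical variation and sampling: semantically equivalent siblings share $\mathbf{T}$, and $\mathbf{T}$ encodes $P(\mathcal{S}\mid x)$ rather than a single realization. Together these give $\Phi(\mathcal{M}(x_i))=\Phi(\mathcal{M}(x_j))$, so $\psi$ does not depend on the chosen preimage. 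They also give the commuting relation $\psi\circ g=\Phi\circ\mathcal{M}$.

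\textbf{Injectivity} will come from Separability, via its contrapositive. Suppose $\Phi(\mathcal{M}(x_1))\neq\Phi(\mathcal{M}(x_2))$. Then the high-probability semantic sets are disjoint, and \cref{eq:separability-axiom} supplies $\phi\in\mathcal{H}$ with $d_\mathcal{S}(\phi(\mathbf{T}_{x_1}),\phi(\mathbf{T}_{x_2}))>\delta>0$. Hence $\mathbf{T}_{x_1}\neq\mathbf{T}_{x_2}$. \textbf{Surjectivity} onto $\mathcal{S}_{\mathcal{M}}$ follows from restricting to reachable sets, since every $s\in\mathcal{S}_{\mathcal{M}}$ equals $\Phi(\mathcal{M}(x))=\psi(g(x))$ for some $x$.

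Minimality is what rules out redundant latent states, which would make $\mathcal{T}$ strictly larger than the quotient $\mathcal{X}/\!\sim$. I will use the argument from \cref{app:consistency}. Any deterministic sufficient $T$ satisfies $I(X;T)=H(T)\ge I(X;Y)$, with equality only when $T$ is a bijective function of the minimal sufficient statistic. So a minimizer of \cref{eq:minimality-axiom} in the sufficient regime carries no distinctions beyond the semantic class. This shows that $\mathcal{T}$ is, up to relabelling, exactly the set of semantic classes. It is also what makes the correspondence canonical rather than merely an injection from a larger set.

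The step I expect to be the main obstacle is Stability's mode-collapse clause when $P(Y\mid x)$ is not a point mass. In that case $\mathcal{M}(x)$ is a distribution, and "semantic output" must mean the pushforward $\Phi_\#P(Y\mid x)$ rather than a single point. I would handle this by redefining $\mathcal{S}_{\mathcal{M}}$ as the set of reachable semantic distributions and running the same three steps. Causality at zero KL already identifies distributions, so well-definedness survives. Injectivity then needs Separability to apply to inputs with non-identical semantic distributions; I would take that as the intended reading of "semantically nonequivalent output distributions" in the Separability axiom.
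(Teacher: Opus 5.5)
There is a genuine gap: you define $\psi$ in the direction $\mathcal{T}\to\mathcal{S}_{\mathcal{M}}$ but assign the axioms as if it ran the other way. For your map $\psi(g(x))=\Phi(\mathcal{M}(x))$, well-definedness is the implication $g(x_1)=g(x_2)\Rightarrow\Phi(\mathcal{M}(x_1))=\Phi(\mathcal{M}(x_2))$, and injectivity is its converse.

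Your ``injectivity'' paragraph proves that distinct classes yield distinct $\mathbf{T}$. That is the contrapositive of \emph{well-definedness}, not injectivity. The Separability requirement you invoke concerns only pairs with disjoint semantic sets. It therefore cannot show that two inputs in the same class share a representation, so injectivity is never established at that point.

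Your well-definedness paragraph also fails. If $\mathbf{T}_{x_1}=\mathbf{T}_{x_2}$, zero Causality error gives only $P(Z\mid Y_1)=P(Z\mid Y_2)$. That is an equality of continuation distributions, and it says nothing about $\Phi(Y_1)$ versus $\Phi(Y_2)$: two semantically different completed outputs can both be followed by end-of-sequence with probability one. Stability does not help either, because it constrains only sibling outputs of a single input and so cannot link $x_1$ to $x_2$.

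The inconsistency shows in your Minimality paragraph. There you use Minimality to rule out $\mathcal{T}$ being strictly larger than the set of classes. An injective and surjective $\psi:\mathcal{T}\to\mathcal{S}_{\mathcal{M}}$ would already exclude that.

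The fix uses pieces you already have:
\begin{itemize}
\item Your Separability argument is the well-definedness step.
\item Your Minimality argument is the injectivity step: a minimal sufficient $\mathbf{T}$ makes no distinctions beyond the semantic class. This needs $Y$ read as the semantic output $\Phi(Y)$; otherwise the minimal sufficient statistic separates lexical variants coming from different inputs.
\item Surjectivity follows from reachability.
\item Causality belongs at the end, as the upgrade from a set bijection to a functional one: $\psi^{-1}(s)$ substitutes for $Y$ with the same downstream distribution.
\end{itemize}

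Repaired this way, your proof is the paper's argument run in reverse. The paper defines $\psi:\mathcal{S}_{\mathcal{M}}\to\mathcal{T}$ by $\psi(s)=\mathbf{T}_y$ for a representative $y$. In that direction, Stability and Minimality give well-definedness, Separability genuinely gives injectivity, Minimality together with $\mathcal{T}=g(\mathcal{X})$ gives surjectivity, and Causality gives functional equivalence. Your treatment of non-degenerate $P(Y\mid x)$ through pushforward semantic distributions goes beyond the paper's proof, which does not address that case.
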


\begin{proof}
For each semantic class $s \in \mathcal{S}_{\mathcal{M}}$, choose a high-probability output $y \in \mathcal{Y}$ with $\Phi(y) = s$, and define $\psi : \mathcal{S}_{\mathcal{M}} \to \mathcal{T}$ by $\psi(s) = \mathbf{T}_y$. We verify that $\psi$ is a bijection onto $\mathcal{T}$.
\begin{enumerate}
    \item \textbf{Well-Defined (Stability and Minimality):} Fix a representative $y$ with $\Phi(y) = s$ to define $\psi(s) = \mathbf{T}_y$. For any other $y'$ with $\Phi(y') = s$, we need $\mathbf{T}_{y'} = \mathbf{T}_y$. If $y'$ is a sibling output drawn from the same input $x$ as $y$, strict Stability (Lexical Invariance) gives $\mathbf{T}_{y'} = \mathbf{T}_y$ directly. If $y'$ is drawn from a different input, strict Minimality forces $\mathbf{T}$ to be a function of $s$ alone, so $\mathbf{T}_{y'} = \mathbf{T}_y$ again. Hence $\psi(s)$ is well-defined.
    \item \textbf{Injective (Separability):} For distinct classes $s_1 \neq s_2$ with disjoint high-probability semantic spaces, Separability supplies some $\phi \in \mathcal{H}$ that resolves $\psi(s_1)$ from $\psi(s_2)$ with margin $\delta > 0$. In particular $\psi(s_1) \neq \psi(s_2)$, so $\psi$ is injective.
    \item \textbf{Surjective (Minimality and the generator definition):} The reachable latent $\mathcal{T}$ is defined as the image $g(\mathcal{X})$, so every $\mathbf{T} \in \mathcal{T}$ equals $g(x)$ for some input $x$, which induces a semantic class $\Phi(\mathcal{M}(x)) \in \mathcal{S}_{\mathcal{M}}$. Minimality (in the idealized limit clarified in the Remark below) additionally forces $\mathbf{T}$ to carry no information about $X$ beyond a sufficient statistic for the semantic class, because any excess would raise $I(X; \mathbf{T})$ above the bottleneck minimum. Consequently $g(x)$ coincides with $\mathbf{T}_y$ whenever $\Phi(\mathcal{M}(x)) = \Phi(y)$, so $\mathbf{T} = \psi(\Phi(\mathcal{M}(x)))$ and $\psi$ surjects onto $\mathcal{T}$.
    \item \textbf{Functional Equivalence (Causality):} Causality upgrades the set-theoretic bijection to a functional one. Each $\mathbf{T} = \psi(s)$ substitutes for the explicit output sequence $Y$ inside $\mathcal{M}_\theta$ and induces the same downstream distribution $P_{\theta}(Z \mid \mathbf{T}) = P_{\theta}(Z \mid Y)$.
\end{enumerate}
Because $\psi$ is well-defined, injective, surjective onto $\mathcal{T}$, and functionally equivalent to the explicit generative pathway, it realizes a bijection $\mathcal{T} \cong \mathcal{S}_{\mathcal{M}}$. The four axioms therefore fully determine the structural isomorphism, and no further axiom is required.
\end{proof}

\begin{remark}
The term "completeness" here refers to adequacy of the axiom set in the categoricity sense. No additional axiom is required beyond $\mathcal{P}$ to pin down the functional isomorphism up to relabeling of semantic classes. The bijection is strict in the idealized limit where Stability holds with equality and the Separability margin is unbounded. Under the empirical $\approx$ tolerance of Stability and the finite $\delta$-margin of Separability, the bijection becomes approximate and is quantified in \cref{sec:results}.
\end{remark}
\section{Training Details}
\label{appendix:sub:train_details}

\subsection{LLM Data Generation}
The generator models, benchmark, beam count, and maximum generation length
are defined in \cref{sec:exp_setup}; all models are loaded in their native precision (bfloat16). From each beam we extract hidden states at every decoding step
across all layers, and take the last token of the prefill step as the
primary thought representation (position $-1$, decoding step $0$, all
layers). The per-model layer count is 33, 81, 65, 65, and 25 for
Llama-3.1 8B, Llama-3.3 70B, DeepSeek-R1-Distill-Qwen 32B, Skywork-OR1 32B, and
GPT-OSS 20B respectively.

\subsection{Soft Thinking and Latent Thinking Generation}
\label{appendix:sub:soft_latent_formula}
Soft Thinking (No Noise) replaces discrete decoding with the
weighted combination of token embeddings $\mathbf{T} = \sum_v p_v \mathbf{e}_v$
where $p_v = \mathrm{softmax}(z)_v$ and $\mathbf{e}_v$ is the token
embedding~\citep{zhang2025softthinkingunlockingreasoning}.
Soft Thinking with Gumbel Noise applies $\hat{p}_v \propto \exp((z_v + \epsilon_v)/\tau)$
with $\epsilon_v \sim \mathrm{Gumbel}(0,1)$ and temperature
$\tau=1.0$~\citep{wu2026llms}.
Latent Thinking applies recurrent hidden state updates using
a protocol similar to COCONUT \citep{hao2024training}, as implemented in \citet{zou2025latentcollaborationmultiagentsystems}.
For all iterative methods, we evaluate steps
$s \in \{1, 16, 32, 64, 128\}$.

\paragraph{Determinism under stochastic extraction.}
The idealized mapping $g$ in \cref{def:thought-mapping} is defined deterministically. Extraction methods that include stochastic components, such as Gumbel noise in Soft Thinking, fix a global random seed across all evaluations, so each input deterministically produces the same $\mathbf{T}$. The Gumbel perturbation is therefore a property of the extraction procedure rather than a source of representational randomness, and $g$ remains well-defined as a deterministic function of $x$.

\subsection{Causality Evaluation Protocol}
\label{appendix:sub:causality_eval}
The causality evaluation does not involve training.
Given a test problem with $K=8$ beams $\{y_k\}_{k=1}^K$ and
corresponding thought representations $\{\mathbf{T}_k\}_{k=1}^K$,
we compute:
\begin{enumerate}[nosep,leftmargin=1.5em]
  \item \textbf{Prefix--suffix split}: Let $y_{\mathrm{suf}}$ denote the last 50 tokens
        of $y_k$ (the answer suffix) and $y_{\mathrm{pre}}$ the preceding tokens (the
        reasoning prefix), so that $y_k = [y_{\mathrm{pre}},\, y_{\mathrm{suf}}]$.
        Beams shorter than 51 tokens are excluded from this evaluation.
  \item \textbf{Baseline distribution}: Run the evaluation backbone on the explicit
        token embeddings of $y_{\mathrm{pre}}$ to obtain
        $P(y_{\mathrm{suf}} \mid y_{\mathrm{pre}})$ at the $y_{\mathrm{suf}}$ positions
        via teacher forcing.
  \item \textbf{Intervened distribution}: Replace the prefix embeddings
        with the projected thought representation $\mathbf{T}_k$ to
        obtain $P(y_{\mathrm{suf}} \mid \mathbf{T}_k)$ at the same positions. The
        projection is taken from an output-reconstruction projection trained on the
        source LLM's output sequences via cross-entropy loss (same training splits as
        the Minimality probe); see \cref{appendix:sub:causality_minproj}
        for the ablation comparing this choice against the discriminator projection.
        The evaluation backbone is LLaMA-3.2-1B~\citep{grattafiori2024llama3herdmodels}
        with its parameters held fixed; the projection learns to map $\mathbf{T}$ into
        the backbone's embedding space so that the KL measures whether $\mathbf{T}$
        induces the same functional generative effect on $y_{\mathrm{suf}}$ as
        $y_{\mathrm{pre}}$, rather than raw cross-model transferability.
  \item \textbf{KL divergence}: Compute
        $D_\mathrm{KL}\!\left(P(y_{\mathrm{suf}} \mid y_{\mathrm{pre}}) \,\|\,
        P(y_{\mathrm{suf}} \mid \mathbf{T}_k)\right)$
        averaged over $y_{\mathrm{suf}}$ positions.
\end{enumerate}
\paragraph{Position indices under tiling.}
Because all candidates are tiled to the same substitution length ($128$ positions), the absolute position indices seen by $y_{\mathrm{suf}}$ under LLaMA-3.2-1B's Rotary Position Embeddings are identical across every candidate for a given problem. Any KL inflation arising from the difference between the tiled length and the natural prefix length $|y_{\mathrm{pre}}|$ is therefore a constant offset shared by every candidate and does not affect their relative ordering.
This is performed on the held-out test split (\cref{appendix:dataset}),
minus beams excluded by the length gate above.
We report the mean KL across all valid beam--problem pairs.

\subsection{Minimality Probe Architecture and Training}
Two probes are trained under the same architecture, an output-reconstruction probe
estimating $\mathrm{CE}(Y \mid \mathbf{T})$ and a conditional input probe
estimating $\mathrm{CE}(X \mid Y, \mathbf{T})$.
Both probes tile $\mathbf{T}$ to 128 positions.
The probe maps $\mathbf{T}$ through a parameter-free LayerNorm, a learned linear
projection ($d_{\mathrm{thought}} \to 2048$), a second LayerNorm, and a learned
position-independent offset before prepending the result to the token embedding
sequence of the target or conditioning text, which is processed by the frozen backbone.
Training uses AdamW with a cosine learning-rate schedule
($\mathrm{lr}_{\max} = 5 \times 10^{-5}$, 1 epoch, warmup steps $= 20$),
batch size 64, and cross-entropy loss over the tokenized target sequence.

\subsection{Discriminator Architecture and Training}
\label{appendix:sub:disc_arch}
The discriminator $f_\mathrm{disc}(\mathbf{T}, Y)$ applies a parameter-free LayerNorm
to $\mathbf{T}$, projects it with a learned linear map ($d_{\mathrm{thought}} \to 2048$),
and passes the result through a second LayerNorm.
The input sequence to the frozen backbone is formed by concatenating the token embeddings
of $Y$, a learned separator embedding, the projected thought vectors, and a learned CLS
token.
The hidden state at the CLS position is classified by a two-layer head consisting of
Linear($2048 \to 1024$), LayerNorm, ReLU, Dropout($0.1$), and Linear($1024 \to 1$).
This pattern of training a binary classifier on projected latent
activations as a read-out mechanism for behavioral signals inside
transformer models is consistent with recent work on activation
read-outs across LLM families~\citep{zhan2026real}.
Training uses binary cross-entropy loss with AdamW,
learning rate $1 \times 10^{-4}$, batch size 64, and 1 epoch.

\paragraph{Hyperparameter provenance.} The probe and discriminator configurations reported in this section, including the projection dimensions, the LLaMA-3.2-1B backbone, and the optimizer settings, were selected from a small set of variants explored during initial pilots. The values listed above are fixed across every source LLM and every candidate thought representation reported in this paper. No hyperparameter was tuned on the held-out test split.

\subsection{Stability Sub-Properties and DCS Diagnostics}
\label{appendix:sub:dcs_tau}

\paragraph{Sub-property coverage.} Candidate representations fall into two empirical categories on the lexical-invariance axis. The Last Input Token, Pooled Output Embedding, Input Embedding, and Random Vector candidates, together with the iterative thinking families (Soft Thinking with and without Gumbel noise, Latent Thinking), produce a single vector per question regardless of which beam is drawn, so $\mathbf{T}_i = \mathbf{T}_j$ for every beam pair $(i,j)$ within the same question and lexical invariance holds by construction for these candidates. Mode-collapse resistance is the only Stability sub-property that admits a nontrivial test for them. The Exact Output Embedding is per-beam, with $\mathbf{T}_i = \mathrm{emb}(y_i)$ derived from each beam's generated text independently. For DCS it is aggregated per-question. Because $H_x$ is computed from pairwise cosine similarities between these same Nemotron output embeddings, the DCS score for the Exact Output Embedding confirms the alignment of the metric with the output distribution rather than a representational property under evaluation.

\paragraph{Input-embedding baseline.} We report the embedding of the question text as a question-difficulty baseline alongside all thought representations. Because $H_x$ is computed in the same embedding space, this input embedding benefits from a structural alignment with the label that model-derived representations do not share. Where it matches or exceeds thought representations, distributional uncertainty is largely predictable from the question text alone, which constitutes a finding about model capability rather than a flaw of the metric (see \cref{sec:results}).

\paragraph{GPT-OSS-20B as an MoE outlier.} GPT-OSS-20B yields a non-singleton semantic cluster across beams on only $1.0\%$ of questions at $\tau = 0.9$, against $16$--$46\%$ for the four dense source LLMs. The Random Vector baseline drifts above $0.5$ on the same model because the positive class is extremely small. The pattern is consistent with Top-$K$ MoE routing acting as a re-convergence force on diverging beams, an architectural feature absent in the four dense source LLMs that pulls beam representations back toward each other after they begin to differ. Bespoke generation techniques tailored to MoE architectures may surface divergent semantic outputs but were avoided in this audit to keep the protocol consistent across source LLMs.

\paragraph{Threshold sweep.} \Cref{fig:dcs-stability-tau} sweeps the semantic equivalence threshold $\tau$ over $\{0.70, 0.80, 0.85, 0.90, 0.95\}$ for one representative per thought-representation family across all five LLMs.
All other quantities are held fixed. Only the binarization of the cosine similarity matrix used to compute $H_x$ varies.
Rankings are stable across the full range. Families that score above the random-vector baseline at the main-text threshold do so at every other threshold, and families that score near the random baseline do so uniformly.

\citet{cencerrado2026answerneededpredictingllm} demonstrate that when $H_x$ is linearly decodable from a representation via a difference-of-means probe, the uncertainty signal is encoded within it.
We note that the converse does not necessarily follow, since a representation may encode $H_x$ through a non-linear structure that a linear probe cannot detect and would in that case score near the random baseline despite carrying the relevant information.
DCS scores near the random baseline are therefore evidence against linear encoding of distributional uncertainty rather than definitive evidence against encoding in any form.

\begin{figure}[t]
  \centering
  \includegraphics[width=\linewidth]{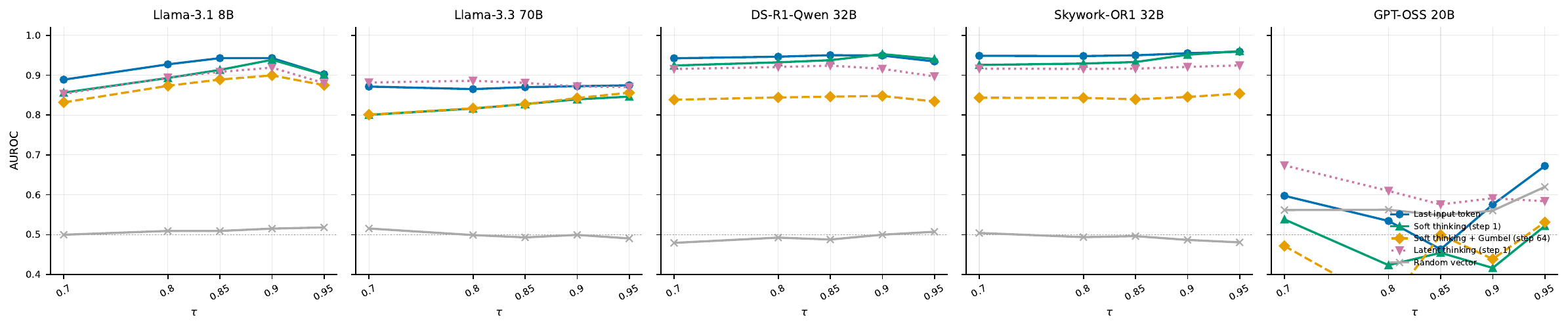}
  \caption{DCS versus the semantic equivalence threshold $\tau$ for one representative
    per thought-representation family across all five LLMs.
    Rankings are stable across the full range shown.}
  \label{fig:dcs-stability-tau}
\end{figure}

\subsection{Reproducibility and Code Release}
\label{appendix:sub:reproducibility}

The full evaluation pipeline is publicly available at \url{https://fard-lab.github.io/formalize-thoughts} under the MIT license. The repository is organized as a Hydra project with one runnable script per phase and a shared configuration tree. Every random seed used in the pipeline is pinned in the corresponding Hydra config so that any individual phase can be re-executed without further parameter passing. Source LLMs are downloaded from public HuggingFace repositories at the identifiers given in \cref{sec:exp_setup}, BBEH is taken from the official Google DeepMind release under the Apache~2.0 license, the LLaMA-3.2-1B backbone shared by every probe and discriminator is a public HuggingFace asset, and the text embedder used is \texttt{nvidia/llama-embed-nemotron-8b} on HuggingFace.

\paragraph{Environment.} The repository targets Python~3.12 and CUDA~12.6, with dependencies pinned via \texttt{pyproject.toml} and resolved by \texttt{uv sync}. Optional flash-attention support is documented in the README and is not required to reproduce the reported numbers.

\paragraph{Hardware.} The vast majority of compute was performed on NVIDIA H100~SXM 80\,GB GPUs, including all source-LLM data generation and all discriminator training. NVIDIA A100~40\,GB GPUs were used for a small subset of probe-training and evaluation runs, without changes to the configuration.

\paragraph{Compute totals.} All numbers below are H100-equivalent GPU-hours and are reported per source LLM. LLM data generation varies substantially across tasks and source LLMs, ranging from roughly $12$ to $48$ hours per BBEH task depending on output length and model capacity, and averages near $18$ hours per task for an aggregate of approximately $400$ hours per LLM. Discriminator training comprises $42$ runs per LLM ($21$ candidates each evaluated under same-task and cross-task pairing), with each run taking roughly $3$ to $7$ hours and an aggregate of approximately $210$ hours per LLM. Minimality probe training comprises $42$ runs per LLM ($21$ candidates each producing the $\mathrm{CE}(Y\mid\mathbf{T})$ and $\mathrm{CE}(X\mid Y, \mathbf{T})$ probes that enter $\Delta_{\mathrm{IB}}$), with each run taking roughly $2$ to $4$ hours and an aggregate of approximately $125$ hours per LLM. Causality evaluation runs in roughly $15$ to $20$ minutes per candidate, for an aggregate near $6$ hours per LLM. DCS evaluation runs in roughly $5$ minutes total for all five LLMs combined, since it operates only on cached embeddings and a difference-of-means probe. Aggregating across the five source LLMs, the experiments reported in this paper consumed approximately $3{,}700$ H100-equivalent GPU-hours. The full research project consumed additional compute beyond this figure for pilot architecture and hyperparameter exploration, whose final settings underwrite the configurations recorded in \cref{appendix:sub:train_details,appendix:sub:disc_arch}.

\section{Additional Analysis}
\label{appendix:additional_analysis}

\subsection{Bootstrap Confidence Intervals}
\label{appendix:sub:bootstrap}

All main-table metrics are reported as $\mu \pm \widehat{\sigma}_B$, where
$\mu$ is the cell mean over the held-out test split and $\widehat{\sigma}_B$
is the bootstrap standard error obtained by the percentile method with
$B = 10{,}000$ resamples and a fixed random seed. The statistical
unit of resampling is the \emph{problem}, with replacement from the test split. We then recompute
the cell-level mean on the resampled problems, and take
$\widehat{\sigma}_B$ as the sample standard deviation of the resampled
means. The reported $\pm$ values are therefore $1\sigma$ bootstrap standard
errors, and a $95\%$ interval is recovered as $\mu \pm 1.96\,\widehat{\sigma}_B$
under the normal approximation, or as the $[2.5\%, 97.5\%]$ percentiles of
the bootstrap distribution when that approximation is undesirable.

The axioms differ in how each problem contributes observations, and we
resample at the problem level in every case so that within-problem
beam correlations are preserved.
\begin{itemize}[nosep,leftmargin=1.5em]
  \item \textbf{Minimality input component
        $\text{CE}(X \mid \mathbf{T})$}:
        each problem contributes one per-problem cross-entropy.
        Resample the test split; the effective sample size equals the
        test-split problem count from \cref{appendix:dataset}.
  \item \textbf{Causality} $\mathrm{KL}\!\left(P(y_{\mathrm{suf}} \mid y_{\mathrm{pre}}) \,\|\, P(y_{\mathrm{suf}} \mid \mathbf{T})\right)$:
        cluster bootstrap by problem, with all valid beams of a resampled
        problem kept together. Beams excluded by the $51$-token length
        gate are absent from the resampled mean, and problems whose
        beams are all excluded are absent entirely; on Llama-3.1-8B
        this reduces the effective support to $360$ problems. The paired
        advantage $\Delta^{\mathrm{RV}}_{\mathrm{KL}}$ is resampled on
        the same support with each $(\mathbf{T}, \mathrm{RV})$ pair kept
        glued together (see below).
  \item \textbf{Discriminator-based DCS}: the per-problem
        discriminator-based DCS score, computed over all
        off-diagonal beam pairs in the $K \times K$ within-problem matrix,
        is a single observation per problem. Resample problems, average
        per-problem score.
\end{itemize}

\paragraph{Paired advantage versus the noise floor (Causality).}
Because $\mathrm{KL}$ is unbounded above, absolute values are difficult to
compare across candidate $\mathbf{T}$ without a shared reference. We
therefore compute the paired advantage $\Delta^{\mathrm{RV}}_{\mathrm{KL}}(\mathbf{T})_g = \mathrm{KL}(\mathbf{T})_g - \mathrm{KL}(\mathrm{RV})_g$ per problem $g$, and average over the Causality support defined above.
Pairing on the problem cancels per-problem shifts in the KL scale (the
dominant source of dispersion in the raw metric), yielding a
sign-interpretable statistic where $\Delta < 0$ indicates $\mathbf{T}$ conveys
information about $y_{\mathrm{suf}}$ beyond the evaluation backbone's projection of noise,
whereas $\Delta \geq 0$ indicates no detectable advantage. The paired
bootstrap resamples problems with each
$(\mathrm{KL}(\mathbf{T})_g,\,\mathrm{KL}(\mathrm{RV})_g)$ pair kept
together, so the resulting standard error quantifies the variance of the
paired difference rather than that of the two unpaired means.
In \cref{tab:causality-results}, each cell displays the mean
$\mathrm{KL}(y_{\mathrm{suf}} \mid \mathbf{T})$ with its marginal bootstrap standard
error.

\subsection{Detailed per-axiom results}
\label{appendix:sub:detailed_results}

\Cref{tab:disc-results,tab:causality-results,tab:minimality-delta-ib}
report the cell-level mean and bootstrap standard error of the
Separability, Causality, and Minimality metrics across the populated
source LLMs. \Cref{fig:dcs-stability} reports the corresponding
Stability AUROC values. Each candidate occupies one column and each
source LLM occupies one row. The cell formatting follows the
methodology of \cref{appendix:sub:bootstrap}.


\begin{table}[!htbp]
  \caption{Discriminator test accuracy (\%) across source LLMs. Each LLM spans two rows, \textbf{Same} (within-task instance discrimination) and \textbf{Cross} (across-task discrimination). A uniform random classifier scores $50\%$ as data is balanced.}
  \label{tab:disc-results}
  \centering
  \footnotesize
  \setlength{\tabcolsep}{1.5pt}
  \renewcommand{\arraystretch}{1.1}
  \resizebox{\textwidth}{!}{%
  \begin{tabular}{@{}l c *{2}{c} !{\vrule} *{2}{c} *{5}{c} *{5}{c} *{5}{c} !{\vrule} *{2}{c}@{}}
    \toprule
    & & \multicolumn{2}{c}{\textit{Output Emb.}}
      & \multicolumn{2}{c}{\textit{Last Input Tok.}}
      & \multicolumn{5}{c}{\textit{Soft Thinking (no noise)}}
      & \multicolumn{5}{c}{\textit{Soft Thinking (Gumbel)}}
      & \multicolumn{5}{c}{\textit{Latent Thinking}}
      & \multicolumn{2}{c}{\textit{Baselines}} \\
    \cmidrule(lr){3-4}\cmidrule(lr){5-6}\cmidrule(lr){7-11}\cmidrule(lr){12-16}\cmidrule(lr){17-21}\cmidrule(lr){22-23}
    \textbf{LLM} & \textbf{Reg.}
      & Exc & Pool
      & All & Final
      & 1 & 16 & 32 & 64 & 128
      & 1 & 16 & 32 & 64 & 128
      & 1 & 16 & 32 & 64 & 128
      & IE & RV \\
    \midrule
    \multirow{2}{*}{Llama-3.1 8B}
      & Same  & \ciSE{68.79}{0.81} & \ciSE{64.33}{0.81} & \ciSE{53.39}{0.62} & \ciSE{53.93}{0.71} & \ciSE{53.15}{0.61} & \ciSE{53.50}{0.64} & \ciSE{54.12}{0.76} & \ciSE{54.74}{0.68} & \ciSE{52.49}{0.61} & \ciSE{52.60}{0.80} & \ciSE{52.60}{0.76} & \ciSE{52.57}{0.66} & \ciSE{53.55}{0.62} & \ciSE{51.02}{0.65} & \ciSE{54.72}{0.83} & \ciSE{53.57}{0.65} & \ciSE{53.00}{0.81} & \ciSE{52.54}{0.67} & \ciSE{52.65}{0.70} & \ciSE{54.52}{0.78} & \ciSE{48.85}{0.60} \\
      & Cross & \ciSE{98.99}{0.19} & \ciSE{98.89}{0.18} & \ciSE{96.09}{0.32} & \ciSE{99.24}{0.13} & \ciSE{85.70}{0.78} & \ciSE{97.43}{0.37} & \ciSE{97.76}{0.30} & \ciSE{98.17}{0.26} & \ciSE{97.70}{0.30} & \ciSE{77.20}{0.98} & \ciSE{96.42}{0.43} & \ciSE{96.07}{0.44} & \ciSE{97.32}{0.32} & \ciSE{97.64}{0.33} & \ciSE{99.32}{0.10} & \ciSE{98.67}{0.18} & \ciSE{98.41}{0.22} & \ciSE{98.71}{0.16} & \ciSE{98.63}{0.19} & \ciSE{98.94}{0.17} & \ciSE{50.50}{0.56} \\
    \midrule
    \multirow{2}{*}{Llama-3.3 70B}
      & Same  & \ciSE{72.62}{0.77} & \ciSE{54.25}{0.52} & \ciSE{51.56}{0.45} & \ciSE{50.08}{0.28} & \ciSE{52.79}{0.66} & \ciSE{52.93}{0.49} & \ciSE{51.55}{0.33} & \ciSE{51.12}{0.58} & \ciSE{51.30}{0.46} & \ciSE{52.83}{0.49} & \ciSE{51.96}{0.45} & \ciSE{50.26}{0.42} & \ciSE{51.94}{0.51} & \ciSE{51.01}{0.35} & \ciSE{50.50}{0.33} & \ciSE{50.50}{0.28} & \ciSE{51.42}{0.47} & \ciSE{51.01}{0.48} & \ciSE{50.73}{0.34} & \ciSE{52.09}{0.54} & \ciSE{49.67}{0.61} \\
      & Cross & \ciSE{98.29}{0.24} & \ciSE{96.42}{0.27} & \ciSE{98.77}{0.14} & \ciSE{98.55}{0.16} & \ciSE{79.33}{0.72} & \ciSE{95.91}{0.41} & \ciSE{92.53}{0.53} & \ciSE{94.15}{0.52} & \ciSE{94.01}{0.46} & \ciSE{74.05}{0.81} & \ciSE{93.67}{0.56} & \ciSE{93.43}{0.50} & \ciSE{90.76}{0.66} & \ciSE{60.76}{0.92} & \ciSE{96.89}{0.25} & \ciSE{94.18}{0.37} & \ciSE{95.82}{0.29} & \ciSE{94.86}{0.34} & \ciSE{92.57}{0.39} & \ciSE{96.25}{0.30} & \ciSE{51.04}{0.60} \\
    \midrule
    \multirow{2}{*}{DS-R1-Qwen 32B}
      & Same  & \ciSE{63.54}{0.72} & \ciSE{63.05}{0.80} & \ciSE{52.13}{0.71} & \ciSE{52.56}{0.81} & \ciSE{53.26}{0.53} & \ciSE{54.45}{0.65} & \ciSE{54.81}{0.76} & \ciSE{53.86}{0.69} & \ciSE{53.42}{0.68} & \ciSE{50.69}{0.59} & \ciSE{51.84}{0.62} & \ciSE{50.90}{0.38} & \ciSE{51.33}{0.52} & \ciSE{51.06}{0.46} & \ciSE{50.33}{0.25} & \ciSE{50.06}{0.27} & \ciSE{49.94}{0.58} & \ciSE{50.21}{0.25} & \ciSE{50.22}{0.35} & \ciSE{53.53}{0.63} & \ciSE{50.28}{0.61} \\
      & Cross & \ciSE{99.18}{0.18} & \ciSE{97.79}{0.27} & \ciSEB{99.24}{0.18} & \ciSE{98.33}{0.28} & \ciSE{89.16}{0.53} & \ciSE{97.25}{0.37} & \ciSE{96.79}{0.42} & \ciSE{98.27}{0.30} & \ciSE{97.90}{0.31} & \ciSE{62.58}{1.13} & \ciSE{91.18}{0.76} & \ciSE{96.03}{0.52} & \ciSE{95.80}{0.50} & \ciSE{95.31}{0.48} & \ciSE{96.09}{0.35} & \ciSE{98.33}{0.22} & \ciSE{97.04}{0.32} & \ciSE{96.28}{0.32} & \ciSE{93.60}{0.48} & \ciSE{96.76}{0.44} & \ciSE{50.22}{0.58} \\
    \midrule
    \multirow{2}{*}{Skywork-OR1 32B}
      & Same  & \ciSE{62.04}{0.90} & \ciSE{63.41}{0.94} & \ciSE{53.26}{0.68} & \ciSE{51.94}{0.71} & \ciSE{53.32}{0.79} & \ciSE{54.19}{0.72} & \ciSE{52.82}{0.67} & \ciSE{52.25}{0.62} & \ciSE{52.64}{0.58} & \ciSE{50.65}{0.72} & \ciSE{51.76}{0.53} & \ciSE{50.58}{0.54} & \ciSE{51.74}{0.64} & \ciSE{50.91}{0.60} & \ciSE{51.23}{0.44} & \ciSE{50.28}{0.38} & \ciSE{50.22}{0.66} & \ciSE{50.12}{0.44} & \ciSE{50.06}{0.53} & \ciSE{54.01}{0.72} & \ciSE{49.85}{0.56} \\
      & Cross & \ciSE{99.41}{0.16} & \ciSE{96.93}{0.36} & \ciSE{99.14}{0.17} & \ciSE{98.95}{0.16} & \ciSE{82.78}{0.74} & \ciSE{98.16}{0.34} & \ciSE{97.57}{0.40} & \ciSE{98.11}{0.32} & \ciSE{98.09}{0.30} & \ciSE{60.20}{1.15} & \ciSE{93.94}{0.65} & \ciSE{95.31}{0.55} & \ciSE{98.16}{0.33} & \ciSE{97.72}{0.32} & \ciSE{74.75}{0.79} & \ciSE{88.97}{0.63} & \ciSE{92.24}{0.64} & \ciSE{91.62}{0.60} & \ciSE{76.92}{0.88} & \ciSE{97.82}{0.31} & \ciSE{50.51}{0.58} \\
    \midrule
    \multirow{2}{*}{GPT-OSS 20B}
      & Same  & \ciSE{59.57}{0.89} & \ciSE{62.38}{0.99} & \ciSE{50.40}{0.62} & \ciSE{50.29}{0.51} & \ciSE{50.46}{0.74} & \ciSE{50.11}{0.50} & \ciSE{50.72}{0.82} & \ciSE{49.71}{0.73} & \ciSE{50.57}{0.94} & \ciSE{49.49}{0.72} & \ciSE{50.57}{0.72} & \ciSE{51.81}{0.67} & \ciSE{51.16}{0.85} & \ciSE{49.97}{0.68} & \ciSE{51.20}{0.70} & \ciSE{50.00}{0.62} & \ciSE{50.64}{0.49} & \ciSE{50.33}{0.49} & \ciSE{49.92}{0.59} & \ciSE{49.47}{0.82} & \ciSE{50.95}{0.57} \\
      & Cross & \ciSE{97.94}{0.38} & \ciSE{98.49}{0.30} & \ciSE{95.19}{0.39} & \ciSE{88.11}{0.65} & \ciSE{76.83}{1.05} & \ciSE{85.11}{0.87} & \ciSE{87.54}{0.84} & \ciSE{60.91}{1.07} & \ciSE{92.22}{0.68} & \ciSE{62.14}{1.18} & \ciSE{82.51}{0.96} & \ciSE{86.06}{0.94} & \ciSE{87.62}{0.88} & \ciSE{82.40}{0.91} & \ciSE{89.10}{0.73} & \ciSE{84.29}{0.80} & \ciSE{90.74}{0.65} & \ciSE{92.12}{0.57} & \ciSE{91.07}{0.60} & \ciSE{97.03}{0.45} & \ciSE{50.53}{0.59} \\
    \bottomrule
  \end{tabular}%
  }
\end{table}


\begin{table}[!htbp]
  \caption{KL divergence $\mathrm{KL}(P(Z \mid Y)\,\|\,P(Z \mid \mathbf{T}))$ ($\downarrow$) across source LLMs. Lower values indicate higher predictive sufficiency of $\mathbf{T}$ for the continuation.}
  \label{tab:causality-results}
  \centering
  \footnotesize
  \setlength{\tabcolsep}{2.5pt}
  \renewcommand{\arraystretch}{1.1}
  \resizebox{\textwidth}{!}{%
  \begin{tabular}{@{}l *{2}{c} !{\vrule} *{2}{c} *{5}{c} *{5}{c} *{5}{c} !{\vrule} *{2}{c}@{}}
    \toprule
    & \multicolumn{2}{c}{\textit{Output Emb.}}
      & \multicolumn{2}{c}{\textit{Last Input Tok.}}
      & \multicolumn{5}{c}{\textit{Soft Thinking (no noise)}}
      & \multicolumn{5}{c}{\textit{Soft Thinking (Gumbel)}}
      & \multicolumn{5}{c}{\textit{Latent Thinking}}
      & \multicolumn{2}{c}{\textit{Baselines}} \\
    \cmidrule(lr){2-3}\cmidrule(lr){4-5}\cmidrule(lr){6-10}\cmidrule(lr){11-15}\cmidrule(lr){16-20}\cmidrule(lr){21-22}
    \textbf{LLM}
      & Exc & Pool
      & All & Final
      & 1 & 16 & 32 & 64 & 128
      & 1 & 16 & 32 & 64 & 128
      & 1 & 16 & 32 & 64 & 128
      & IE & RV \\
    \midrule
    Llama-3.1 8B     & \ciSE{5.25}{0.08} & \ciSE{5.21}{0.08} & \ciSE{5.26}{0.06} & \ciSE{5.01}{0.08} & \ciSE{5.07}{0.08} & \ciSE{5.69}{0.07} & \ciSE{5.51}{0.07} & \ciSE{5.07}{0.06} & \ciSE{4.96}{0.06} & \ciSE{5.20}{0.07} & \ciSE{5.42}{0.07} & \ciSE{5.06}{0.06} & \ciSE{4.70}{0.06} & \ciSE{4.73}{0.06} & \ciSE{5.32}{0.07} & \ciSE{5.79}{0.08} & \ciSE{6.05}{0.08} & \ciSE{6.49}{0.08} & \ciSE{6.17}{0.08} & \ciSE{5.36}{0.07} & \ciSE{9.49}{0.06} \\
    Llama-3.3 70B    & \ciSE{4.56}{0.06} & \ciSE{4.58}{0.05} & \ciSE{6.00}{0.06} & \ciSE{5.28}{0.06} & \ciSE{4.65}{0.05} & \ciSE{5.36}{0.06} & \ciSE{5.63}{0.06} & \ciSE{5.19}{0.05} & \ciSE{5.48}{0.06} & \ciSE{5.08}{0.05} & \ciSE{5.22}{0.06} & \ciSE{5.21}{0.05} & \ciSE{5.34}{0.07} & \ciSE{5.44}{0.06} & \ciSE{4.21}{0.06} & \ciSE{5.17}{0.05} & \ciSE{5.35}{0.05} & \ciSE{5.45}{0.05} & \ciSE{5.88}{0.06} & \ciSE{4.71}{0.06} & \ciSE{8.93}{0.05} \\
    DS-R1-Qwen 32B   & \ciSE{4.67}{0.06} & \ciSE{4.77}{0.07} & \ciSE{6.33}{0.07} & \ciSE{4.79}{0.07} & \ciSE{4.45}{0.07} & \ciSE{5.60}{0.08} & \ciSE{5.74}{0.07} & \ciSE{5.34}{0.07} & \ciSE{4.82}{0.06} & \ciSE{4.57}{0.07} & \ciSE{5.42}{0.06} & \ciSE{5.00}{0.06} & \ciSE{4.91}{0.06} & \ciSE{4.98}{0.06} & \ciSE{4.62}{0.07} & \ciSE{5.41}{0.07} & \ciSE{4.84}{0.07} & \ciSE{5.77}{0.07} & \ciSE{5.36}{0.07} & \ciSE{4.50}{0.07} & \ciSE{9.36}{0.05} \\
    Skywork-OR1 32B  & \ciSE{4.10}{0.06} & \ciSE{4.37}{0.06} & \ciSE{6.39}{0.06} & \ciSE{4.09}{0.06} & \ciSE{3.90}{0.06} & \ciSE{5.07}{0.06} & \ciSE{4.86}{0.05} & \ciSE{4.62}{0.05} & \ciSE{4.82}{0.05} & \ciSE{4.75}{0.07} & \ciSE{4.91}{0.05} & \ciSE{4.68}{0.05} & \ciSE{4.72}{0.05} & \ciSE{4.89}{0.05} & \ciSE{5.00}{0.06} & \ciSE{4.52}{0.07} & \ciSE{4.34}{0.07} & \ciSE{4.35}{0.06} & \ciSE{4.50}{0.07} & \ciSE{4.08}{0.06} & \ciSE{9.31}{0.05} \\
    GPT-OSS 20B      & \ciSE{3.82}{0.07} & \ciSE{4.17}{0.08} & \ciSE{5.61}{0.08} & \ciSE{4.19}{0.07} & \ciSE{4.04}{0.08} & \ciSE{4.00}{0.07} & \ciSE{4.01}{0.07} & \ciSE{4.08}{0.07} & \ciSE{4.23}{0.08} & \ciSE{4.17}{0.08} & \ciSE{4.48}{0.08} & \ciSE{4.48}{0.07} & \ciSE{4.57}{0.08} & \ciSE{4.74}{0.08} & \ciSE{3.90}{0.08} & \ciSE{5.12}{0.08} & \ciSE{4.63}{0.08} & \ciSE{4.68}{0.08} & \ciSE{4.67}{0.08} & \ciSE{3.78}{0.07} & \ciSE{9.60}{0.06} \\
    \bottomrule
  \end{tabular}%
  }
\end{table}


\begin{table}[!htbp]
  \caption{Minimality measure $\Delta_{\text{IB}} = \text{CE}(X \mid Y, \mathbf{T}) - \text{CE}(Y \mid \mathbf{T})$ across source LLMs, paired per problem. Larger positive values indicate a representation that is more sufficient for $Y$ and adds less $X$-information beyond $Y$.}
  \label{tab:minimality-delta-ib}
  \centering
  \footnotesize
  \setlength{\tabcolsep}{2.5pt}
  \renewcommand{\arraystretch}{1.1}
  \resizebox{\textwidth}{!}{%
  \begin{tabular}{@{}l *{2}{c} !{\vrule} *{2}{c} *{5}{c} *{5}{c} *{5}{c} !{\vrule} *{2}{c}@{}}
    \toprule
    & \multicolumn{2}{c}{\textit{Output Emb.}}
      & \multicolumn{2}{c}{\textit{Last Input Tok.}}
      & \multicolumn{5}{c}{\textit{Soft Thinking (no noise)}}
      & \multicolumn{5}{c}{\textit{Soft Thinking (Gumbel)}}
      & \multicolumn{5}{c}{\textit{Latent Thinking}}
      & \multicolumn{2}{c}{\textit{Baselines}} \\
    \cmidrule(lr){2-3}\cmidrule(lr){4-5}\cmidrule(lr){6-10}\cmidrule(lr){11-15}\cmidrule(lr){16-20}\cmidrule(lr){21-22}
    \textbf{LLM}
      & Exc & Pool
      & All & Final
      & 1 & 16 & 32 & 64 & 128
      & 1 & 16 & 32 & 64 & 128
      & 1 & 16 & 32 & 64 & 128
      & IE & RV \\
    \midrule
    Llama-3.1 8B     & \ciSE{0.37}{0.04} & \ciSE{0.27}{0.04} & \ciSE{0.08}{0.03} & \ciSE{0.16}{0.04} & \ciSE{0.25}{0.04} & \ciSE{0.18}{0.03} & \ciSE{0.21}{0.03} & \ciSE{0.19}{0.04} & \ciSE{0.19}{0.03} & \ciSE{0.24}{0.04} & \ciSE{0.17}{0.03} & \ciSE{0.16}{0.03} & \ciSE{0.17}{0.03} & \ciSE{0.16}{0.03} & \ciSE{0.17}{0.03} & \ciSE{0.19}{0.04} & \ciSE{0.18}{0.04} & \ciSE{0.17}{0.04} & \ciSE{0.17}{0.04} & \ciSE{0.22}{0.04} & \ciSE{-0.40}{0.06} \\
    Llama-3.3 70B    & \ciSE{-0.13}{0.04} & \ciSE{-0.20}{0.03} & \ciSE{-0.33}{0.03} & \ciSE{-0.30}{0.03} & \ciSE{-0.24}{0.03} & \ciSE{-0.27}{0.03} & \ciSE{-0.32}{0.03} & \ciSE{-0.32}{0.03} & \ciSE{-0.31}{0.03} & \ciSE{-0.24}{0.03} & \ciSE{-0.31}{0.04} & \ciSE{-0.31}{0.03} & \ciSE{-0.29}{0.03} & \ciSE{-0.32}{0.03} & \ciSE{-0.31}{0.03} & \ciSE{-0.31}{0.03} & \ciSE{-0.30}{0.03} & \ciSE{-0.33}{0.03} & \ciSE{-0.31}{0.03} & \ciSE{-0.23}{0.03} & \ciSE{-0.99}{0.05} \\
    DS-R1-Qwen 32B   & \ciSE{0.07}{0.03} & \ciSE{0.05}{0.03} & \ciSE{-0.07}{0.03} & \ciSE{-0.05}{0.03} & \ciSE{0.10}{0.04} & \ciSE{0.00}{0.04} & \ciSE{-0.02}{0.04} & \ciSE{-0.01}{0.03} & \ciSE{-0.01}{0.04} & \ciSE{0.10}{0.04} & \ciSE{0.04}{0.04} & \ciSE{0.04}{0.03} & \ciSE{0.01}{0.03} & \ciSE{0.00}{0.03} & \ciSE{0.05}{0.03} & \ciSE{0.03}{0.03} & \ciSE{0.03}{0.03} & \ciSE{0.02}{0.03} & \ciSE{0.02}{0.03} & \ciSE{0.04}{0.03} & \ciSE{-0.50}{0.05} \\
    Skywork-OR1 32B  & \ciSE{-0.16}{0.03} & \ciSE{-0.19}{0.03} & \ciSE{-0.31}{0.03} & \ciSE{-0.27}{0.03} & \ciSE{-0.13}{0.03} & \ciSE{-0.28}{0.03} & \ciSE{-0.29}{0.03} & \ciSE{-0.29}{0.03} & \ciSE{-0.27}{0.03} & \ciSE{-0.14}{0.03} & \ciSE{-0.21}{0.03} & \ciSE{-0.22}{0.03} & \ciSE{-0.23}{0.03} & \ciSE{-0.25}{0.03} & \ciSE{-0.18}{0.03} & \ciSE{-0.20}{0.03} & \ciSE{-0.21}{0.03} & \ciSE{-0.21}{0.03} & \ciSE{-0.21}{0.03} & \ciSE{-0.21}{0.03} & \ciSE{-0.59}{0.05} \\
    GPT-OSS 20B      & \ciSE{-0.26}{0.04} & \ciSE{-0.22}{0.03} & \ciSE{-0.27}{0.03} & \ciSE{-0.25}{0.03} & \ciSE{-0.21}{0.03} & \ciSE{-0.21}{0.03} & \ciSE{-0.21}{0.03} & \ciSE{-0.21}{0.03} & \ciSE{-0.21}{0.03} & \ciSE{-0.20}{0.04} & \ciSE{-0.24}{0.04} & \ciSE{-0.21}{0.03} & \ciSE{-0.24}{0.03} & \ciSE{-0.21}{0.03} & \ciSE{-0.23}{0.03} & \ciSE{-0.18}{0.03} & \ciSE{-0.17}{0.03} & \ciSE{-0.18}{0.03} & \ciSE{-0.23}{0.03} & \ciSE{-0.34}{0.04} & \ciSE{-0.30}{0.05} \\
    \bottomrule
  \end{tabular}%
  }
\end{table}


\begin{table}[!htbp]
  \caption{Output-prediction cross-entropy $\text{CE}(Y \mid \mathbf{T})$ across source LLMs with $\mathbf{T}$ tiled to a common length. The tiled form is the one entering the IB-residual measure of \cref{tab:minimality-delta-ib}. Comparison with \cref{tab:minimality-ce-y} isolates the effect of length normalisation on each representation.}
  \label{tab:minimality-ce-y-tiled}
  \centering
  \footnotesize
  \setlength{\tabcolsep}{2.5pt}
  \renewcommand{\arraystretch}{1.1}
  \resizebox{\textwidth}{!}{%
  \begin{tabular}{@{}l *{2}{c} !{\vrule} *{2}{c} *{5}{c} *{5}{c} *{5}{c} !{\vrule} *{2}{c}@{}}
    \toprule
    & \multicolumn{2}{c}{\textit{Output Emb.}}
      & \multicolumn{2}{c}{\textit{Last Input Tok.}}
      & \multicolumn{5}{c}{\textit{Soft Thinking (no noise)}}
      & \multicolumn{5}{c}{\textit{Soft Thinking (Gumbel)}}
      & \multicolumn{5}{c}{\textit{Latent Thinking}}
      & \multicolumn{2}{c}{\textit{Baselines}} \\
    \cmidrule(lr){2-3}\cmidrule(lr){4-5}\cmidrule(lr){6-10}\cmidrule(lr){11-15}\cmidrule(lr){16-20}\cmidrule(lr){21-22}
    \textbf{LLM}
      & Exc & Pool
      & All & Final
      & 1 & 16 & 32 & 64 & 128
      & 1 & 16 & 32 & 64 & 128
      & 1 & 16 & 32 & 64 & 128
      & IE & RV \\
    \midrule
    Llama-3.1 8B     & \ciSE{0.72}{0.03} & \ciSE{0.79}{0.02} & \ciSE{0.80}{0.02} & \ciSE{0.78}{0.02} & \ciSE{0.82}{0.03} & \ciSE{0.82}{0.02} & \ciSE{0.80}{0.02} & \ciSE{0.82}{0.02} & \ciSE{0.83}{0.03} & \ciSE{0.86}{0.03} & \ciSE{0.88}{0.03} & \ciSE{0.88}{0.03} & \ciSE{0.89}{0.02} & \ciSE{0.90}{0.03} & \ciSE{0.82}{0.02} & \ciSE{0.79}{0.02} & \ciSE{0.79}{0.02} & \ciSE{0.80}{0.02} & \ciSE{0.82}{0.02} & \ciSE{0.82}{0.03} & \ciSE{7.08}{0.08} \\
    Llama-3.3 70B    & \ciSE{1.26}{0.03} & \ciSE{1.29}{0.03} & \ciSE{1.16}{0.03} & \ciSE{1.22}{0.03} & \ciSE{1.37}{0.03} & \ciSE{1.19}{0.03} & \ciSE{1.22}{0.03} & \ciSE{1.24}{0.03} & \ciSE{1.25}{0.03} & \ciSE{1.38}{0.03} & \ciSE{1.29}{0.04} & \ciSE{1.27}{0.03} & \ciSE{1.25}{0.03} & \ciSE{1.27}{0.03} & \ciSE{1.26}{0.03} & \ciSE{1.26}{0.03} & \ciSE{1.26}{0.03} & \ciSE{1.29}{0.03} & \ciSE{1.29}{0.03} & \ciSE{1.33}{0.03} & \ciSE{7.61}{0.06} \\
    DS-R1-Qwen 32B   & \ciSE{0.82}{0.03} & \ciSE{0.83}{0.03} & \ciSE{0.76}{0.02} & \ciSE{0.80}{0.02} & \ciSE{0.81}{0.02} & \ciSE{0.81}{0.03} & \ciSE{0.81}{0.03} & \ciSE{0.81}{0.02} & \ciSE{0.81}{0.03} & \ciSE{0.85}{0.03} & \ciSE{0.85}{0.03} & \ciSE{0.84}{0.03} & \ciSE{0.83}{0.03} & \ciSE{0.83}{0.03} & \ciSE{0.81}{0.02} & \ciSE{0.81}{0.03} & \ciSE{0.81}{0.03} & \ciSE{0.81}{0.03} & \ciSE{0.81}{0.03} & \ciSE{0.85}{0.03} & \ciSE{6.90}{0.07} \\
    Skywork-OR1 32B  & \ciSE{0.96}{0.03} & \ciSE{0.97}{0.03} & \ciSE{0.89}{0.02} & \ciSE{0.94}{0.03} & \ciSE{0.95}{0.03} & \ciSE{0.97}{0.03} & \ciSE{0.94}{0.03} & \ciSE{0.95}{0.03} & \ciSE{0.95}{0.03} & \ciSE{0.97}{0.03} & \ciSE{1.00}{0.03} & \ciSE{0.97}{0.03} & \ciSE{0.97}{0.03} & \ciSE{0.96}{0.03} & \ciSE{0.95}{0.03} & \ciSE{0.93}{0.03} & \ciSE{0.93}{0.03} & \ciSE{0.93}{0.03} & \ciSE{0.94}{0.03} & \ciSE{1.02}{0.03} & \ciSE{6.77}{0.07} \\
    GPT-OSS 20B      & \ciSE{0.83}{0.03} & \ciSE{0.80}{0.03} & \ciSE{0.78}{0.03} & \ciSE{0.81}{0.03} & \ciSE{0.82}{0.03} & \ciSE{0.81}{0.03} & \ciSE{0.81}{0.03} & \ciSE{0.81}{0.03} & \ciSE{0.81}{0.03} & \ciSE{0.84}{0.03} & \ciSE{0.81}{0.03} & \ciSE{0.81}{0.03} & \ciSE{0.81}{0.03} & \ciSE{0.80}{0.03} & \ciSE{0.80}{0.03} & \ciSE{0.80}{0.03} & \ciSE{0.80}{0.03} & \ciSE{0.81}{0.03} & \ciSE{0.81}{0.03} & \ciSE{0.84}{0.03} & \ciSE{6.59}{0.08} \\
    \bottomrule
  \end{tabular}%
  }
\end{table}

\begin{table}[!htbp]
  \caption{Conditional input cross-entropy $\text{CE}(X \mid Y, \mathbf{T})$ across source LLMs with $\mathbf{T}$ tiled to a common length. Higher values indicate that, given the output, $\mathbf{T}$ leaks less residual information about the input. This is the input component of the IB-residual measure in \cref{tab:minimality-delta-ib}, paired with $\text{CE}(Y \mid \mathbf{T})$ from \cref{tab:minimality-ce-y-tiled}.}
  \label{tab:minimality-ce-xyt-tiled}
  \centering
  \footnotesize
  \setlength{\tabcolsep}{2.5pt}
  \renewcommand{\arraystretch}{1.1}
  \resizebox{\textwidth}{!}{%
  \begin{tabular}{@{}l *{2}{c} !{\vrule} *{2}{c} *{5}{c} *{5}{c} *{5}{c} !{\vrule} *{2}{c}@{}}
    \toprule
    & \multicolumn{2}{c}{\textit{Output Emb.}}
      & \multicolumn{2}{c}{\textit{Last Input Tok.}}
      & \multicolumn{5}{c}{\textit{Soft Thinking (no noise)}}
      & \multicolumn{5}{c}{\textit{Soft Thinking (Gumbel)}}
      & \multicolumn{5}{c}{\textit{Latent Thinking}}
      & \multicolumn{2}{c}{\textit{Baselines}} \\
    \cmidrule(lr){2-3}\cmidrule(lr){4-5}\cmidrule(lr){6-10}\cmidrule(lr){11-15}\cmidrule(lr){16-20}\cmidrule(lr){21-22}
    \textbf{LLM}
      & Exc & Pool
      & All & Final
      & 1 & 16 & 32 & 64 & 128
      & 1 & 16 & 32 & 64 & 128
      & 1 & 16 & 32 & 64 & 128
      & IE & RV \\
    \midrule
    Llama-3.1 8B     & \ciSE{1.09}{0.03} & \ciSE{1.06}{0.03} & \ciSE{0.89}{0.03} & \ciSE{0.94}{0.03} & \ciSE{1.07}{0.03} & \ciSE{1.00}{0.03} & \ciSE{1.01}{0.03} & \ciSE{1.01}{0.03} & \ciSE{1.02}{0.03} & \ciSE{1.11}{0.03} & \ciSE{1.05}{0.03} & \ciSE{1.04}{0.03} & \ciSE{1.05}{0.03} & \ciSE{1.05}{0.03} & \ciSE{0.99}{0.03} & \ciSE{0.98}{0.03} & \ciSE{0.97}{0.03} & \ciSE{0.97}{0.03} & \ciSE{0.99}{0.03} & \ciSE{1.04}{0.03} & \ciSE{6.69}{0.07} \\
    Llama-3.3 70B    & \ciSE{1.13}{0.03} & \ciSE{1.09}{0.03} & \ciSE{0.83}{0.03} & \ciSE{0.92}{0.03} & \ciSE{1.13}{0.03} & \ciSE{0.92}{0.03} & \ciSE{0.91}{0.03} & \ciSE{0.93}{0.03} & \ciSE{0.94}{0.03} & \ciSE{1.14}{0.03} & \ciSE{0.98}{0.03} & \ciSE{0.96}{0.03} & \ciSE{0.96}{0.03} & \ciSE{0.95}{0.03} & \ciSE{0.95}{0.03} & \ciSE{0.95}{0.03} & \ciSE{0.96}{0.03} & \ciSE{0.96}{0.03} & \ciSE{0.99}{0.03} & \ciSE{1.10}{0.03} & \ciSE{6.62}{0.07} \\
    DS-R1-Qwen 32B   & \ciSE{0.90}{0.03} & \ciSE{0.89}{0.03} & \ciSE{0.68}{0.03} & \ciSE{0.75}{0.03} & \ciSE{0.91}{0.03} & \ciSE{0.81}{0.03} & \ciSE{0.79}{0.03} & \ciSE{0.80}{0.03} & \ciSE{0.81}{0.03} & \ciSE{0.95}{0.03} & \ciSE{0.89}{0.03} & \ciSE{0.88}{0.03} & \ciSE{0.84}{0.03} & \ciSE{0.83}{0.03} & \ciSE{0.86}{0.03} & \ciSE{0.83}{0.03} & \ciSE{0.83}{0.03} & \ciSE{0.83}{0.03} & \ciSE{0.83}{0.03} & \ciSE{0.89}{0.03} & \ciSE{6.40}{0.07} \\
    Skywork-OR1 32B  & \ciSE{0.80}{0.03} & \ciSE{0.79}{0.03} & \ciSE{0.58}{0.02} & \ciSE{0.67}{0.02} & \ciSE{0.83}{0.03} & \ciSE{0.69}{0.03} & \ciSE{0.66}{0.03} & \ciSE{0.66}{0.03} & \ciSE{0.69}{0.03} & \ciSE{0.83}{0.03} & \ciSE{0.78}{0.03} & \ciSE{0.76}{0.03} & \ciSE{0.74}{0.03} & \ciSE{0.71}{0.03} & \ciSE{0.77}{0.03} & \ciSE{0.73}{0.03} & \ciSE{0.72}{0.03} & \ciSE{0.72}{0.03} & \ciSE{0.72}{0.03} & \ciSE{0.81}{0.03} & \ciSE{6.17}{0.07} \\
    GPT-OSS 20B      & \ciSE{0.58}{0.02} & \ciSE{0.57}{0.02} & \ciSE{0.50}{0.02} & \ciSE{0.55}{0.02} & \ciSE{0.62}{0.02} & \ciSE{0.60}{0.02} & \ciSE{0.60}{0.02} & \ciSE{0.60}{0.02} & \ciSE{0.60}{0.02} & \ciSE{0.65}{0.02} & \ciSE{0.57}{0.02} & \ciSE{0.60}{0.02} & \ciSE{0.56}{0.02} & \ciSE{0.60}{0.02} & \ciSE{0.56}{0.02} & \ciSE{0.61}{0.02} & \ciSE{0.63}{0.02} & \ciSE{0.63}{0.02} & \ciSE{0.58}{0.02} & \ciSE{0.50}{0.02} & \ciSE{6.29}{0.07} \\
    \bottomrule
  \end{tabular}%
  }
\end{table}

\begin{figure}[!htbp]
  \centering
  \includegraphics[width=0.98\linewidth]{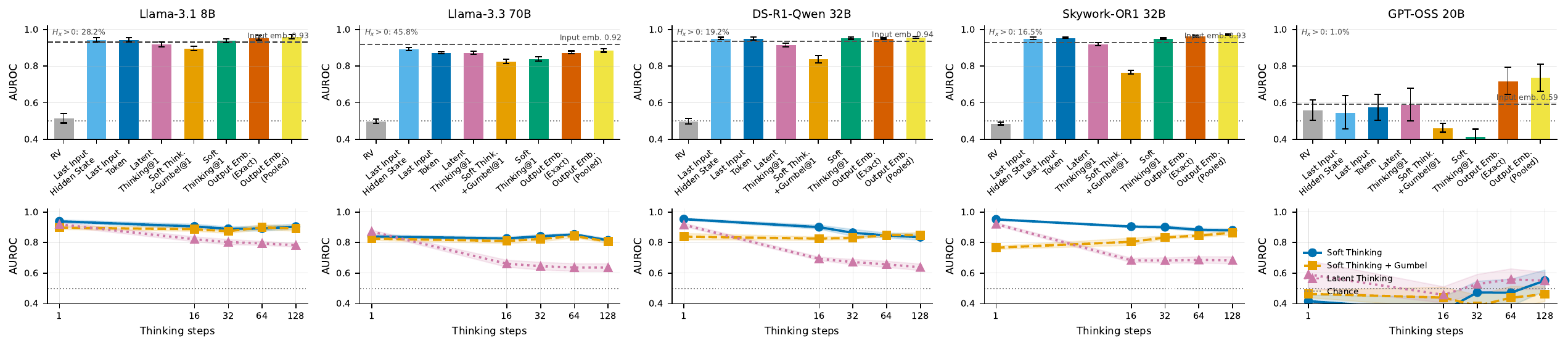}
  \caption{Distributional Consistency Score (DCS) across source LLMs at $\tau=0.9$.
    Top row, AUROC of the difference-of-means probe predicting $H_x > 0$ from each thought representation, with the input embedding shown as a question-difficulty baseline.
    Bottom row, DCS as a function of thinking steps for the iterative thought families.}
  \label{fig:dcs-stability}
\end{figure}

The Minimality residual $\Delta_{\text{IB}}$ in
\cref{tab:minimality-delta-ib} is read within source LLM. The absolute
sign and scale of $\Delta_{\text{IB}}$ shift across source LLMs because
the cross-entropy decomposition discards a TR-independent constant
whose value depends on the conditional entropies of $X$ and $Y$ for
that LLM, and recovering the absolute IB Lagrangian requires this
constant (\cref{appendix:sub:minimality_ib_derivation}).
The within-LLM ranking of candidates is the meaningful comparison.
\Cref{fig:results_summary} in the main text applies this within-LLM
normalisation directly so that all four axioms appear on a comparable
within-LLM scale.

\subsection{Distributional Views of Causality}
\label{appendix:sub:distributional_views}

\cref{tab:causality-results} reduces each candidate to a single number per source LLM.
We complement that view with two diagnostics that read the same per-problem records bootstrapped in \cref{appendix:sub:bootstrap}, without reweighting or new measurements.

\cref{fig:causality-kl-pooled-cdf} pools all per-beam KLs at the $50$-token
window into one distribution per representation and reports its CDF, with
one panel per representation family. A curve sitting to the left has smaller
per-problem KL on average. A steeper curve has a tighter per-problem
distribution around the mean reported in \cref{tab:causality-results}. A
curve that approaches the top slowly has a heavy upper tail of high-KL
problems that pulls the mean above the per-problem median. With this
reading, the anchor panel separates candidates that the mean alone
clusters into a single rank, since Exact output embedding lands at a similar
mean as Random vector but carries a longer upper tail. Within the thinking
families, increasing the step count shifts the entire distribution rather
than only its mean, and Soft thinking with Gumbel noise at $32$ steps shows
a particularly heavy upper tail that is invisible at the mean.

\cref{fig:causality-kl-variance} decomposes the KL dispersion into
between-problem and within-problem components. Values close to $1$ indicate
that the per-problem mean is carried by problem-level differences, supporting
the cluster-bootstrap design of \cref{appendix:sub:bootstrap}, while values close
to $0$ indicate that the dispersion lives across beams within a single
problem, so the mean averages over a heterogeneous within-problem
distribution.
The Random Vector and Output Embedding anchors show low ICC values, indicating that their per-beam KL dispersion is concentrated within individual problems rather than driven by problem-level differences.
The soft-thinking and latent-thinking families show high ICC, so the scalar mean reported in \cref{tab:causality-results} is a stable per-problem property rather than an average over heterogeneous within-problem noise.

\begin{figure}[t]
  \centering
  \includegraphics[width=\linewidth]{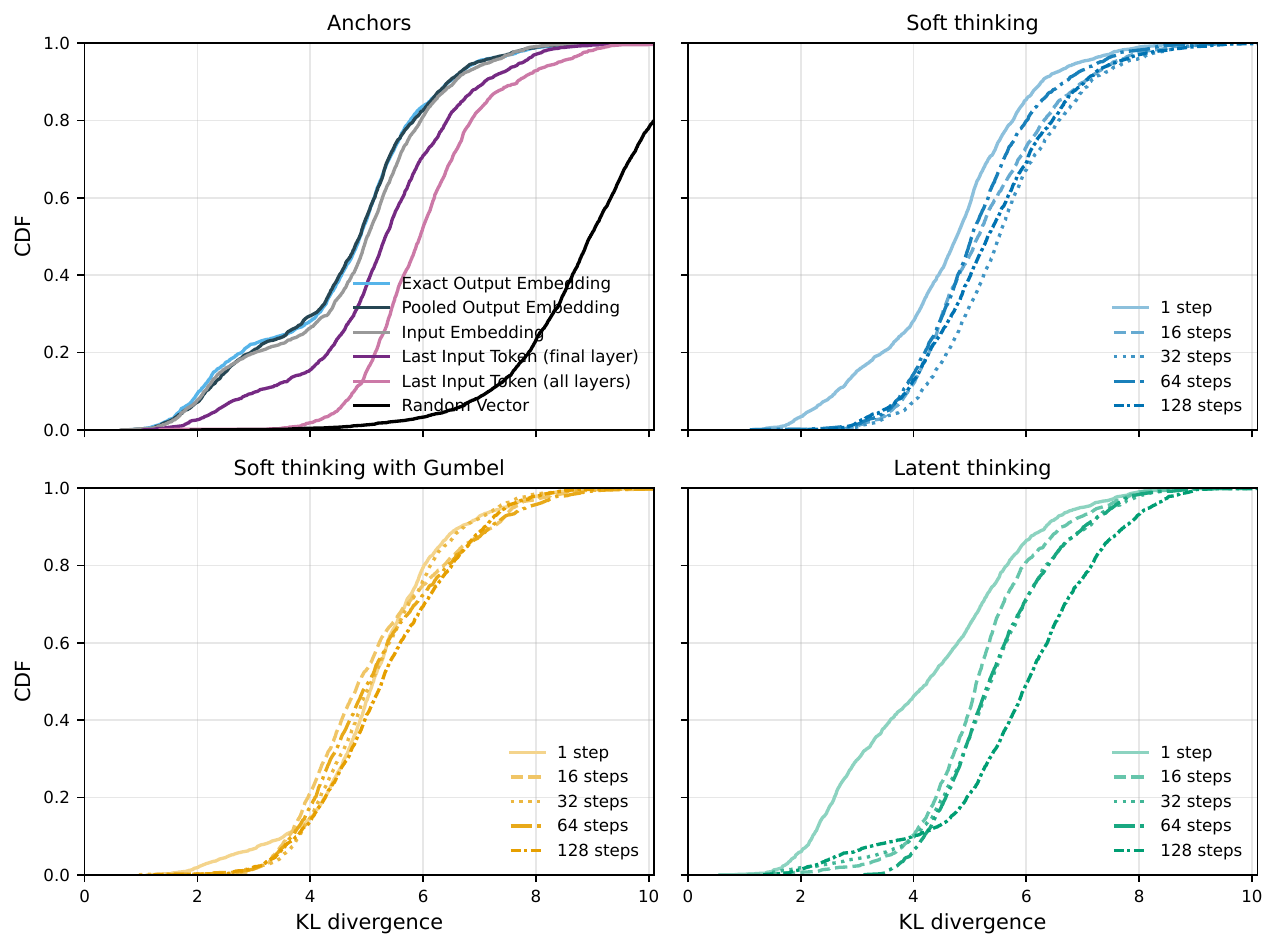}
  \caption{Per-beam KL CDFs at the $50$-token averaging window on Llama-3.3-70B, with one panel per representation family (anchor candidates, soft thinking, soft thinking with Gumbel noise, latent thinking). Within each thinking family every step count is shown.}
  \label{fig:causality-kl-pooled-cdf}
\end{figure}

\begin{figure}[t]
  \centering
  \includegraphics[width=\linewidth]{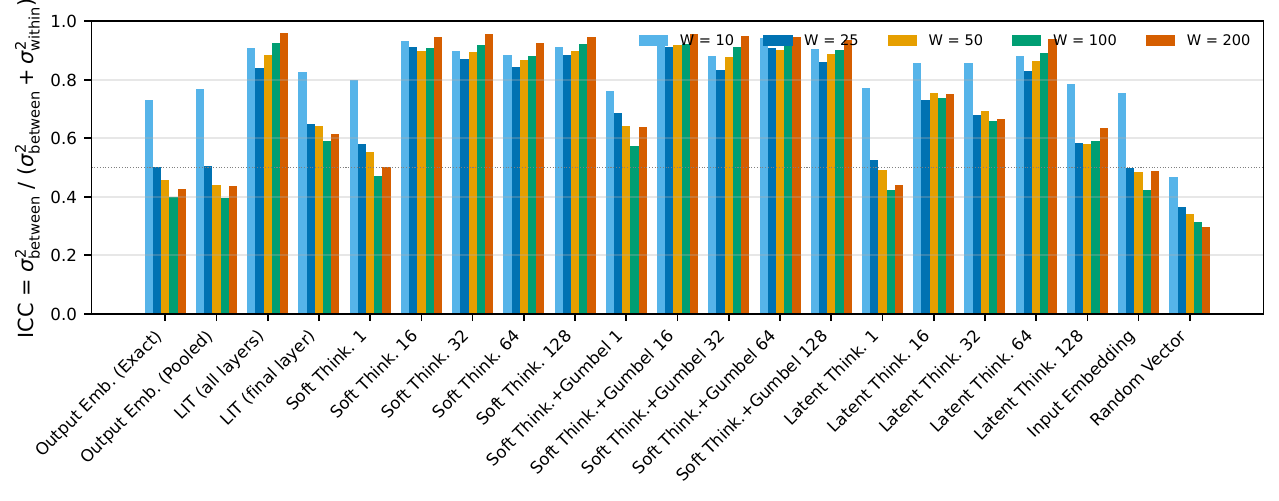}
  \caption{Intraclass correlation $\mathrm{ICC} = \sigma^2_{\text{between}} / (\sigma^2_{\text{between}} + \sigma^2_{\text{within}})$ of per-beam causality KL on Llama-3.3-70B, with bars per averaging window. Values above $0.5$ indicate that the per-problem mean carries most of the dispersion, validating the cluster bootstrap that resamples problems and keeps within-problem beams glued together (\cref{appendix:sub:bootstrap}).}
  \label{fig:causality-kl-variance}
\end{figure}

\subsection{Causality with the Output-Reconstruction Projection}
\label{appendix:sub:causality_minproj}

The headline Causality results in \cref{tab:causality-results} use the projection trained under the language-modelling objective of the Minimality output probe. The Minimality probe learns a projection of the same shape as the discriminator's, mapping $\mathbf{T}$ into the embedding space of the frozen $1$B model, but optimised to reconstruct $Y$ rather than to solve a binary discrimination task. We document here the ablation that motivated this choice.

We test whether the projection objective is the bottleneck for Causality by substituting the discriminator-trained projection in place of the output-reconstruction projection. Each representation is evaluated at its native substitution length without tiling, so the comparison isolates the projection objective from the length effects examined in \cref{appendix:causality_length}. The substitution path and KL formula are otherwise unchanged, and the random-vector reference column controls for any fixed projection-induced shift.

\cref{tab:causality-minproj-70b} reports the comparison on Llama-3.3-70B for the four representations of \cref{methodology}. The output-reconstruction projection meaningfully lowers KL for the latent representations that already live on the source-model manifold, with soft thinking dropping by roughly half on the KL scale and last input token by a smaller but separable margin. The exact output embedding moves the other way and the random-vector reference is unchanged, confirming the effect is representation-specific rather than a fixed projection-induced shift. The discriminator-projection numbers are retained in \cref{tab:causality-disc} for cell-by-cell comparison.


\begin{table}[!htbp]
  \caption{Causality KL ($\downarrow$) on Llama-3.3-70B comparing the discriminator-trained projection (\cref{tab:causality-disc}; Disc) against the output-reconstruction projection used in the main text (\cref{tab:causality-results}; LM). The paired column reports the per-problem mean of $\mathrm{KL}_{\mathrm{LM}} - \mathrm{KL}_{\mathrm{Disc}}$ with cluster-bootstrap $95\%$ CI; entries with the entire CI on one side of zero are statistically separable from the projection used in the main text.}
  \label{tab:causality-minproj-70b}
  \centering
  \footnotesize
  \setlength{\tabcolsep}{6pt}
  \renewcommand{\arraystretch}{1.15}
  \begin{tabular}{@{}l c c c@{}}
    \toprule
    Representation & $\mathrm{KL}_{\mathrm{Disc}}$ & $\mathrm{KL}_{\mathrm{LM}}$ & Paired $\Delta$ (95\% CI) \\
    \midrule
    Exact output embedding         & $4.90$ & $5.33$ & $+0.43\ [+0.28, +0.57]$ \\
    Last input token (all layers)  & $7.41$ & $5.99$ & $-1.42\ [-1.55, -1.30]$ \\
    Soft thinking, $128$ steps     & $10.51$ & $5.49$ & $-5.02\ [-5.22, -4.82]$ \\
    Random vector                  & $4.44$ & $4.39$ & $-0.05\ [-0.10, +0.01]$ \\
    \bottomrule
  \end{tabular}
\end{table}

The projection objective is accordingly a substantial contributor to the high KL values in \cref{tab:causality-results} for representations native to the source-LLM residual stream, where switching to the output-reconstruction projection yields representation-specific reductions while the random-vector reference remains unchanged.

\subsection{Information-Bottleneck Decomposition for the Minimality Metric}
\label{appendix:sub:minimality_ib_derivation}

The minimality metric $\Delta_{\text{IB}}$ of \cref{methodology} is the Information Bottleneck Lagrangian at the symmetric weight $\beta = 2$, expressed in cross-entropies a probe can compute. We give the derivation, identify the constant offset between $\Delta_{\text{IB}}$ and the absolute Lagrangian, and characterise the conditions under which the surrogate is exact.

\paragraph{Chain-rule decomposition.} The general chain rule for mutual information reads $I(X; \mathbf{T}) + I(Y; \mathbf{T} \mid X) = I(\mathbf{T}; Y) + I(X; \mathbf{T} \mid Y)$. When $\mathbf{T}$ is a deterministic function of $X$, the term $I(Y; \mathbf{T} \mid X)$ vanishes because $\mathbf{T}$ has no residual variance once $X$ is known, so the chain rule collapses to $I(X; \mathbf{T}) = I(\mathbf{T}; Y) + I(X; \mathbf{T} \mid Y)$. Substituting into the IB Lagrangian $L(\beta) = I(X; \mathbf{T}) - \beta\, I(\mathbf{T}; Y)$ gives $L(\beta) = (1 - \beta)\, I(\mathbf{T}; Y) + I(X; \mathbf{T} \mid Y)$. At $\beta = 2$ the two coefficients have equal magnitude with opposite signs, recovering the symmetric trade-off
\begin{equation}\label{eq:ib-decomposition}
    -L(2) = I(\mathbf{T}; Y) - I(X; \mathbf{T} \mid Y)
\end{equation}

\paragraph{Barber--Agakov surrogates.} Mutual information is intractable because $H(\cdot \mid \cdot)$ depends on unknown distributions. For any approximate conditional $q$, Gibbs' inequality gives $H(A \mid B) \le \text{CE}_q(A \mid B)$, with equality iff $q$ matches the true conditional. Training a probe to minimise empirical negative log-likelihood on $(B, A)$ pairs therefore yields a tight upper bound on $H(A \mid B)$ in the limit of a sufficiently expressive probe class. We use this to estimate the two terms of $-L(2)$,
\begin{align}
    I(\mathbf{T}; Y) &= H(Y) - H(Y \mid \mathbf{T}) \;\approx\; H(Y) - \text{CE}(Y \mid \mathbf{T}), \label{eq:ib-surrogate-y} \\
    I(X; \mathbf{T} \mid Y) &= H(X \mid Y) - H(X \mid Y, \mathbf{T}) \;\approx\; \text{CE}(X \mid Y) - \text{CE}(X \mid Y, \mathbf{T}). \label{eq:ib-surrogate-x}
\end{align}
The first follows from the standard Barber--Agakov lower bound on mutual information~\citep{barber2003variational}. The second is its conditional analogue, where two probes share the input $Y$ and differ only in whether $\mathbf{T}$ is appended.

\paragraph{Reduction to $\Delta_{\text{IB}}$.} Substituting the surrogates into $-L(2)$ and grouping $\mathbf{T}$-dependent terms,
\begin{equation}\label{eq:ib-reduction}
    -L(2) \;\approx\; \big[\text{CE}(X \mid Y, \mathbf{T}) - \text{CE}(Y \mid \mathbf{T})\big] + \big[H(Y) - \text{CE}(X \mid Y)\big] = \Delta_{\text{IB}} + C
\end{equation}
The constant $C = H(Y) - \text{CE}(X \mid Y)$ depends only on the dataset and the unconditional baseline probe, not on the candidate $\mathbf{T}$. Comparisons of $\Delta_{\text{IB}}$ across representations are therefore comparisons of $-L(2)$ shifted by a single offset.

\paragraph{Bounding direction and ranking preservation.} By Gibbs' inequality each CE term upper-bounds the corresponding conditional entropy, so each approximation in \cref{eq:ib-surrogate-y,eq:ib-surrogate-x} yields a lower bound on the mutual information term it estimates. In $\Delta_{\text{IB}}$, however, the two CE approximations are subtracted from each other. Any systematic bias shared across candidates that is absorbed into the constant $C$ cancels in the difference. Within a fixed probe class and a fixed source LLM, the residual approximation error is candidate-independent, so the ranking of $\Delta_{\text{IB}}$ across thought representations is preserved. This is why the metric is read within a source LLM rather than across LLMs, as $C$ varies with the source LLM's output distribution and must not be compared across models.

\paragraph{Random Vector cross-entropy interpretation.} The elevated $\text{CE}(Y \mid \mathbf{T})$ observed for the Random Vector anchor is not an out-of-distribution artifact. A random vector carries no information about $Y$, so a probe conditioned on pure noise cannot predict the output sequence and cross-entropy rises to the level of an unconditional language model. The probe is asked to optimize a signal that does not exist in $\mathbf{T}$, and the resulting high CE confirms that the metric correctly identifies the absence of output-relevant content rather than a distributional mismatch.

\paragraph{When the chain-rule assumption fails.} The reduction relies on $I(Y; \mathbf{T} \mid X) = 0$. The exact output embedding and pooled output embedding candidates are computed directly from the generated continuation, so $\mathbf{T}$ is a function of $Y$ as well as $X$ and $I(Y; \mathbf{T} \mid X) > 0$. For these rows, the substitution into the Lagrangian carries a correction term that does not collapse into a TR-independent constant, so $\Delta_{\text{IB}}$ no longer estimates $-L(2)$ even up to $C$. The reported value still has a clear empirical reading. For the exact output embedding, $\mathbf{T}$ contains $Y$ by construction, so $\text{CE}(Y \mid \mathbf{T})$ collapses toward zero and $\text{CE}(X \mid Y, \mathbf{T})$ approaches $\text{CE}(X \mid Y)$. The metric therefore flags this row as a trivially sufficient anchor with no residual leakage, which is the correct behavioural diagnosis even though the IB-Lagrangian interpretation no longer applies.

\subsection{Length Sensitivity of the Causality Metric}
\label{appendix:causality_length}

The causality KL in \cref{tab:causality-results} averages over the last $50$ tokens of each generated beam, which raises two length-based concerns. First, the metric might track prompt size rather than thought content. Second, a longer generation alone could inflate KL for whichever candidate happens to produce it. \cref{fig:causality-length-correlation} examines both.

The first concern is ruled out outright. No candidate shows a meaningful coupling between KL and input length, so the main-text ordering does not reflect task-to-task variation in prompt size.

The second concern resolves cleanly. Output length couples with KL only for the Random Vector lower-bound reference, since its content is uninformative by construction and longer generations accumulate more positions that disagree with the explicit prefix and inflate its KL on long beams. The candidates clearly below this reference in \cref{tab:causality-results} show near-zero coupling, so their KL is driven by representational fit rather than by how much the source model chose to generate. Output length is therefore not a competing explanation for the main-text ranking but a property of the lower-bound reference against which the ranking is measured.

\begin{figure}[t]
  \centering
  \includegraphics[width=0.95\linewidth]{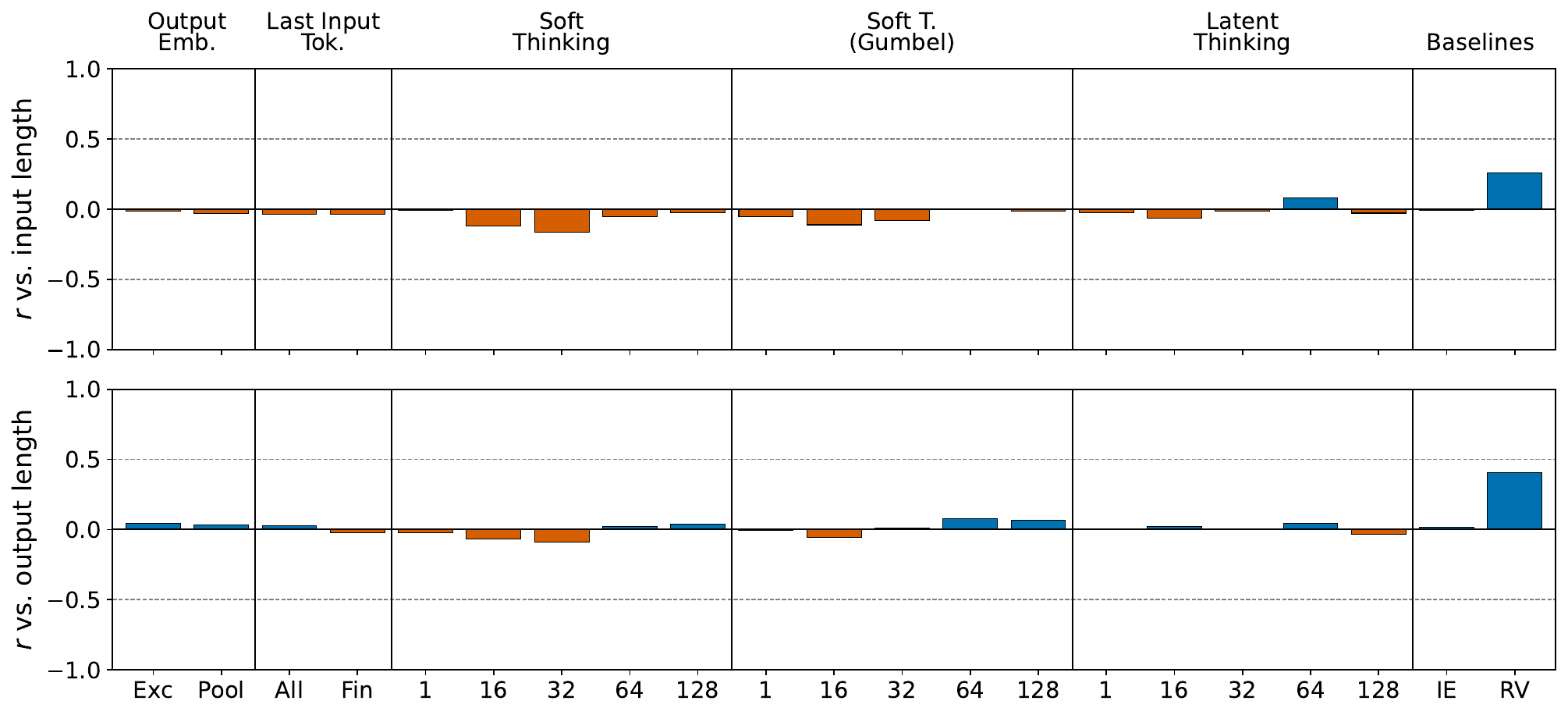}
  \caption{Pearson $r$ between per-example causality KL and input length (top) or output length (bottom), in characters, on Llama-3.1-8B-Instruct. Candidates follow the order of \cref{tab:causality-results}.}
  \label{fig:causality-length-correlation}
\end{figure}

We further test whether the choice of $50$ tokens itself drives the ranking by recomputing the metric across windows of $\{10, 25, 50, 100, 200\}$ tokens at no extra source-LLM cost, since each beam saves a per-window KL during a single causality pass through the discriminator. \cref{fig:causality-window-sweep} shows the result. The coarse ordering between candidates is stable across windows, with strong candidates sitting near the bottom of the KL axis and weak candidates near the top at every scale, so the ranking in \cref{tab:causality-results} does not depend on the exact window length. Fine-grained orderings within clusters of similarly-performing candidates do shift at the longer windows, where the shared $Z$ context narrows absolute differences to the point that small method-to-method gaps become statistically indistinguishable. At the shortest window the metric averages over very few positions, where per-token stochastic variance dominates and no candidate cleanly separates from the floor. The $50$-token window sits between these two extremes, giving the cleanest separation between candidates while still averaging over enough positions to be statistically stable.

\begin{figure}[t]
  \centering
  \includegraphics[width=0.95\linewidth]{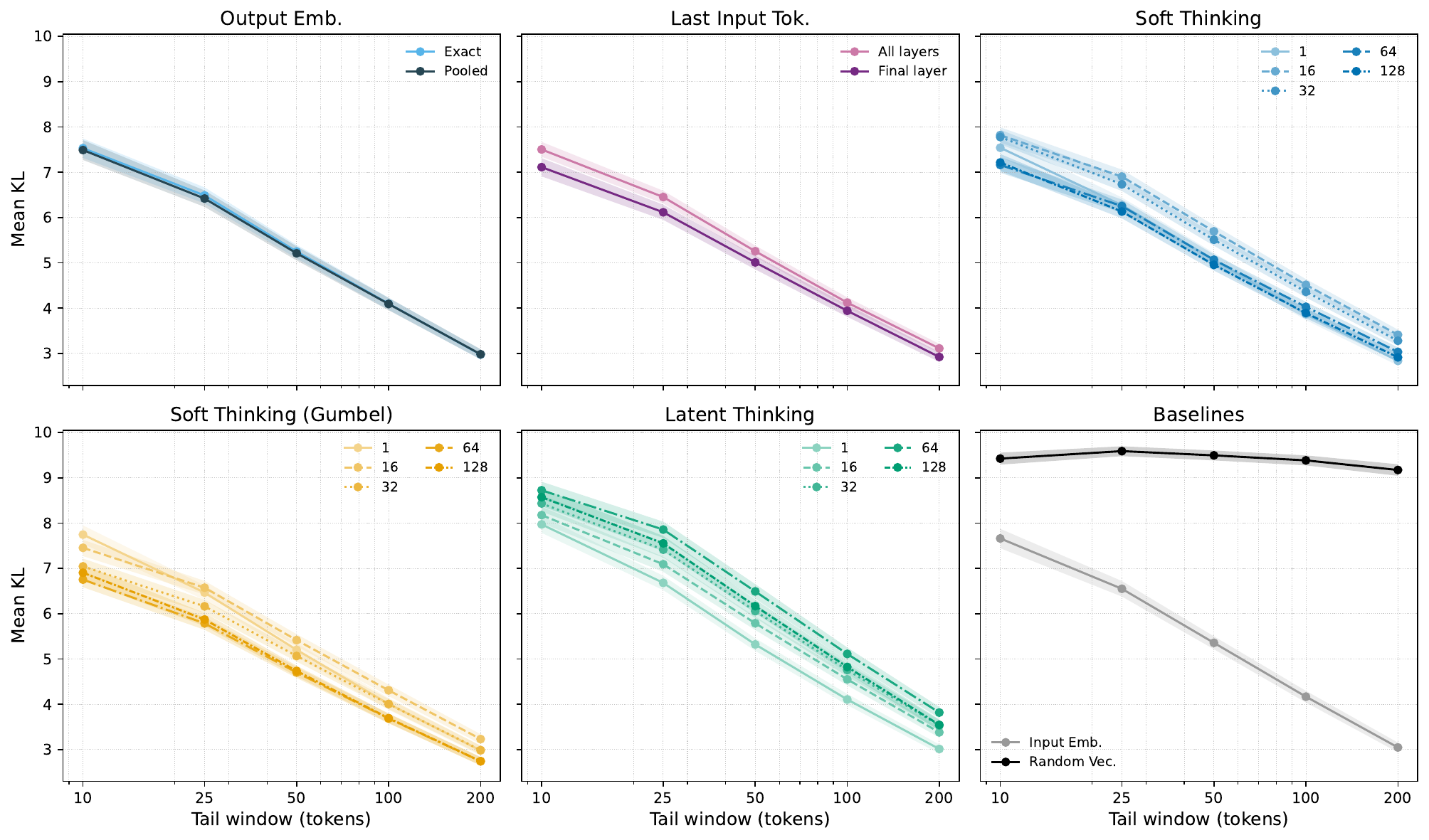}
  \caption{Mean causality KL on Llama-3.1-8B-Instruct across $Z$-windows of $\{10, 25, 50, 100, 200\}$ tokens, with $95\%$ bootstrap CI bands. One panel per TR family, one line per (TR type, think steps) tuple.}
  \label{fig:causality-window-sweep}
\end{figure}

A third length-based concern is the number of substituted vector positions itself, which varies across candidates. Last input token (all layers) substitutes one position per source-model layer, Soft thinking substitutes one position per thinking step, while embedding candidates substitute a single position. To isolate the geometric effect of substitution length from any change in information content, we tile each candidate to a common number of positions and recompute KL. \cref{fig:causality-kl-vs-length} reports the result on Llama-3.3-70B for the four candidates from \cref{tab:causality-results} that admit clean tiling.

The Random vector curve rises markedly with substitution length, even though its content is by construction uninformative about the prefix, and the same trend appears for Exact output embedding with a smaller dynamic range. Substitution length therefore inflates KL on candidates that carry no instance information, which establishes that absolute KL magnitudes are not commensurable across candidates of different native lengths. The relative ordering between candidates also reorganises once length is matched, with candidates that sat at the strong end of the table at native length sliding toward the weak end once they are tiled out to the length of the longer candidates. The reorganisation does not nullify \cref{tab:causality-results}, since tiling a one-position candidate produces a rank-one substitution that is structurally different from a natively multi-position candidate, but it does mean that absolute KL gaps in the table reflect a mix of representational quality and substitution geometry.

This finding fixes the methodological choice that all subsequent quantitative comparisons feed $\mathbf{T}$ at a common substitution length, so that no representation is rewarded or penalised by the metric simply for being natively shorter or longer than another. The causality results of \cref{tab:causality-results} and the minimality probes underlying \cref{tab:minimality-delta-ib} both adopt the same tiled length, and varying-length comparisons are reserved for the diagnostic in \cref{fig:causality-kl-vs-length}.

\begin{figure}[t]
  \centering
  \includegraphics[width=0.78\linewidth]{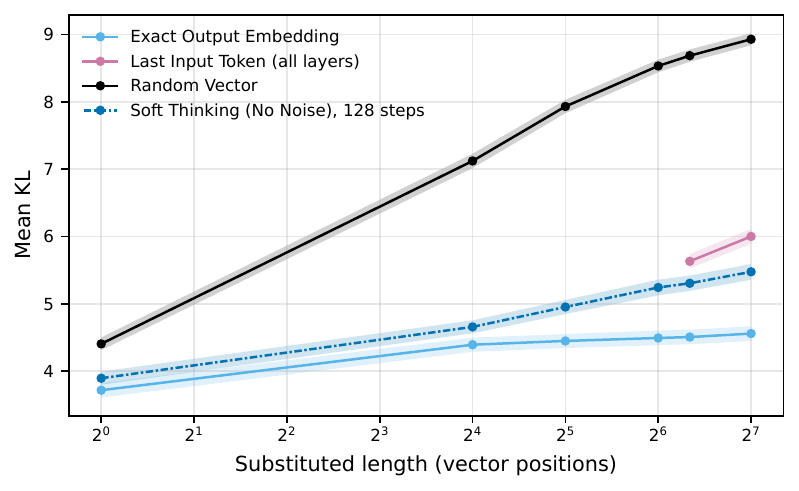}
  \caption{Mean causality KL on Llama-3.3-70B as the substituted length $L$ varies, with $95\%$ cluster-bootstrap CI bands. The single-vector candidates (Exact Output Embedding, Random Vector) are repeated $L$ times. Soft Thinking (No Noise) uses its native $128$-step trajectory truncated to its first $L$ vectors at $L\le 128$, so different $L$ points cover different prefixes of the same trajectory rather than tiled copies of one vector. Last Input Token (all layers) has native length $81$ on Llama-3.3-70B and is shown at $L=81$ (no tiling) and at $L=128$ (the stack is wrapped to fill the substitution window).}
  \label{fig:causality-kl-vs-length}
\end{figure}

\subsection{Geometric Analysis of Representational Collapse}
\label{appendix:geometric}

\cref{tab:disc-results} reports near-ceiling cross-task discrimination on most candidates and near-random same-task discrimination on almost all of them, which invites the objection that the same-task failure reflects probe capacity rather than a property of the representation. We address this objection by measuring the thought-vector geometry directly, with no trained probe in the loop. Each candidate is flattened to a single vector per problem (the full $L{\times}d$ hidden-state stack for LIT, the $s{\times}d$ prefix for $s$-step thinking candidates), and all comparisons use cosine similarity.

\paragraph{Two quantities, one per discriminator task.}
\begin{itemize}[leftmargin=1.6em,itemsep=3pt,topsep=2pt]
  \item \textbf{$k$-NN task purity} \citep{wu2018unsupervised, caron2021emerging} at $k{=}10$. For each thought vector, purity is the fraction of its ten nearest neighbours (by cosine similarity) that carry the same task label. The average over all vectors lies in $[0, 1]$, with uniform random neighbour assignment giving $\approx 0.042$, and a value near $1$ means the geometry clusters problems by task.
  \item \textbf{Within-task participation ratio} \citep{litwinkumar2017optimal, recanatesi2019dimensionality} $\mathrm{PR} = \bigl(\sum_i \sigma_i^2\bigr)^2 / \sum_i \sigma_i^4$, where $\sigma_i$ are the singular values of the centred matrix of within-task vectors for a given task (so $\sigma_i^2$ are the eigenvalues of its sample covariance), averaged over tasks. $\mathrm{PR}$ counts how many directions are needed to describe the within-task variance, ranging from $1$ when every within-task vector points the same way up to $N{-}1$ when the vectors spread over as many orthogonal directions as centering allows, where $N$ is the number of within-task problems. On this split $N \approx 20$, so the ceiling is $\approx 19$, and Random Vector reaches $\mathrm{PR} = 18.5$ close to that ceiling.
\end{itemize}

Each clustering quantity tracks one discriminator. Purity must clearly exceed the random-neighbour baseline for cross-task discrimination to succeed, since no probe can recover task structure that is absent from the geometry. $\mathrm{PR}$ must clearly exceed $1$ for same-task discrimination to succeed, since two within-task vectors that span too few directions are too close together for any learned projection to pull apart. The converse is weaker, because Random Vector achieves the maximum $\mathrm{PR}$ by construction and so a high $\mathrm{PR}$ alone cannot certify that the within-task directions encode instance content. Geometry therefore detects the two failure modes (low purity or low $\mathrm{PR}$) without relying on probe behaviour, but cannot certify success.

\paragraph{An anisotropy-adjusted similarity scale.}
We complement the two clustering quantities with a third diagnostic, the within-task self-similarity adjusted for anisotropy, following \citet{ethayarajh2019contextual}. Define $\Delta_{\cos} = \overline{\cos}_{\text{within-task}} - \overline{\cos}_{\text{cross-task}}$, where the cross-task pair mean serves as the anisotropy baseline inherent to self-attention \citep{godey2024anisotropy}. Subtracting this baseline isolates the within-task scale beyond what transformer geometry alone produces. Where \citet{ethayarajh2019contextual} apply the adjustment at the token level, we compute it on flattened example-level vectors so that a single number summarises each candidate. $\Delta_{\cos}$ is orthogonal to $k$-NN purity and $\mathrm{PR}$, since it varies in similarity scale rather than in cluster structure.

\begin{figure}[!htbp]
  \centering
  \includegraphics[width=0.98\linewidth]{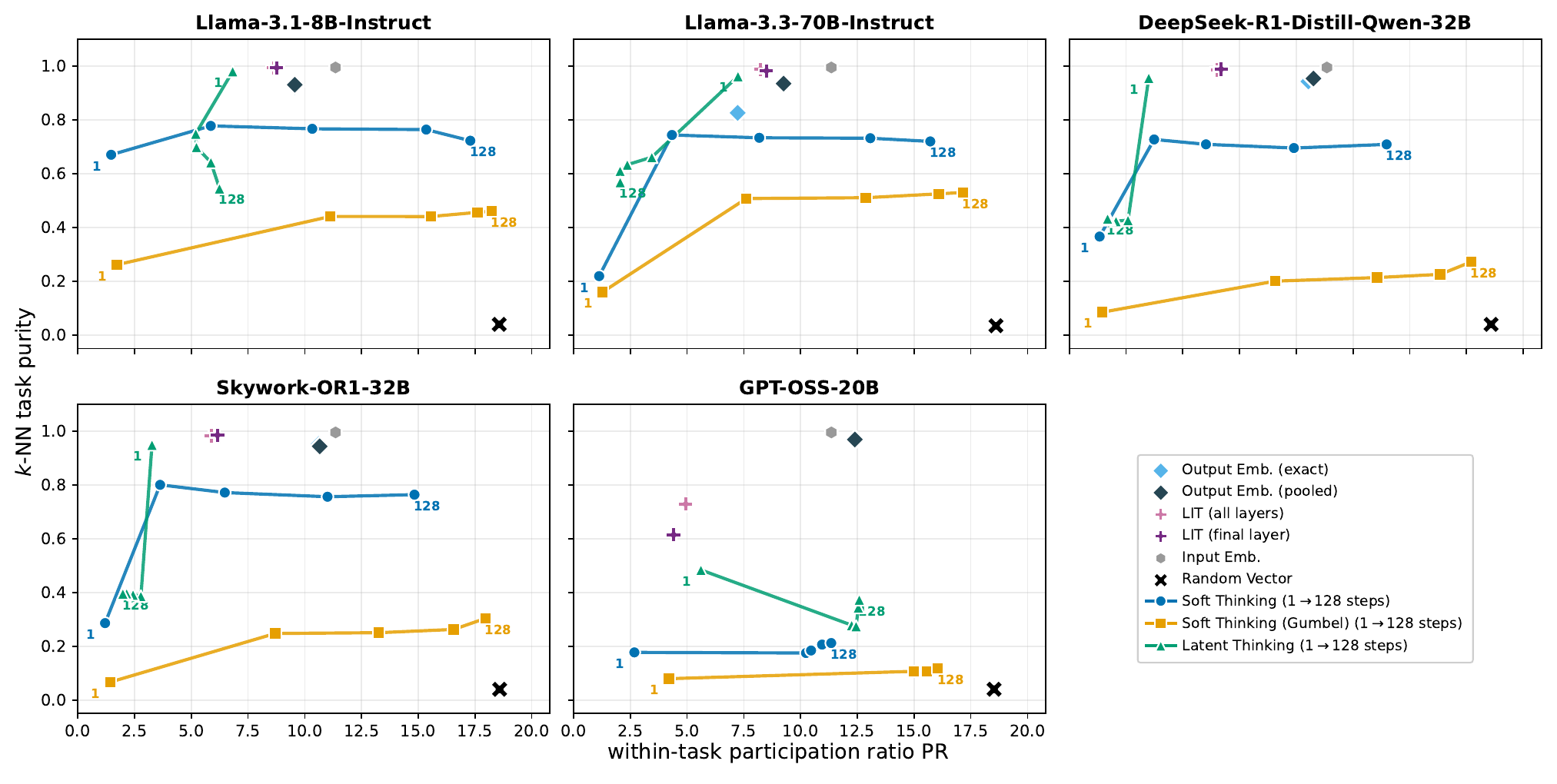}
  \caption{Each candidate placed on the $(\mathrm{PR}, k\text{-NN purity})$ plane, one panel per LLM. Solid lines trace thinking-family trajectories as the step count grows from $1$ to $128$. Candidates further to the upper right are preferable: higher $\mathrm{PR}$ indicates a within-task subspace spread across more directions, and higher $k$-NN purity indicates that nearest neighbours are drawn from the same task.}
  \label{fig:geom-summary}
\end{figure}

\cref{fig:geom-summary} places every candidate on the $(\mathrm{PR}, \text{purity})$ plane for each LLM, and the top-right region remains empty in every panel. LIT, Latent Thinking at one step, Input Embedding, and the two Output Embedding candidates occupy the same region of the plane on four of the five LLMs, with moderate $\mathrm{PR}$ below the ceiling and high purity; on GPT-OSS-20B both LIT variants drop to lower purity while the Output and Input Embedding candidates stay in their usual region. The geometry therefore admits cross-task discrimination but constrains within-task variance to a narrow subspace regardless of architecture or size. Output Embedding candidates land in this same cluster yet still succeed at same-task discrimination in \cref{tab:disc-results}, because their within-task directions encode instance content that LIT's do not. Geometry alone cannot tell the two cases apart, so the analysis rules out failure without certifying success.

Same-task discrimination fails because the within-task geometry of latent thought representations is itself collapsed, either too narrow (low $\mathrm{PR}$), too noise-drifted (low purity), or both, and this holds across every LLM tested. Probe capacity therefore cannot explain the failure. Latent Thinking begins at one step in the high-purity region near LIT on every LLM, but adding latent steps cuts purity by roughly half on each LLM and additionally sheds $\mathrm{PR}$ on the larger Llama and on both distilled and RL-trained reasoning models. Soft Thinking begins in the opposite low-$\mathrm{PR}$ corner and gains $\mathrm{PR}$ steadily as depth grows, but $\Delta_{\cos}$ contracts in step with the gain (from $\sim 0.30$ at one step to $\sim 0.09$ at $128$ on the Llama models), so the additional dimensions opened up by depth carry less, not more, anisotropy-adjusted within-task signal. Soft Thinking with Gumbel noise traces the same rightward path at consistently lower purity and lands at $\Delta_{\cos} \in [0.02, 0.05]$ across all five LLMs, indistinguishable from Random Vector on that axis. No trajectory in any panel enters the high-$\mathrm{PR}$, high-purity, high-$\Delta_{\cos}$ region occupied by LIT and Output Embedding. The verdict is uniform across architectures and sizes. The distilled and RL-trained reasoning models produce nearly identical panels in \cref{fig:geom-summary}, with every clustering quantity and similarity scale agreeing to within a few percent across all candidates and only minor offsets in the Latent and Soft Thinking trajectories.

The geometric verdict matches the main-text tables across LLMs. Every candidate apart from Output Embedding has either low $\mathrm{PR}$ or low purity in each panel, and all of them fail same-task discrimination in \cref{tab:disc-results}. The $\mathrm{DCS}$ floor in \cref{tab:dcs} shares the same origin, because cosine similarity over a narrow or noise-drifted within-task subspace cannot reproduce a richer semantic equivalence matrix at any threshold. The picture is therefore consistent across architectures and sizes. The failure to discriminate within a task is a property of the geometry that latent computation produces, not an artifact of any single LLM.

$\Delta_{\cos}$ anchors the same picture on the scale axis. Random Vector is at the lower bound near $0$, Input Embedding at the upper bound near $0.52$, and LIT and Output Embedding occupy a range between roughly $0.18$ and $0.45$. All three diagnostics converge on the same failure pattern, and none depend on a trained discriminator.

\subsection{Probe Capacity Ablation}
\label{appendix:capacity_ablation}

\Cref{appendix:geometric} ruled out the most direct version of the weak-probe objection at the level of the geometry itself, with no trained classifier in the loop. We add a complementary line of defense within the probe-based metric by varying the same-task discriminator's capacity on the largest source model, Llama-3.3-70B, and showing that the chance-level same-task verdict for the latent-thinking and last-input-token candidates is invariant under three architectures of strictly increasing capacity.

The Baseline architecture, used throughout the main paper, projects the source hidden state into the input space of the frozen $1$B model through a single linear layer. The Deep projection replaces the linear with a two-layer MLP and a LayerNorm. The Deep projection $+$ UF2 architecture additionally unfreezes the last two transformer blocks of the $1$B model. Each step strictly enlarges the trainable-parameter budget, and the deepest tier carries roughly an order of magnitude more trainable parameters than the Baseline.

Three thought representations are evaluated as anchors. Last Input Token and Soft Thinking at $128$ steps are the focus of the same-task failure mode in the main results. Output Embedding (Exact) acts as the positive control, since it is computed from the generated answer itself. If any architecture has a chance to descend below the random-guess plateau, it should descend on this representation.


\begin{table}[!htbp]
  \caption{Probe capacity ablation on Llama-3.3-70B same-task discrimination. Each cell reports test accuracy and the tail-$50$ mean of the training BCE loss. Columns increase discriminator capacity from a single linear projection (Baseline) through a deep MLP projection (Deep) to additionally unfreezing the last two transformer blocks of the frozen $1$B backbone (Deep $+$ UF2). $\ln 2 \approx 0.693$ is the BCE value at uniform $0.5$ output and equates to random guess.}
  \label{tab:disc-capacity-ablation-70b}
  \centering
  \footnotesize
  \setlength{\tabcolsep}{6pt}
  \renewcommand{\arraystretch}{1.15}
  \begin{tabular}{@{}l cc cc cc@{}}
    \toprule
                              & \multicolumn{2}{c}{\textit{Baseline}} & \multicolumn{2}{c}{\textit{Deep}} & \multicolumn{2}{c}{\textit{Deep $+$ UF2}} \\
                              & \multicolumn{2}{c}{($\sim 19$M trainable)} & \multicolumn{2}{c}{($\sim 44$M trainable)} & \multicolumn{2}{c}{($\sim 165$M trainable)} \\
    \cmidrule(lr){2-3}\cmidrule(lr){4-5}\cmidrule(lr){6-7}
    Thought representation        & Acc & BCE & Acc & BCE & Acc & BCE \\
    \midrule
    Last Input Token              & $0.516$ & $0.685$ & $0.500$ & $0.693$ & $0.501$ & $0.696$ \\
    Soft Thinking at $128$ steps  & $0.513$ & $0.689$ & $0.511$ & $0.689$ & $0.508$ & $0.692$ \\
    Output Embedding (Exact)      & $0.726$ & $0.494$ & $0.617$ & $0.631$ & $0.642$ & $0.659$ \\
    \bottomrule
  \end{tabular}
\end{table}

\Cref{tab:disc-capacity-ablation-70b} reports test accuracy alongside the tail-$50$ mean of the training-loss curve, where $\ln 2 \approx 0.693$ is the BCE value at uniform $0.5$ output and equates to random-guess accuracy. Output Embedding (Exact) achieves its highest test accuracy at the Baseline tier ($0.726$), and adding capacity causes overfitting, with test accuracy declining to $0.617$ at the Deep tier and $0.642$ at Deep$+$UF2 despite an order-of-magnitude increase in trainable parameters, while BCE worsens toward the random-guess plateau. Last Input Token and Soft Thinking remain at random-guess accuracy at every capacity tier, with no improvement in accuracy or BCE regardless of how many parameters are available. The Baseline architecture was fixed in pilot experiments and applied consistently across all source LLMs and thought representations (see \cref{appendix:sub:disc_arch}). These ablations confirm post-hoc that it avoids the overfitting induced by deeper tiers while retaining the discriminative accuracy of the positive control. Under the larger architectures, the training loss of the candidates carrying no instance-discriminating content decreases transiently during training, but the tail-$50$ average over held-out steps remains at the plateau, indicating the optimiser learns batch-specific patterns rather than a generalisable structure.

Two independent attacks on the same objection, the geometric one with no probe in the loop and the probe-based one with a probe of increasing capacity, converge on the conclusion that the same-task failure is a property of the representation and not of the discriminator.

\subsection{Relationship to Downstream Task Accuracy}
\label{appendix:downstream}

A natural concern is that within-task separability collapses on hard tasks because the model's output distribution itself becomes less structured under difficulty, not because the representation is inadequate. If this were true, the collapse would track downstream task accuracy and tasks with lower pass@1 would show lower within-task discriminability across all candidates.

\Cref{fig:downstream-correlation} tests this directly by plotting per-task within-task discriminator accuracy against BBEH pass@1 across the 23 BBEH tasks and five LLMs. For thought-representation candidates averaged over the Last Input Token, soft-thinking, latent-thinking, and last-hidden-state families, the pooled Spearman correlation with downstream accuracy is $\rho = 0.10$ ($p = 0.31$, $n = 115$), and no individual LLM reaches significance. The Exact Output Embedding anchor, which has direct access to the generated continuation, also shows no significant correlation ($\rho = 0.14$, $p = 0.13$). Both series are consistent with near-zero correlation across the full range of task difficulties.

These results support the interpretation in \cref{sec:results:per-axiom}. The within-task separability collapse is a property of the thought representations rather than a consequence of tasks being difficult. The framework surfaces representational failures that downstream accuracy does not register.

\begin{figure}[t]
  \centering
  \includegraphics[width=\linewidth]{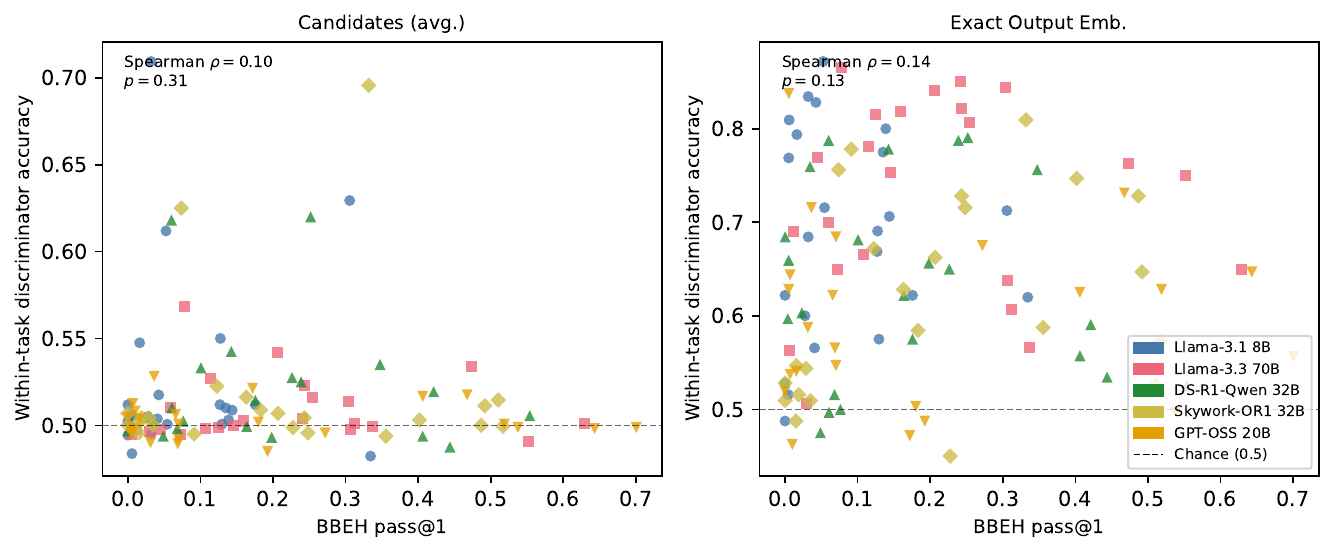}
  \caption{Within-task discriminator accuracy versus BBEH pass@1 for candidates
    averaged across Last Input Token, soft-thinking, latent-thinking, and
    last-hidden-state variants (left) and the Exact Output Embedding anchor
    (right), across 23 BBEH tasks and five LLMs. The dashed line marks
    the chance baseline. Spearman $\rho$ and $p$-values are annotated on each
    panel.}
  \label{fig:downstream-correlation}
\end{figure}

\section{Dataset and Benchmark Details}
\label{appendix:dataset}

\subsection{BBEH Task Coverage}
We evaluate on all 23 tasks from BBEH~\citep{kazemi2025bigbenchextrahard}.
Table~\ref{tab:bbeh_tasks} lists the tasks and their categories.
For each task, we use all available examples from the BBEH dataset,
yielding 4{,}520 unique problems across all 23 tasks.
Each problem is expanded to $K=8$ beams during generation, for 36{,}160 beam--problem
pairs in total.
These 4{,}520 problems are further partitioned for probe and discriminator training,
stratified per task: within each BBEH task, problems are randomly shuffled and split
into train / validation / test at an $0.8 / 0.1 / 0.1$ ratio, so every task
contributes proportionally to every split (3{,}616 / 452 / 452 problems in aggregate).

\begin{table}[h]
\caption{BBEH tasks used in our evaluation.}
\label{tab:bbeh_tasks}
\vskip 0.15in
\begin{center}
\begin{small}
\begin{sc}
\begin{tabular}{ll}
\toprule
\textbf{Task} & \textbf{Category} \\
\midrule
Boolean Expressions    & Symbolic Reasoning \\
Dyck Languages         & Symbolic Reasoning \\
Multistep Arithmetic   & Quantitative Reasoning \\
Object Counting        & Quantitative Reasoning \\
Time Arithmetic        & Quantitative Reasoning \\
Spatial Reasoning      & Spatial Reasoning \\
Geometric Shapes       & Spatial Reasoning \\
Word Sorting           & Language \\
Hyperbaton             & Language \\
Disambiguation QA      & Language \\
Linguini               & Language \\
Temporal Sequence      & Temporal Reasoning \\
Causal Understanding   & Causal Reasoning \\
Web of Lies            & Social Reasoning \\
Shuffled Objects       & Logical Reasoning \\
Zebra Puzzles          & Logical Reasoning \\
BoardGame QA           & Game Reasoning \\
Buggy Tables           & Tabular Reasoning \\
Movie Recommendation   & Recommendation \\
SportQA                & Factual QA \\
Object Properties      & Commonsense \\
NYC Coordinates (NYCC) & Geography \\
SARC Triples           & Sarcasm Detection \\
\bottomrule
\end{tabular}
\end{sc}
\end{small}
\end{center}
\vskip -0.1in
\end{table}

Several BBEH tasks build on earlier benchmarks, and we acknowledge the
originating work as requested by the BBEH authors.
BoardGame QA originates from~\citet{kazemi2024boardgameqa}.
Causal Understanding draws on~\citet{nie2024moca} and~\citet{kiciman2023causal}.
Dyck Languages and Word Sorting follow the setup
of~\citet{tyen2023llms}.
Geometric Shapes is based on~\citet{kazemi2023geomverse},
Linguini on~\citet{sanchez2024linguini},
and NYCC builds on~\citet{hessel2022androids}
and~\citet{zhang2024humor}.
Spatial Reasoning follows~\citet{yamada2023evaluating},
Time Arithmetic follows~\citet{fatemi2024test},
Web of Lies follows~\citet{white2024livebench},
and Zebra Puzzles follows~\citet{shah2024causal}.

\subsection{BBEH Answer Extraction}
We use the official BBEH answer extraction logic from
\citet{kazemi2025bigbenchextrahard}, which implements task-specific
regular expression patterns to extract final answers from free-form
model outputs.
\cref{tab:extraction-failure} reports per-LLM parsing statistics over
every generation. Per-beam parsing failure varies by more than an order
of magnitude across the LLMs we evaluate, and the fraction of problems
with at least one parsed beam follows the same ordering. For
Llama-3.3-70B-Instruct nearly every beam emits one of the canonical
answer prefixes, whereas the other LLMs leave a substantially larger
fraction of beams without any of the prefixes the extractor matches
against, and the answer cannot be recovered without other methods.

\begin{table}[!htbp]
  \caption{Answer-parsing statistics across LLMs over all generations. \emph{Per-beam failure} is the fraction of beams whose response does not contain any of the BBEH answer prefixes recognised by the official extractor of \citet{kazemi2025bigbenchextrahard}. \emph{Recoverable problems} reports the fraction of problems for which at least one of the $K=8$ beams parses successfully, the headroom a beam-level verifier could realise.}
  \label{tab:extraction-failure}
  \centering
  \footnotesize
  \begin{tabular}{@{}l r r@{}}
    \toprule
    \textbf{LLM} & per-beam failure & recoverable problems \\
    \midrule
    Llama-3.1 8B       &  42.9\% &  74.1\% \\
    Llama-3.3 70B      &   4.3\% &  99.0\% \\
    DS-R1-Qwen 32B     &  34.9\% &  68.1\% \\
    Skywork-OR1 32B    &  49.6\% &  51.7\% \\
    GPT-OSS 20B        &  58.3\% &  43.7\% \\
    \bottomrule
  \end{tabular}
\end{table}

\subsection{Output-Length Statistics}
\label{appendix:output_length}

\cref{tab:output-length-stats,fig:output-length,fig:output-length-per-task}
report character-length statistics of the saved generations across the
five LLMs we evaluate, computed over the entire dataset. Two patterns
emerge. First, GPT-OSS-20B, Skywork-OR1-32B and DS-R1-Qwen-32B exceed
Llama-3.1-8B-Instruct and Llama-3.3-70B-Instruct at the median by
roughly an order of magnitude, and their interquartile bands reach into
the ten-thousand-character range. Second, individual tasks shift the
entire column by a near-constant offset. Hard combinatorial tasks like
Web of Lies, Zebra Puzzles, Multistep Arithmetic and Shuffled Objects
remain at the top of every column, while short answer-format tasks like
NYCC, Linguini, SportQA and SARC Triples remain at the bottom, and
each LLM's relative ordering of tasks is preserved across columns.
Llama-3.3-70B-Instruct produces the tightest distribution of the five,
with both the narrowest IQR and the lowest mean despite a comparable
median to the 8B checkpoint.
The maximum of $818{,}647$ characters for GPT-OSS~20B originates from a single pathological example in the Multistep Arithmetic task. The model emits roughly $7{,}000$ characters of valid reasoning, then enters a runaway indentation loop in which each successive opening parenthesis is preceded by an increasing number of leading spaces, growing quadratically until the $8{,}192$-token generation cap is reached. Approximately $816{,}000$ of the $818{,}647$ characters are whitespace. All eight beams degenerate identically due to deterministic beam search. Evaluation operates at the token level and is unaffected by character length.

\begin{table}[!htbp]
  \caption{Output-length statistics in characters across LLMs over all generations. Each row reports the median, interquartile range, mean, $95$th percentile and maximum of \texttt{generated\_text} length over every beam.}
  \label{tab:output-length-stats}
  \centering
  \footnotesize
  \begin{tabular}{@{}l r r r r r@{}}
    \toprule
    & \multicolumn{5}{c}{\textit{Length in characters}} \\
    \cmidrule(lr){2-6}
    \textbf{LLM} & median & IQR & mean & 95th pct. & max \\
    \midrule
    Llama-3.1 8B       & 2,076 & 140--6,859 & 5,826 & 29,423 & 42,582 \\
    Llama-3.3 70B      & 2,091 & 668--3,957 & 3,152 & 10,116 & 39,809 \\
    DS-R1-Qwen 32B     & 7,458 & 2,377--16,213 & 10,160 & 29,259 & 39,082 \\
    Skywork-OR1 32B    & 17,043 & 6,447--24,469 & 15,986 & 30,990 & 38,375 \\
    GPT-OSS 20B        & 19,316 & 5,093--27,129 & 18,053 & 33,773 & 818,647 \\
    \bottomrule
  \end{tabular}
\end{table}

\begin{figure}[!htbp]
  \centering
  \includegraphics[width=0.6\linewidth]{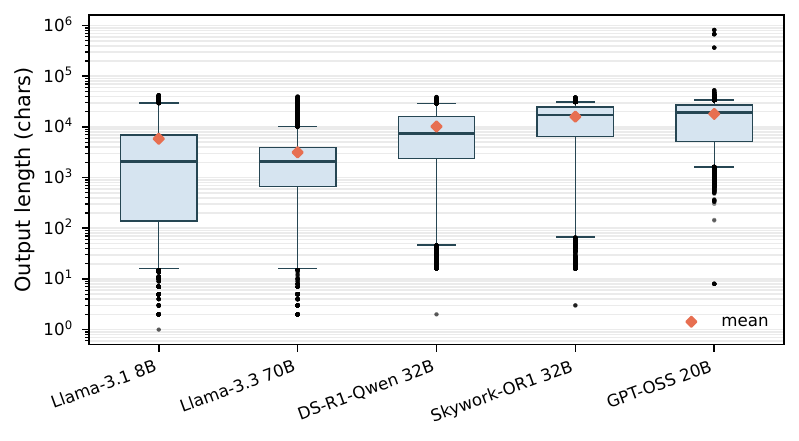}
  \caption{Output-length distribution per source LLM, in characters, pooled across every generation. Reasoning-distilled checkpoints (DS-R1-Qwen, Skywork-OR1) sit roughly an order of magnitude above the instruction-tuned Llama checkpoints at the median.}
  \label{fig:output-length}
\end{figure}

\begin{figure}[!htbp]
  \centering
  \includegraphics[width=0.85\linewidth]{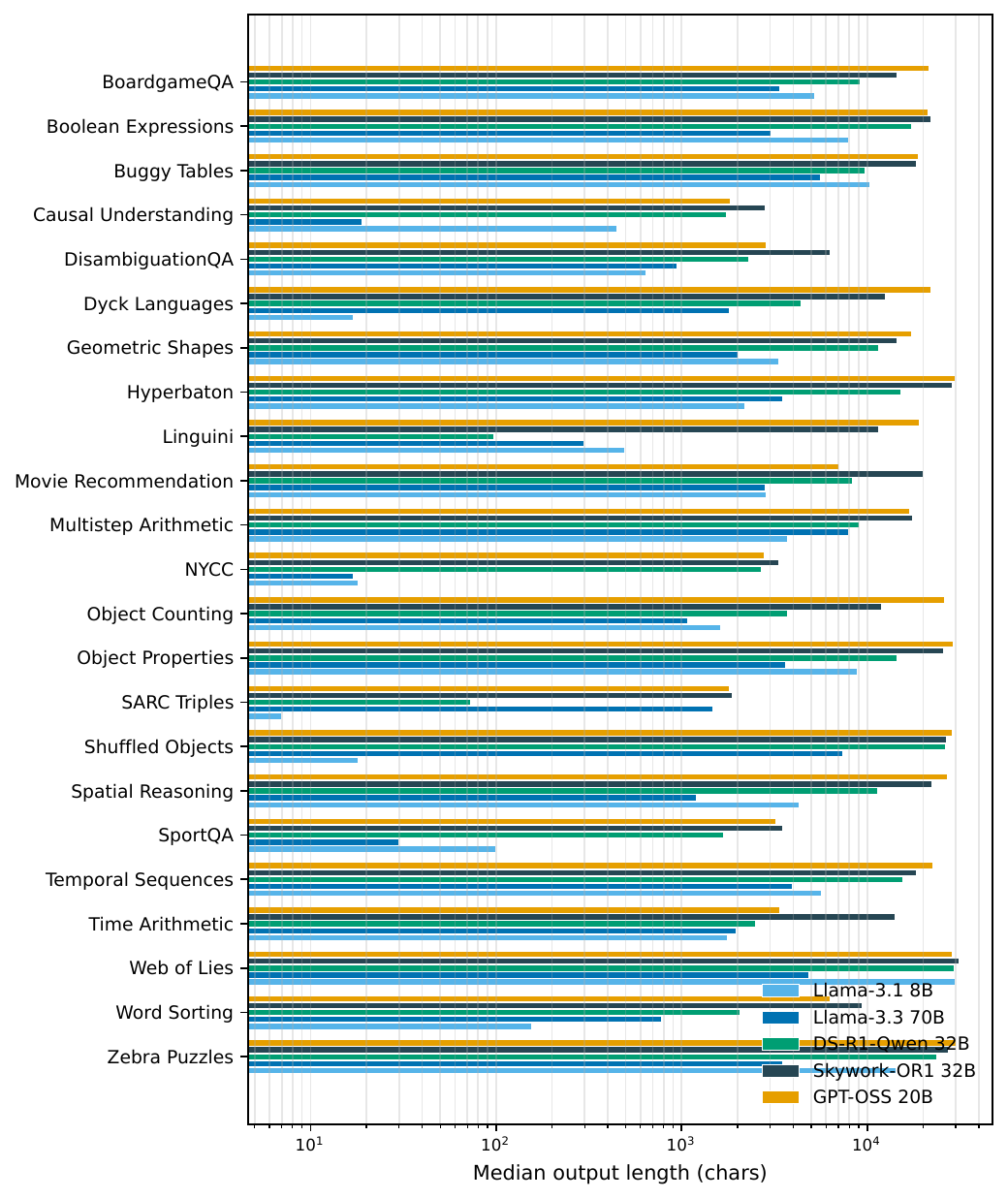}
  \caption{Median output length per BBEH task, in characters, with one bar per source LLM.}
  \label{fig:output-length-per-task}
\end{figure}

\subsection{BBEH Accuracy of the Source LLMs}
\label{appendix:bbeh_accuracy}

\cref{tab:bbeh-accuracy} reports BBEH correctness of the five LLMs
scored with the official extraction logic of
\citet{kazemi2025bigbenchextrahard} against the canonical task
targets. \textsc{p@1} averages correctness over all beams per problem,
while \textsc{p@8} marks a problem solved if at least one beam scores
correct, and the gap between the two columns quantifies the margin a
verifier could realise from the existing beam set. Across tasks,
Llama-3.3-70B-Instruct, Skywork-OR1-32B and GPT-OSS-20B remain ahead
of the other two on \textsc{p@1} within a few percentage points of
each other, and DS-R1-Qwen-32B records the largest \textsc{p@8} gain
relative to its own \textsc{p@1}, while GPT-OSS-20B records the
smallest such gain.

The bottom rows aggregate the per-task numbers two ways. The micro
average pools all problems across tasks, while the adjusted harmonic
mean uses the BBEH paper's aggregation $N\,/\,\sum_i 1/(a_i+1)$, which
down-weights uneven performance across tasks. For the two LLMs that
overlap with their Table~2 (Llama-3.1-8B-Instruct and DS-R1-Qwen-32B,
reported there at $3.6\%$ and $5.2\%$), our \textsc{p@1} adjusted
harmonic means agree to within the rounding granularity of the
published per-task accuracies. Llama-3.3-70B-Instruct, Skywork-OR1-32B
and GPT-OSS-20B were not evaluated under this aggregation in the
original BBEH release, so the corresponding columns extend the
benchmark's leaderboard to those checkpoints.

\begin{table}[!htbp]
  \caption{BBEH accuracy (\%) per task and source LLM. \textsc{p@1} averages correctness over all beams per problem and \textsc{p@8} marks a problem solved if at least one beam scores correct. Bottom rows give the micro average across all problems and the adjusted harmonic mean $N\,/\,\sum_i 1/(a_i+1)$ over the per-task column above.}
  \label{tab:bbeh-accuracy}
  \centering
  \footnotesize
  \begin{tabular}{@{}l rr rr rr rr rr@{}}
    \toprule
     & \multicolumn{2}{c}{Llama-8B} & \multicolumn{2}{c}{Llama-70B} & \multicolumn{2}{c}{DS-R1-Qwen 32B} & \multicolumn{2}{c}{Skywork-OR1 32B} & \multicolumn{2}{c}{GPT-OSS 20B} \\
    \cmidrule(lr){2-3} \cmidrule(lr){4-5} \cmidrule(lr){6-7} \cmidrule(lr){8-9} \cmidrule(lr){10-11}
    \textbf{Task} & \textsc{p@1} & \textsc{p@8} & \textsc{p@1} & \textsc{p@8} & \textsc{p@1} & \textsc{p@8} & \textsc{p@1} & \textsc{p@8} & \textsc{p@1} & \textsc{p@8} \\
    \midrule
    BoardgameQA & 14.4 & 31.0 & 31.2 & 35.0 & 44.4 & 53.5 & 49.2 & 53.0 & 51.9 & 52.5 \\
    Boolean Expressions & 17.6 & 30.0 & 24.1 & 49.0 & 16.4 & 17.5 & 22.8 & 24.0 & 6.6 & 7.0 \\
    Buggy Tables & 0.0 & 0.0 & 0.6 & 1.5 & 0.5 & 0.5 & 1.9 & 2.5 & 7.0 & 7.0 \\
    Causal Understanding & 30.6 & 68.0 & 47.4 & 73.5 & 55.4 & 59.0 & 51.7 & 58.5 & 53.8 & 54.0 \\
    DisambiguationQA & 33.4 & 60.0 & 55.2 & 68.3 & 40.6 & 46.7 & 51.0 & 54.2 & 40.6 & 41.7 \\
    Dyck Languages & 4.1 & 32.0 & 5.9 & 16.5 & 6.8 & 7.5 & 20.7 & 21.0 & 3.6 & 4.0 \\
    Geometric Shapes & 13.5 & 26.0 & 30.4 & 51.5 & 7.6 & 11.5 & 1.5 & 1.5 & 1.0 & 2.0 \\
    Hyperbaton & 0.4 & 1.0 & 4.5 & 12.5 & 0.4 & 0.5 & 3.5 & 3.5 & 0.7 & 1.0 \\
    Linguini & 1.6 & 3.5 & 7.8 & 16.5 & 6.0 & 10.0 & 7.4 & 10.0 & 6.9 & 7.0 \\
    Movie Recommendation & 12.7 & 29.0 & 62.9 & 70.0 & 34.8 & 60.5 & 40.2 & 47.5 & 64.3 & 68.0 \\
    Multistep Arithmetic & 0.0 & 0.0 & 7.2 & 9.5 & 22.6 & 23.5 & 35.6 & 37.0 & 27.2 & 29.5 \\
    NYCC & 12.9 & 47.5 & 10.8 & 27.0 & 10.1 & 32.0 & 12.2 & 24.5 & 18.0 & 18.0 \\
    Object Counting & 0.0 & 0.0 & 14.6 & 19.5 & 3.4 & 3.5 & 9.1 & 10.0 & 0.0 & 0.0 \\
    Object Properties & 0.5 & 2.0 & 1.1 & 3.0 & 0.0 & 0.0 & 0.0 & 0.0 & 0.7 & 1.0 \\
    SARC Triples & 13.9 & 40.0 & 33.7 & 44.0 & 19.8 & 54.5 & 24.8 & 32.5 & 19.2 & 21.0 \\
    Shuffled Objects & 12.8 & 58.0 & 12.4 & 34.0 & 2.3 & 5.5 & 2.9 & 3.0 & 1.6 & 2.5 \\
    Spatial Reasoning & 5.2 & 8.5 & 11.4 & 17.0 & 14.2 & 15.0 & 16.3 & 17.5 & 0.5 & 0.5 \\
    SportQA & 2.8 & 11.5 & 15.9 & 28.0 & 23.9 & 29.0 & 24.3 & 26.5 & 17.2 & 22.5 \\
    Temporal Sequences & 0.6 & 1.5 & 2.9 & 5.5 & 4.9 & 5.0 & 0.0 & 0.0 & 0.5 & 0.5 \\
    Time Arithmetic & 5.4 & 11.5 & 30.6 & 36.5 & 42.1 & 48.5 & 48.7 & 52.0 & 70.1 & 78.0 \\
    Web of Lies & 4.2 & 8.0 & 24.3 & 29.5 & 17.6 & 18.0 & 18.3 & 18.5 & 7.0 & 8.0 \\
    Word Sorting & 3.2 & 18.0 & 20.6 & 34.5 & 25.2 & 39.5 & 33.2 & 43.0 & 46.8 & 56.0 \\
    Zebra Puzzles & 3.2 & 11.0 & 25.4 & 53.5 & 6.1 & 12.0 & 1.6 & 3.0 & 3.1 & 4.5 \\
    \midrule
    \textbf{Micro avg.} & 7.9 & 21.0 & 20.3 & 31.3 & 17.2 & 23.7 & 20.2 & 23.1 & 19.1 & 20.8 \\
    \textbf{Adj.\ HM}   & 3.1 & 4.4 & 8.7 & 15.1 & 5.5 & 6.3 & 5.3 & 5.7 & 4.0 & 4.5 \\
    \bottomrule
  \end{tabular}
\end{table}

\subsection{Qualitative Beam Outputs and Embedder Similarity}
\label{appendix:beam_similarity}

To make the per-LLM beam pool concrete and to motivate why the
Output Embedding (\emph{Exact}) candidate of \cref{methodology}
captures the available semantic signal, we inspect pairs of beams from
the same input prompt and contrast a high-similarity pair against a
low-similarity pair. Cosine similarity is taken in the
Llama-Embed-Nemotron-8B space, the same space used by the Pooled and
Exact Output Embedding candidates throughout the paper. Both prompts
are drawn from the held-out test split of \cref{appendix:bbeh_accuracy}.

\begin{table}[h]
\caption{High-similarity pair from example~107 of the Time Arithmetic task. Both beams
walk through the same arithmetic chain and converge on the same answer,
with only the wording of the closing sentence differing.}
\label{tab:beam_sim_high}
\small
\centering
\begin{tabular}{p{0.47\textwidth} p{0.47\textwidth}}
\toprule
\multicolumn{2}{p{0.96\textwidth}}{\textbf{Prompt (truncated).}\
\textit{Question1\ldots If the date for the day after yesterday is
2017-12-02, what is the date 1116 days from now? Question2: A plane
leaves Location A at 01 AM UTC$+$0000 and arrives in Location B
$X'{:}Y'{:}Z'$ later. Location B is in IST(+0530). What time is it in
Location B when the plane lands?}} \\
\midrule
\textbf{Beam 1} & \textbf{Beam 2} \\
\midrule
\textit{To find the date 1116 days from 2017-12-02\ldots
$1116 \div 365 = 3$ years with a remainder of $111$\ldots
$2020$-$12$-$02 + 111$ days $= 2021$-$03$-$23$\ldots
$X' = 3$, $Y' = 24$, $Z' = 9$\ldots
UTC$+$0000\ $01{:}00{:}00 + 03{:}24{:}09 = 04{:}24{:}09 + 5\,$h
$+ 0.5\,$h $=$ UTC$+$0530\ $09{:}54{:}09$.
So, the plane lands on \underline{the same day at 09:54:09 in Location B}.
The answer is: same\_day, 09:54:09}
&
\textit{To find the date 1116 days from 2017-12-02\ldots
$1116 \div 365 = 3$ years with a remainder of $111$\ldots
$2020$-$12$-$02 + 111$ days $= 2021$-$03$-$23$\ldots
$X' = 3$, $Y' = 24$, $Z' = 9$\ldots
UTC$+$0000\ $01{:}00{:}00 + 03{:}24{:}09 = 04{:}24{:}09 + 5\,$h
$+ 0.5\,$h $=$ UTC$+$0530\ $09{:}54{:}09$.
So, the plane lands on \underline{the same day, 09:54:09}.
The answer is: same\_day, 09:54:09}
\\
\midrule
\multicolumn{2}{c}{Embedder-space cosine $\cos = 0.997$} \\
\bottomrule
\end{tabular}
\end{table}

\begin{table}[h]
\caption{Low-similarity pair from example~113 of the Causal Understanding task. The
left beam walks through the rules in a multi-step argument; the right
beam returns the verdict in one line. The two beams sit in opposite
regions of the embedder space. They also disagree on the final answer:
the long-form beam concludes Yes (matching the ground truth), while the
short beam answers No.}
\label{tab:beam_sim_low}
\small
\centering
\begin{tabular}{p{0.47\textwidth} p{0.47\textwidth}}
\toprule
\multicolumn{2}{p{0.96\textwidth}}{\textbf{Prompt (truncated).}\
\textit{Jim, Carol, Bob, and Nancy share electricity from 8--9 PM. The
breaker fails if three lamps are on at the same time. Jim is not
supposed to use his lamp on Thursdays. Carol, Bob, and Nancy may use
theirs. This Thursday Jim, Carol, and Bob all turn on their lamps at
8 PM and the breaker fails. Did Jim cause the breaker to fail? Reply
Yes / No / Ambiguous.}} \\
\midrule
\textbf{Beam 2} (long-form chain of thought) & \textbf{Beam 7} (short
direct answer) \\
\midrule
\textit{To determine if Jim turning on his lamp at 8 PM caused the
circuit breaker to fail, let's analyze the situation step by step.
1.~The breaker will fail if three people turn on their lamps at the
same time. 2.~Jim is not supposed to use his lamp on Thursdays, but he
turned it on anyway. 3.~Carol and Bob also turned on their lamps at the
same time as Jim. 4.~Since Carol and Bob are allowed to use their lamps
on Thursdays, their actions did not cause the breaker to fail.
5.~The breaker failed because three people (Jim, Carol, and Bob) turned
on their lamps at the same time. Considering these points, the majority
of people would likely say that Jim turning on his lamp at 8 PM caused
the circuit breaker to fail\ldots}
&
\vspace{2em}\textit{The answer is: No.}
\\
\midrule
\textit{Verdict: \textbf{Yes} (correct)} & \textit{Verdict: \textbf{No}} \\
\midrule
\multicolumn{2}{c}{Embedder-space cosine $\cos = 0.063$} \\
\bottomrule
\end{tabular}
\end{table}

The eight beams of the Causal Understanding example~113 cluster into three
regions of the embedder space. These are the long-form chain-of-thought mode and
the two short-answer modes (``Yes'' and ``No''). Within each mode,
beams sit at $\cos \ge 0.98$ from one another. Between the long-form
mode and either short mode, $\cos$ collapses to the $[0.06, 0.10]$
range regardless of whether the two beams agree on the final verdict,
so the low score in \cref{tab:beam_sim_low} is driven by format
divergence rather than answer disagreement. The showcased pair happens
to combine both format divergence and answer disagreement. The long-form beam reaches the correct answer (Yes)
while the short beam answers No, but a long-form Yes paired with a
short-form Yes would yield a cosine in the same low range. The two
short modes sit at $\cos \approx 0.80$ from each other despite
contradicting each other on the verdict. The embedder therefore
separates beams primarily by response format and only secondarily by
the content of the answer itself. This format-first geometry is what
lets the Output Embedding reach the high-purity, high-$\Delta_{\cos}$
region of \cref{appendix:geometric}, and is also why no purely latent
candidate inherits the same separation, since none of them preserves
the surface form that the embedder keys on.

\section{Broader Impacts}
\label{appendix:impact}

The work is a diagnostic protocol for representations inside frozen LLMs and produces no new generative capability, no new dataset, and no deployed system. The protocol contributes to better reasoning models and a deeper understanding of functional thought representations in LLMs. We do not identify negative societal effects of the contribution.

\section{Earlier Formulations}
\label{appendix:earlier}

\subsection{Discriminator-Based DCS Evaluation Protocol}
\label{appendix:sub:dcs_disc}

The earlier formulation of DCS reused the same-task discriminator $f_{\mathrm{disc}}$ to score within-question beam pairs.
For each test problem $x$ with $K=8$ beams, two variants of the semantic equivalence matrix were constructed.
$\mathbf{E}^{\mathrm{emb}} \in \{0,1\}^{K \times K}$ sets $\mathbf{E}^{\mathrm{emb}}_{ij} = 1$ when the cosine similarity between the Nemotron embeddings of $y_i$ and $y_j$ exceeds $\tau = 0.9$, and $0$ otherwise.
$\mathbf{E}^{\mathrm{parse}} \in \{0,1\}^{K \times K}$ sets $\mathbf{E}^{\mathrm{parse}}_{ij} = 1$ when the extracted final answers from $y_i$ and $y_j$ match exactly using the official BBEH answer extraction logic~\citep{kazemi2025bigbenchextrahard}, with beam pairs for which either side has no extractable answer excluded rather than labelled non-equivalent.
The functional similarity matrix $\mathbf{M} \in [0,1]^{K \times K}$ was defined by the symmetric cross score $\mathbf{M}_{ij} = \tfrac{1}{2}(f_{\mathrm{disc}}(\mathbf{T}_i, y_j) + f_{\mathrm{disc}}(\mathbf{T}_j, y_i))$, and the score was the inverse mean absolute error between $\mathbf{M}$ and $\mathbf{E}$ over off-diagonal pairs,
\begin{equation}\label{eq:dcs-disc-earlier}
    \mathrm{DCS}_{\mathrm{disc}}(x) = 1 - \frac{1}{K(K-1)}\sum_{i}\sum_{j \neq i} \bigl|\mathbf{M}_{ij} - \mathbf{E}_{ij}\bigr|.
\end{equation}
This formulation proved uninformative in practice.
The discriminator was trained on cross-question pairs and provided no gradient signal for within-question scoring, so $f_{\mathrm{disc}}$ returned values near $0.5$ for all same-question beam pairs.
This collapsed $\mathrm{DCS}_{\mathrm{disc}}$ to the random baseline for every representation family and source LLM (see \cref{tab:dcs}).

\subsection{Causality with the Discriminator-Trained Projection}
\label{appendix:sub:causality_disc_proj}

\cref{tab:causality-disc} reports the causality KL under the discriminator-trained projection of \cref{appendix:sub:disc_arch}. This was the projection used in our first iteration of the causality protocol, before the projection-swap pilot of \cref{appendix:sub:causality_minproj} motivated adopting the output-reconstruction projection in \cref{tab:causality-results}. The earlier table is retained so that the effect of the projection swap remains visible cell-by-cell against the current results. Without length normalisation and under the discriminator-trained projection, several causality values are not well calibrated. Random vectors receive lower causality scores than expected because the model can ignore an uninformative input and produce whatever is more aligned with the output. Several other candidate representations produce higher causality than the random vector, which contradicts the behavioural reading the metric should provide.


\begin{table}[!htbp]
  \caption{Causality KL ($\downarrow$) across source LLMs under the discriminator-trained projection of \cref{appendix:sub:disc_arch}. The discriminator dataset expands single-vector representations to a common training length and repeats shorter multi-vector representations to that same length, so the projection itself is learned at fixed length and is internally consistent with feeding $\mathbf{T}$ at the same length here.}
  \label{tab:causality-disc}
  \centering
  \footnotesize
  \setlength{\tabcolsep}{2.5pt}
  \renewcommand{\arraystretch}{1.1}
  \resizebox{\textwidth}{!}{%
  \begin{tabular}{@{}l *{2}{c} !{\vrule} *{2}{c} *{5}{c} *{5}{c} *{5}{c} !{\vrule} *{2}{c}@{}}
    \toprule
    & \multicolumn{2}{c}{\textit{Output Emb.}}
      & \multicolumn{2}{c}{\textit{Last Input Tok.}}
      & \multicolumn{5}{c}{\textit{Soft Thinking (no noise)}}
      & \multicolumn{5}{c}{\textit{Soft Thinking (Gumbel)}}
      & \multicolumn{5}{c}{\textit{Latent Thinking}}
      & \multicolumn{2}{c}{\textit{Baselines}} \\
    \cmidrule(lr){2-3}\cmidrule(lr){4-5}\cmidrule(lr){6-10}\cmidrule(lr){11-15}\cmidrule(lr){16-20}\cmidrule(lr){21-22}
    \textbf{LLM}
      & Exc & Pool
      & All & Final
      & 1 & 16 & 32 & 64 & 128
      & 1 & 16 & 32 & 64 & 128
      & 1 & 16 & 32 & 64 & 128
      & IE & RV \\
    \midrule
    Llama-3.1 8B     & \ciAdvNS{4.89}{0.09} & \ciAdvNS{4.69}{0.09} & \ciAdvNS{6.13}{0.06} & \ciAdvNS{4.52}{0.07} & \ciAdvNS{4.43}{0.06} & \ciAdvNS{5.83}{0.08} & \ciAdvNS{9.31}{0.09} & \ciAdvNS{8.93}{0.08} & \ciAdvNS{7.31}{0.08} & \ciAdvNS{4.96}{0.08} & \ciAdvNS{7.46}{0.16} & \ciAdvNS{7.94}{0.10} & \ciAdvNS{8.40}{0.08} & \ciAdvNS{7.15}{0.06} & \ciAdvInf{4.21}{0.07} & \ciAdvNS{8.06}{0.08} & \ciAdvNS{4.37}{0.09} & \ciAdvNS{9.81}{0.09} & \ciAdvNS{8.14}{0.10} & \ciAdvNS{7.46}{0.13} & \ciSE{4.48}{0.07} \\
    Llama-3.3 70B    & \ciAdvNS{4.90}{0.08} & \ciAdvNS{8.45}{0.06} & \ciAdvNS{7.41}{0.07} & \ciAdvNS{10.36}{0.04} & \ciAdvNSB{4.85}{0.09} & \ciAdvNS{9.34}{0.07} & \ciAdvNS{7.69}{0.05} & \ciAdvNS{9.61}{0.09} & \ciAdvNS{10.51}{0.07} & \ciAdvNS{5.84}{0.07} & \ciAdvNS{6.93}{0.08} & \ciAdvNS{10.27}{0.12} & \ciAdvNS{7.18}{0.06} & \ciAdvNS{8.53}{0.05} & \ciAdvNS{6.85}{0.06} & \ciAdvNS{8.99}{0.05} & \ciAdvNS{9.47}{0.05} & \ciAdvNS{8.09}{0.05} & \ciAdvNS{9.52}{0.05} & \ciAdvNS{4.54}{0.12} & \ciSE{4.44}{0.05} \\
    DS-R1-Qwen 32B  & \ciAdvNS{4.60}{0.08} & \ciAdvNS{4.81}{0.07} & \ciAdvNS{9.21}{0.06} & \ciAdvNS{9.18}{0.08} & \ciAdvNS{7.01}{0.08} & \ciAdvNS{6.20}{0.06} & \ciAdvNS{7.24}{0.07} & \ciAdvNS{9.20}{0.07} & \ciAdvNS{7.74}{0.08} & \ciAdvNS{8.15}{0.13} & \ciAdvNS{7.42}{0.06} & \ciAdvNS{6.83}{0.06} & \ciAdvNS{8.90}{0.07} & \ciAdvNS{7.34}{0.05} & \ciAdvNSB{4.74}{0.06} & \ciAdvNS{5.52}{0.06} & \ciAdvNS{9.55}{0.09} & \ciAdvNS{7.67}{0.06} & \ciAdvNS{8.02}{0.06} & \ciAdvNS{6.56}{0.11} & \ciSE{4.54}{0.06} \\
    \bottomrule
  \end{tabular}%
  }
\end{table}

\subsection{Cross-Entropy Proxy for Minimality}
\label{appendix:minimality-components}

The minimality formulation we initially proposed approximated the IB Lagrangian as a single cross-entropy gap $\Delta_{\text{CE}} = \text{CE}(X \mid \mathbf{T}) - \text{CE}(Y \mid \mathbf{T})$, on the intuition that a high gap reflects a representation that compresses the input while retaining output-relevant information. We retain that analysis here for transparency. Two issues motivated the corrected formulation $\Delta_{\text{IB}}$ adopted in \cref{methodology} and derived in \cref{appendix:sub:minimality_ib_derivation}. First, $\Delta_{\text{CE}}$ corresponds to the IB Lagrangian only at the trade-off weight $\beta = 1$, where the chain-rule decomposition cancels the sufficiency reward and leaves only the redundancy penalty $-I(X; \mathbf{T} \mid Y)$, so the metric is unbalanced before any empirical concern. Second, the input-reconstruction probe $\text{CE}(X \mid \mathbf{T})$ saturates uniformly across representations because no probe in our class can reconstruct the exact lexical phrasing of $X$ from any latent $\mathbf{T}$, which collapses the gap into a function of $\text{CE}(Y \mid \mathbf{T})$ alone. The corrected $\Delta_{\text{IB}}$ replaces $\text{CE}(X \mid \mathbf{T})$ with the conditional $\text{CE}(X \mid Y, \mathbf{T})$, which estimates the residual mutual information $I(X; \mathbf{T} \mid Y)$ and does not saturate. The components reported below corroborate the saturation diagnosis on every source model.

A high $\Delta_{\text{CE}}$ driven by high $\text{CE}(X \mid \mathbf{T})$ (input compression) and low $\text{CE}(Y \mid \mathbf{T})$ (output retention) would reflect a minimal sufficient $\mathbf{T}$, whereas a high gap produced by both conditionals drifting upward reflects a $\mathbf{T}$ that has lost information about both $X$ and $Y$. \cref{tab:minimality-ce-x} reports the input component per source LLM, and \cref{tab:minimality-ce-y} reports the output component. The Random Vector anchor for the input probe converges to $\text{CE}(X \mid \mathrm{RV}) \approx 1.87 \pm 0.04$ on every source LLM, since the probe sees the same input distribution and i.i.d.\ noise regardless of which model produced the candidates, so the column is omitted from \cref{tab:minimality-ce-x} and reported once here. Every Llama-3.1-8B cell of $\text{CE}(X \mid \mathbf{T})$ has a CI that overlaps this anchor, so the input component alone does not discriminate among candidates on this model. The output component does separate $\mathrm{RV}$ from every other candidate, since the Random Vector $\text{CE}(Y \mid \mathbf{T})$ is above the upper CI bound of every non-RV cell.

The two anchor candidates make the rest of the table interpretable. The Output Embedding (\emph{Exact}) achieves the lowest CE$(Y \mid \mathbf{T})$ on Llama-3.1-8B, since the probe is being asked to predict $Y$ from a representation of $Y$ itself, and the Pooled variant follows behind with the per-beam information collapsed away. The Input Embedding is at the centre of the candidate cloud, since the input prompt alone determines the high-probability output and a probe over its embedding can already predict $Y$ before any latent computation has happened. Any candidate above the Input Embedding's CE$(Y \mid \mathbf{T})$ has drifted away from the input without acquiring additional $Y$-relevant content, while any candidate below it has packed in further output-relevant information beyond what the prompt already carries.

The same anchor reading carries to the implied gap. Subtracting the two components above on Llama-3.1-8B places the Input Embedding's $\Delta_{\text{CE}}$ already in the high $0.9$ range, so the prompt itself, before any latent computation, already exhibits the minimality profile that a thought representation is supposed to provide. Most thinking candidates land at or below that input-alone gap, and only the Output Embedding (\emph{Exact}) clearly exceeds it. The proxy therefore separates Random Vector from the rest of the field, but it also reports that latent thinking does not produce a more compressed, output-relevant summary of the problem than directly embedding the input prompt.


\begin{table}[!htbp]
  \caption{Input-reconstruction cross-entropy $\text{CE}(X \mid \mathbf{T})$ across source LLMs. Higher values indicate that $\mathbf{T}$ carries less information about the input prompt. The Random Vector baseline is omitted because the probe sees the same $X$ and i.i.d.\ noise on every source LLM, so $\text{CE}(X \mid \mathrm{RV}) \approx 1.87$ uniformly and serves as a single shared anchor referenced in the surrounding prose.}
  \label{tab:minimality-ce-x}
  \centering
  \footnotesize
  \setlength{\tabcolsep}{2.5pt}
  \renewcommand{\arraystretch}{1.1}
  \resizebox{\textwidth}{!}{%
  \begin{tabular}{@{}l *{2}{c} !{\vrule} *{2}{c} *{5}{c} *{5}{c} *{5}{c} !{\vrule} c@{}}
    \toprule
    & \multicolumn{2}{c}{\textit{Output Emb.}}
      & \multicolumn{2}{c}{\textit{Last Input Tok.}}
      & \multicolumn{5}{c}{\textit{Soft Thinking (no noise)}}
      & \multicolumn{5}{c}{\textit{Soft Thinking (Gumbel)}}
      & \multicolumn{5}{c}{\textit{Latent Thinking}}
      & \textit{Baseline} \\
    \cmidrule(lr){2-3}\cmidrule(lr){4-5}\cmidrule(lr){6-10}\cmidrule(lr){11-15}\cmidrule(lr){16-20}\cmidrule(lr){21-21}
    \textbf{LLM}
      & Exc & Pool
      & All & Final
      & 1 & 16 & 32 & 64 & 128
      & 1 & 16 & 32 & 64 & 128
      & 1 & 16 & 32 & 64 & 128
      & IE \\
    \midrule
    Llama-3.1 8B     & \ciSE{1.77}{0.04} & \ciSE{1.78}{0.04} & \ciSE{1.74}{0.04} & \ciSE{1.75}{0.04} & \ciSE{1.74}{0.04} & \ciSE{1.74}{0.04} & \ciSE{1.75}{0.04} & \ciSE{1.77}{0.04} & \ciSE{1.78}{0.04} & \ciSE{1.75}{0.04} & \ciSE{1.75}{0.04} & \ciSE{1.76}{0.04} & \ciSE{1.77}{0.04} & \ciSE{1.78}{0.04} & \ciSE{1.74}{0.04} & \ciSE{1.74}{0.04} & \ciSE{1.75}{0.04} & \ciSE{1.76}{0.04} & \ciSE{1.76}{0.04} & \ciSE{1.78}{0.04} \\
    Llama-3.3 70B    & \ciSE{1.75}{0.04} & \ciSE{1.77}{0.04} & \ciSE{1.70}{0.04} & \ciSE{1.73}{0.04} & \ciSE{1.74}{0.04} & \ciSE{1.69}{0.04} & \ciSE{1.72}{0.04} & \ciSE{1.74}{0.04} & \ciSE{1.76}{0.04} & \ciSE{1.74}{0.04} & \ciSE{1.72}{0.04} & \ciSE{1.74}{0.04} & \ciSE{1.75}{0.04} & \ciSE{1.76}{0.04} & \ciSE{1.72}{0.04} & \ciSE{1.72}{0.04} & \ciSE{1.73}{0.04} & \ciSE{1.73}{0.04} & \ciSE{1.74}{0.04} & \ciSE{1.78}{0.04} \\
    DS-R1-Qwen 32B  & \ciSE{1.75}{0.04} & \ciSE{1.78}{0.04} & \ciSE{1.73}{0.04} & \ciSE{1.74}{0.04} & \ciSE{1.74}{0.04} & \ciSE{1.75}{0.04} & \ciSE{1.76}{0.04} & \ciSE{1.78}{0.04} & \ciSE{1.80}{0.04} & \ciSE{1.75}{0.04} & \ciSE{1.78}{0.04} & \ciSE{1.78}{0.04} & \ciSE{1.79}{0.04} & \ciSE{1.81}{0.04} & \ciSE{1.73}{0.04} & \ciSE{1.72}{0.04} & \ciSE{1.73}{0.04} & \ciSE{1.74}{0.04} & \ciSE{1.75}{0.04} & \ciSE{1.78}{0.04} \\
    \bottomrule
  \end{tabular}%
  }
\end{table}

\begin{table}[!htbp]
  \caption{Output-prediction cross-entropy $\text{CE}(Y \mid \mathbf{T})$ across source LLMs at each representation's native sequence length. Lower values indicate that $\mathbf{T}$ retains information sufficient to predict the output sequence. The tiled-length companion in \cref{tab:minimality-ce-y-tiled} reports the same quantity with every representation fed at a common length, isolating the effect of length normalisation.}
  \label{tab:minimality-ce-y}
  \centering
  \footnotesize
  \setlength{\tabcolsep}{2.5pt}
  \renewcommand{\arraystretch}{1.1}
  \resizebox{\textwidth}{!}{%
  \begin{tabular}{@{}l *{2}{c} !{\vrule} *{2}{c} *{5}{c} *{5}{c} *{5}{c} !{\vrule} *{2}{c}@{}}
    \toprule
    & \multicolumn{2}{c}{\textit{Output Emb.}}
      & \multicolumn{2}{c}{\textit{Last Input Tok.}}
      & \multicolumn{5}{c}{\textit{Soft Thinking (no noise)}}
      & \multicolumn{5}{c}{\textit{Soft Thinking (Gumbel)}}
      & \multicolumn{5}{c}{\textit{Latent Thinking}}
      & \multicolumn{2}{c}{\textit{Baselines}} \\
    \cmidrule(lr){2-3}\cmidrule(lr){4-5}\cmidrule(lr){6-10}\cmidrule(lr){11-15}\cmidrule(lr){16-20}\cmidrule(lr){21-22}
    \textbf{LLM}
      & Exc & Pool
      & All & Final
      & 1 & 16 & 32 & 64 & 128
      & 1 & 16 & 32 & 64 & 128
      & 1 & 16 & 32 & 64 & 128
      & IE & RV \\
    \midrule
    Llama-3.1 8B     & \ciSE{0.77}{0.03} & \ciSE{0.84}{0.02} & \ciSE{0.78}{0.02} & \ciSE{0.81}{0.02} & \ciSE{0.85}{0.03} & \ciSE{0.82}{0.02} & \ciSE{0.84}{0.02} & \ciSE{0.86}{0.02} & \ciSE{0.83}{0.03} & \ciSE{0.89}{0.03} & \ciSE{0.84}{0.03} & \ciSE{0.88}{0.02} & \ciSE{0.88}{0.02} & \ciSE{0.90}{0.03} & \ciSE{0.83}{0.02} & \ciSE{0.78}{0.02} & \ciSE{0.79}{0.02} & \ciSE{0.80}{0.02} & \ciSE{0.82}{0.02} & \ciSE{0.83}{0.02} & \ciSE{1.91}{0.08} \\
    Llama-3.3 70B    & \ciSE{1.25}{0.03} & \ciSE{1.31}{0.03} & \ciSE{1.16}{0.03} & \ciSE{1.22}{0.03} & \ciSE{1.38}{0.03} & \ciSE{1.21}{0.03} & \ciSE{1.23}{0.03} & \ciSE{1.25}{0.03} & \ciSE{1.25}{0.03} & \ciSE{1.39}{0.03} & \ciSE{1.24}{0.03} & \ciSE{1.27}{0.03} & \ciSE{1.28}{0.03} & \ciSE{1.27}{0.03} & \ciSE{1.28}{0.03} & \ciSE{1.24}{0.03} & \ciSE{1.25}{0.03} & \ciSE{1.27}{0.03} & \ciSE{1.29}{0.03} & \ciSE{1.35}{0.03} & \ciSE{2.35}{0.07} \\
    DS-R1-Qwen 32B  & \ciSE{0.83}{0.03} & \ciSE{0.86}{0.03} & \ciSE{0.76}{0.02} & \ciSE{0.78}{0.02} & \ciSE{0.83}{0.03} & \ciSE{0.82}{0.03} & \ciSE{0.83}{0.03} & \ciSE{0.87}{0.03} & \ciSE{0.89}{0.03} & \ciSE{0.86}{0.03} & \ciSE{0.89}{0.03} & \ciSE{0.87}{0.03} & \ciSE{0.88}{0.03} & \ciSE{0.89}{0.03} & \ciSE{0.82}{0.02} & \ciSE{0.83}{0.03} & \ciSE{0.84}{0.03} & \ciSE{0.86}{0.03} & \ciSE{0.87}{0.03} & \ciSE{0.91}{0.03} & \ciSE{1.33}{0.05} \\
    \bottomrule
  \end{tabular}%
  }
\end{table}


\subsection{Qualitative Probe Reconstruction}
\label{appendix:qualitative_probe}

\cref{tab:probe_reconstruction} shows a concrete example of input probe
reconstruction failure. Given the Last Input Token thought representation
$\mathbf{T}$ extracted from a BoardGame QA problem (a multi-step logical deduction
task from BBEH), the input probe (a linear projection followed by a frozen
LLaMA-3.2-1B) attempts to reconstruct the original input sequence $X$.

\begin{table}[h]
\caption{Qualitative example of input probe reconstruction failure.
The probe is conditioned on the Last Input Token thought representation
($\Delta_{\mathrm{CE}} = 0.18$, $\mathrm{CE}(X|\mathbf{T}) = 1.63$).
All 10 sampled problems from this task produce the same boilerplate prediction.}
\label{tab:probe_reconstruction}
\small
\begin{tabular}{p{0.47\textwidth} p{0.47\textwidth}}
\toprule
\textbf{Reference Input $X$ (truncated)} & \textbf{Probe Prediction $\hat{X}$ given $\mathbf{T}$} \\
\midrule
\textit{A few players are playing a boardgame. The current state of the game
is as follows. The akita is watching a movie from 1974. The akita is a high
school teacher. The dove has a football with a radius of 30 inches. The elk is
3 years old. The german shepherd unites with the ant. [\ldots] And the rules of
the game are as follows. Rule1: \ldots Rule2: \ldots Rule3: \ldots}
&
\textit{Question: Which of the following is not a valid argument for the claim
that the number of people who are not in the United States is greater than the
number of people who are in the United States?\newline
A. The number of people who are in the United States is greater than \ldots\newline
B. \ldots\newline
C. \ldots\newline
D. \ldots\newline
Answer: D}
\\
\bottomrule
\end{tabular}
\end{table}

Crucially, all 10 sampled BoardGame QA problems yield the same
boilerplate prediction, namely a multiple-choice question about a completely unrelated
topic. This indicates that the thought representation $\mathbf{T}$ has discarded
all problem-specific content. The probe has learned to output a domain-generic
template (plausible given the BBEH multiple-choice format) rather than
recovering any instance-specific detail. This qualitatively confirms the
same-task separability collapse. If the probe cannot distinguish between 10
different board game problems when conditioned on $\mathbf{T}$, neither can
the same-task discriminator.

\subsection{Discriminator-Based DCS Results}
\label{appendix:sub:dcs_disc_results}

The discriminator-based DCS formulation and its failure mode are documented in \cref{appendix:sub:dcs_disc}.
The results below retain that analysis for transparency.

\cref{fig:dcs-emb-vs-parse} (top row) plots discriminator-based DCS under $\mathbf{E}_{\text{parse}}$ against the same score under $\mathbf{E}_{\text{emb}}$ for every representation, one panel per source model.
The two variants agree closely on every model, so the table reports $\mathbf{E}_{\text{emb}}$.
On Llama-3.3-70B-Instruct every representation crowds near $0.5$, so the linear correlation collapses to noise even though the absolute disagreement remains small.
$\mathbf{E}_{\text{emb}}$ also remains defined on beams where BBEH answer extraction fails (see \cref{appendix:dataset}).

\cref{fig:dcs-emb-vs-parse} (bottom row) shows the cosine-similarity distribution over off-diagonal beam pairs.
The distribution is bimodal on every model, with a large right mode near $1$ accounting for paraphrase-equivalent pairs and a small left mode below $0.5$ for genuine cross-answer pairs.

\begin{figure}[t]
  \centering
  \includegraphics[width=\linewidth]{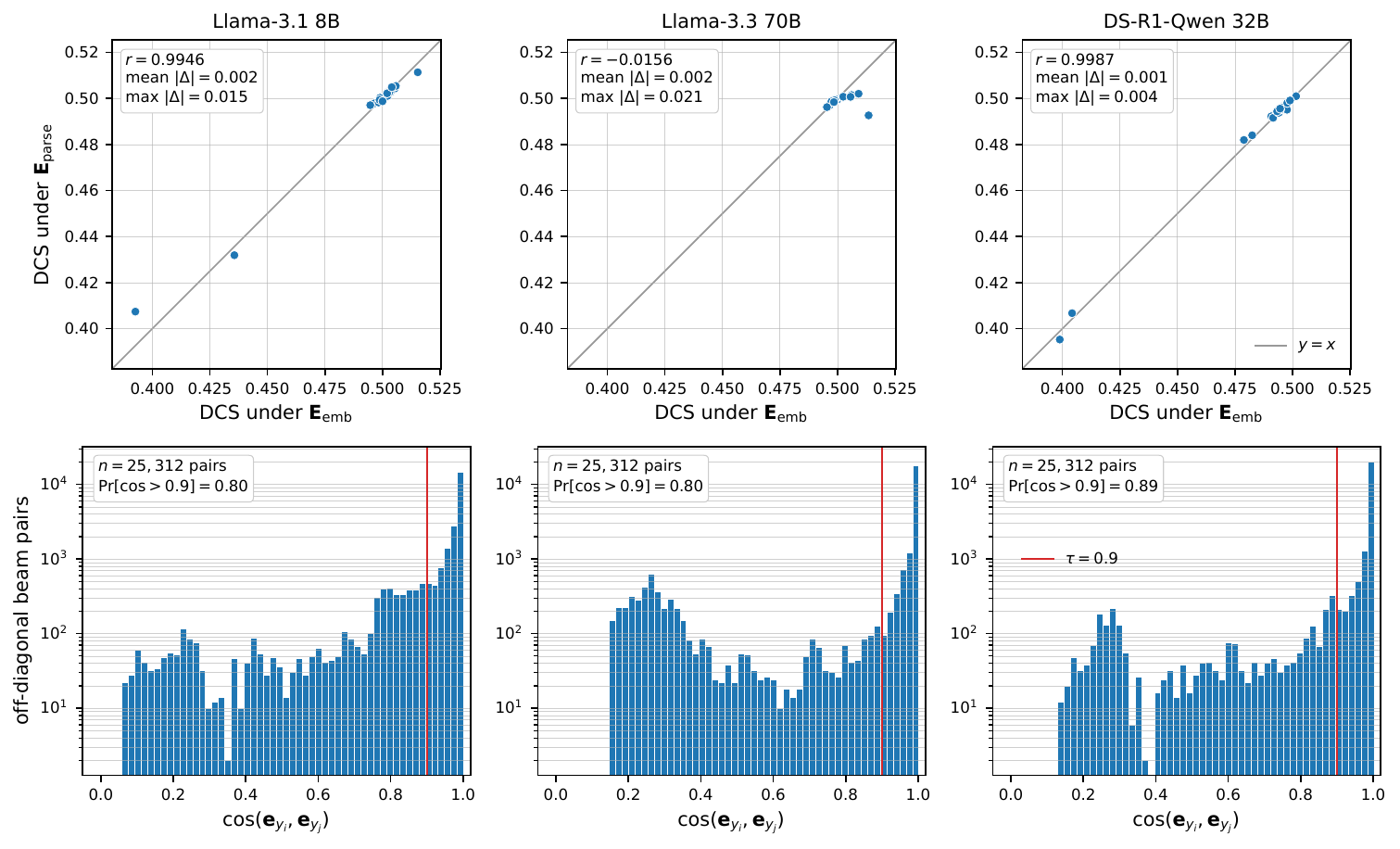}
  \caption{Discriminator-based DCS. The top row shows the score under $\mathbf{E}_{\text{parse}}$ against the same score under $\mathbf{E}_{\text{emb}}$ for every representation, one panel per source model. The bottom row shows the empirical distribution of off-diagonal pairwise beam cosine similarities, one panel per source model, with the threshold $\tau{=}0.9$ overlaid.}
  \label{fig:dcs-emb-vs-parse}
\end{figure}


\begin{table}[!htbp]
  \caption{Discriminator-based DCS under $\mathbf{E}_{\mathrm{emb}}$ ($\tau{=}0.9$, $\uparrow$) across source LLMs, using the formulation documented in \cref{appendix:sub:dcs_disc}. Values near $0.5$ across representations reflect discriminator saturation on within-question pairs.}
  \label{tab:dcs}
  \centering
  \footnotesize
  \setlength{\tabcolsep}{2.5pt}
  \renewcommand{\arraystretch}{1.1}
  \resizebox{\textwidth}{!}{%
  \begin{tabular}{@{}l *{2}{c} !{\vrule} *{2}{c} *{5}{c} *{5}{c} *{5}{c} !{\vrule} *{2}{c}@{}}
    \toprule
    & \multicolumn{2}{c}{\textit{Output Emb.}}
      & \multicolumn{2}{c}{\textit{Last Input Tok.}}
      & \multicolumn{5}{c}{\textit{Soft Thinking (no noise)}}
      & \multicolumn{5}{c}{\textit{Soft Thinking (Gumbel)}}
      & \multicolumn{5}{c}{\textit{Latent Thinking}}
      & \multicolumn{2}{c}{\textit{Baselines}} \\
    \cmidrule(lr){2-3}\cmidrule(lr){4-5}\cmidrule(lr){6-10}\cmidrule(lr){11-15}\cmidrule(lr){16-20}\cmidrule(lr){21-22}
    \textbf{LLM}
      & Exc & Pool
      & All & Final
      & 1 & 16 & 32 & 64 & 128
      & 1 & 16 & 32 & 64 & 128
      & 1 & 16 & 32 & 64 & 128
      & IE & RV \\
    \midrule
    Llama-3.1 8B    & \ciAdvNS{0.393}{0.012} & \ciAdvNS{0.436}{0.010} & \ciAdvInf{0.505}{0.001} & \ciAdvNS{0.499}{0.002} & \ciAdvInf{0.502}{0.001} & \ciAdvInf{0.504}{0.001} & \ciAdvInfB{0.515}{0.002} & \ciAdvInf{0.506}{0.002} & \ciAdvInf{0.502}{0.001} & \ciAdvNS{0.502}{0.001} & \ciAdvInf{0.503}{0.001} & \ciAdvInf{0.504}{0.001} & \ciAdvNS{0.500}{0.001} & \ciAdvNS{0.499}{0.000} & \ciAdvInf{0.505}{0.002} & \ciAdvNS{0.499}{0.001} & \ciAdvNS{0.495}{0.001} & \ciAdvNS{0.498}{0.001} & \ciAdvNS{0.497}{0.001} & \ciAdvNS{0.498}{0.003} & \ciAdvNS{0.499}{0.001} \\
    Llama-3.3 70B   & \ciAdvNS{0.514}{0.012} & \ciAdvInf{0.506}{0.001} & \ciAdvNS{0.500}{0.001} & \ciAdvNS{0.498}{0.000} & \ciAdvInf{0.509}{0.001} & \ciAdvInf{0.502}{0.001} & \ciAdvNS{0.499}{0.000} & \ciAdvNS{0.497}{0.001} & \ciAdvNS{0.498}{0.000} & \ciAdvInf{0.506}{0.001} & \ciAdvInf{0.503}{0.000} & \ciAdvNS{0.495}{0.001} & \ciAdvNS{0.498}{0.001} & \ciAdvNS{0.497}{0.000} & \ciAdvNS{0.498}{0.000} & \ciAdvNS{0.498}{0.000} & \ciAdvNS{0.498}{0.001} & \ciAdvNS{0.498}{0.000} & \ciAdvNS{0.498}{0.000} & \ciAdvNS{0.499}{0.001} & \ciSE{0.498}{0.001} \\
    DS-R1-Qwen 32B & \ciAdvNS{0.399}{0.010} & \ciAdvNS{0.404}{0.010} & \ciAdvInfB{0.502}{0.000} & \ciAdvNS{0.496}{0.001} & \ciAdvNS{0.498}{0.001} & \ciAdvNS{0.492}{0.003} & \ciAdvNS{0.482}{0.004} & \ciAdvNS{0.494}{0.003} & \ciAdvNS{0.496}{0.002} & \ciAdvNS{0.499}{0.001} & \ciAdvNS{0.498}{0.001} & \ciAdvNS{0.493}{0.001} & \ciAdvNS{0.495}{0.001} & \ciAdvNS{0.495}{0.000} & \ciAdvNS{0.496}{0.000} & \ciAdvNS{0.497}{0.000} & \ciAdvNS{0.491}{0.001} & \ciAdvNS{0.496}{0.000} & \ciAdvNS{0.497}{0.000} & \ciAdvNS{0.479}{0.005} & \ciSE{0.497}{0.001} \\
    \bottomrule
  \end{tabular}%
  }
\end{table}

\cref{tab:dcs-tau-sweep} sweeps the cosine-similarity threshold $\tau$ on Llama-3.1-8B-Instruct.
All rows share the same test split, trained discriminator, and beam embeddings.
Only the binarisation $\mathbf{E}_{\mathrm{emb}}^{ij} = \mathbb{1}[\cos(y_i, y_j) > \tau]$ varies.
The ranking is stable across the full range and the chance-floor cluster does not reshuffle at any threshold, confirming that the saturation is intrinsic to the discriminator-based scoring and not an artefact of the threshold choice.


\begin{table}[!htbp]
  \caption{Discriminator-based DCS threshold sensitivity under $\mathbf{E}_{\mathrm{emb}}$ ($\uparrow$) on Llama-3.1-8B-Instruct. The column layout follows \cref{tab:dcs} and the $\tau{=}0.90$ row reproduces it cell-for-cell.}
  \label{tab:dcs-tau-sweep}
  \centering
  \footnotesize
  \setlength{\tabcolsep}{2.5pt}
  \renewcommand{\arraystretch}{1.1}
  \resizebox{\textwidth}{!}{%
  \begin{tabular}{@{}l *{2}{c} !{\vrule} *{2}{c} *{5}{c} *{5}{c} *{5}{c} !{\vrule} *{2}{c}@{}}
    \toprule
    & \multicolumn{2}{c}{\textit{Output Emb.}}
      & \multicolumn{2}{c}{\textit{Last Input Tok.}}
      & \multicolumn{5}{c}{\textit{Soft Thinking (no noise)}}
      & \multicolumn{5}{c}{\textit{Soft Thinking (Gumbel)}}
      & \multicolumn{5}{c}{\textit{Latent Thinking}}
      & \multicolumn{2}{c}{\textit{Baselines}} \\
    \cmidrule(lr){2-3}\cmidrule(lr){4-5}\cmidrule(lr){6-10}\cmidrule(lr){11-15}\cmidrule(lr){16-20}\cmidrule(lr){21-22}
    $\boldsymbol{\tau}$
      & Exc & Pool
      & All & Final
      & 1 & 16 & 32 & 64 & 128
      & 1 & 16 & 32 & 64 & 128
      & 1 & 16 & 32 & 64 & 128
      & IE & RV \\
    \midrule
    $0.60$ & \ciSE{0.385}{0.012} & \ciSE{0.438}{0.010} & \ciSE{0.509}{0.001} & \ciSE{0.496}{0.002} & \ciSE{0.510}{0.001} & \ciSE{0.506}{0.001} & \ciSE{0.516}{0.003} & \ciSE{0.507}{0.002} & \ciSE{0.501}{0.001} & \ciSE{0.504}{0.001} & \ciSE{0.502}{0.001} & \ciSE{0.501}{0.001} & \ciSE{0.505}{0.001} & \ciSE{0.498}{0.000} & \ciSE{0.503}{0.002} & \ciSE{0.500}{0.001} & \ciSE{0.496}{0.002} & \ciSE{0.499}{0.001} & \ciSE{0.496}{0.001} & \ciSE{0.499}{0.003} & \ciSE{0.496}{0.001} \\
    $0.70$ & \ciSE{0.387}{0.012} & \ciSE{0.437}{0.010} & \ciSE{0.509}{0.001} & \ciSE{0.497}{0.002} & \ciSE{0.509}{0.001} & \ciSE{0.506}{0.001} & \ciSE{0.516}{0.003} & \ciSE{0.507}{0.002} & \ciSE{0.501}{0.001} & \ciSE{0.504}{0.001} & \ciSE{0.503}{0.001} & \ciSE{0.501}{0.001} & \ciSE{0.505}{0.001} & \ciSE{0.498}{0.000} & \ciSE{0.504}{0.002} & \ciSE{0.500}{0.001} & \ciSE{0.496}{0.002} & \ciSE{0.499}{0.001} & \ciSE{0.496}{0.001} & \ciSE{0.499}{0.003} & \ciSE{0.496}{0.001} \\
    $0.80$ & \ciSE{0.387}{0.012} & \ciSE{0.436}{0.010} & \ciSE{0.506}{0.001} & \ciSE{0.497}{0.002} & \ciSE{0.506}{0.001} & \ciSE{0.506}{0.001} & \ciSE{0.516}{0.003} & \ciSE{0.506}{0.002} & \ciSE{0.501}{0.001} & \ciSE{0.503}{0.001} & \ciSE{0.503}{0.001} & \ciSE{0.502}{0.001} & \ciSE{0.503}{0.001} & \ciSE{0.498}{0.000} & \ciSE{0.503}{0.002} & \ciSE{0.500}{0.001} & \ciSE{0.495}{0.001} & \ciSE{0.499}{0.001} & \ciSE{0.496}{0.001} & \ciSE{0.498}{0.003} & \ciSE{0.497}{0.001} \\
    $0.85$ & \ciSE{0.388}{0.012} & \ciSE{0.435}{0.010} & \ciSE{0.505}{0.001} & \ciSE{0.497}{0.002} & \ciSE{0.504}{0.001} & \ciSE{0.504}{0.001} & \ciSE{0.516}{0.003} & \ciSE{0.506}{0.002} & \ciSE{0.502}{0.001} & \ciSE{0.502}{0.001} & \ciSE{0.503}{0.001} & \ciSE{0.503}{0.001} & \ciSE{0.502}{0.001} & \ciSE{0.498}{0.000} & \ciSE{0.503}{0.002} & \ciSE{0.499}{0.001} & \ciSE{0.494}{0.001} & \ciSE{0.499}{0.001} & \ciSE{0.496}{0.001} & \ciSE{0.497}{0.003} & \ciSE{0.498}{0.001} \\
    $0.90$ & \ciSE{0.393}{0.012} & \ciSE{0.436}{0.010} & \ciSE{0.505}{0.001} & \ciSE{0.499}{0.002} & \ciSE{0.502}{0.001} & \ciSE{0.504}{0.001} & \ciSE{0.515}{0.002} & \ciSE{0.506}{0.002} & \ciSE{0.502}{0.001} & \ciSE{0.502}{0.001} & \ciSE{0.503}{0.001} & \ciSE{0.504}{0.001} & \ciSE{0.500}{0.001} & \ciSE{0.499}{0.000} & \ciSE{0.505}{0.002} & \ciSE{0.499}{0.001} & \ciSE{0.495}{0.001} & \ciSE{0.498}{0.001} & \ciSE{0.497}{0.001} & \ciSE{0.498}{0.003} & \ciSE{0.499}{0.001} \\
    $0.95$ & \ciSE{0.404}{0.011} & \ciSE{0.437}{0.009} & \ciSE{0.506}{0.001} & \ciSE{0.501}{0.002} & \ciSE{0.501}{0.001} & \ciSE{0.503}{0.001} & \ciSE{0.514}{0.002} & \ciSE{0.505}{0.002} & \ciSE{0.503}{0.001} & \ciSE{0.503}{0.001} & \ciSE{0.503}{0.001} & \ciSE{0.505}{0.001} & \ciSE{0.499}{0.001} & \ciSE{0.499}{0.000} & \ciSE{0.506}{0.002} & \ciSE{0.499}{0.001} & \ciSE{0.496}{0.001} & \ciSE{0.498}{0.001} & \ciSE{0.497}{0.001} & \ciSE{0.497}{0.002} & \ciSE{0.500}{0.001} \\
    \bottomrule
  \end{tabular}%
  }
\end{table}

\end{document}